\theoremstyle{plain}
\newtheorem{theorem}{Theorem}
\newtheorem{proposition}{Proposition}
\theoremstyle{definition}
\newtheorem{definition}{Definition}
\newtheorem{assumption}{Assumption}
\theoremstyle{remark}
\newtheorem{remark}{Remark}
\DeclareMathOperator*{\argmax}{arg\,max}
\DeclareMathOperator{\subjectto}{s.t.}
\title{A Federated Generalized Expectation-Maximization Algorithm for Mixture Models with an Unknown Number of Components}
\author{Michael Ibrahim, Nagi Gebraeel \& Weijun Xie\\
H. Milton Stewart School of Industrial and Systems Engineering\\
Georgia Institute of Technology\\
Atlanta, GA 30332, USA \\
\texttt{\{mibrahim41,ngebraeel3,wxie\}@gatech.edu}
}
\begin{document}

\maketitle

\begin{abstract}
We study the problem of federated clustering when the total number of clusters $K$ across clients is unknown, and the clients have heterogeneous but potentially overlapping cluster sets in their local data. To that end, we develop \texttt{FedGEM}: a federated generalized expectation-maximization algorithm for the training of mixture models with an unknown number of components. Our proposed algorithm relies on each of the clients performing EM steps locally, and constructing an uncertainty set around the maximizer associated with each local component. The central server utilizes the uncertainty sets to learn potential cluster overlaps between clients, and infer the global number of clusters via closed-form computations. We perform a thorough theoretical study of our algorithm, presenting probabilistic convergence guarantees under common assumptions. Subsequently, we study the specific setting of isotropic GMMs, providing tractable, low-complexity computations to be performed by each client during each iteration of the algorithm, as well as rigorously verifying assumptions required for algorithm convergence. We perform various numerical experiments, where we empirically demonstrate that our proposed method achieves comparable performance to centralized EM, and that it outperforms various existing federated clustering methods.
\end{abstract}

\addtocontents{toc}{\protect\setcounter{tocdepth}{0}}

\section{Introduction} \label{sec:intro}
Original equipment manufacturers (OEMs) of capital-intensive industrial systems, such as power generators and medical imaging systems, often enter into lucrative long-term service contracts (LTSCs) with their clients, guaranteeing adherence to stringent reliability standards. Failure to meet such guarantees can incur multi-million dollar penalties \citep{LTSAschimmoller2001long, LTSAthompson2003long}. To manage these risks, OEMs must be able to accurately detect and diagnose faults in a timely manner \citep{lei2020,dutta2023,yang2025}. However, OEMs face several critical challenges. First, OEMs \textbf{do not have prior knowledge of all the possible fault classes}. Second, OEMs cannot rely solely on labels provided by their clients due to the absence of a global labeling standard and differing maintenance practices, which result in inconsistent labels. Third, clients cannot readily share their raw data with the OEM due to the size and dimensionality of the data, and privacy concerns. Thus, centralized model training is infeasible.

Federated Learning (FL) \citep{mcmahan2017,konecny2016e} offers a promising solution. However, the majority of existing FL efforts \citep{li2020,wang2020b,karimireddy20a,arivazhagan2019,lee2023} assume that all clients share identical cluster sets and are primarily focused on supervised learning. Unfortunately, these assumptions are not compatible with our problem setting as they violate one or more of the critical challenges mentioned above. Recent works on unsupervised FL \citep{dennis2021,stallmann2022,garst2024,barcena2024,YFANTIS2025} relax the assumption of identical cluster sets across clients. However, they still assume that the server knows the total number of unique clusters in advance---again, an assumption that does not hold in our problem setting. While \cite{Zhang2025} relax this assumption, their algorithm has two critical limitations: (i) it requires clients to share arrays of the same size (i.e. \textit{dimensionality and cardinality}) as the raw data, and (ii) client data can be easily reconstructed at the central server via simple computations on the information shared by the clients, causing a violation of privacy.

This paper focuses on developing an \textbf{unsupervised federated learning methodology} for distributed clustering with an unknown number of clusters across privacy-constrained clients with high-dimensional data. Our methodology enables a central server to (i) infer the total number of distinct clusters (components) that emerge across all clients without requiring access to raw data or prior knowledge of the cluster count, and (ii) determine the cluster memberships of each client. 

\textbf{Contributions.} We introduce \texttt{FedGEM}: the first federated generalized expectation-maximization (GEM) algorithm that can be used for the training of mixture models \textbf{without prior knowledge of the global number of components}. Our algorithm allows clients with overlapping clusters to collaborate on the training of cluster centers, whereas cluster weights are set locally at each client. This allows for model personalization, where local cluster weights can adapt the global model to client-specific distributions. We summarize our main contributions next.
\begin{enumerate}[wide=0pt, leftmargin=0pt, labelindent=0pt, itemindent=0pt]
    \item We develop the first federated GEM (\texttt{FedGEM}) algorithm for the training of mixture models \textbf{without prior knowledge of the total number of components}. Our algorithm relies on uncertainty sets obtained by each client for each local component by solving an optimization problem. Intersections between the uncertainty sets enable the central server to detect cluster overlaps between clients via \textbf{closed-form computations}, allowing for collaborative model training.

    \item We rigorously study the convergence properties of our algorithm and prove that iterates converge to a neighborhood of the ground truth model parameters with a certain probability under common assumptions. This allows our algorithm to correctly estimate the true total number of unique clusters.

    \item We examine various theoretical aspects of our proposed algorithm in the context of multi-component isotropic Gaussian Mixture Models (GMMs). To that end:
    \begin{enumerate}
        \item We derive a low-complexity, tractable, and bi-convex reformulation of the optimization problem that is solved by the client to obtain the local uncertainty sets.
        
        \item We prove the first-order stability (FOS) condition for multi-component isotropic GMMs, which allows us to derive the contraction region and prove convergence of our proposed algorithm.
    \end{enumerate}

    \item We perform a thorough empirical evaluation on popular and synthetic datasets, showing that our algorithm outperforms state-of-the-art ones while scaling well with problem size, at times even outperforming methods with prior knowledge of the cluster count. We also highlight our algorithm's strong performance in various problem settings, including ones that violate modeling assumptions.
\end{enumerate}

\section{Related Works} \label{sec:lit}

\textbf{Federated Learning.} The canonical FL algorithm, \texttt{FedAvg} \citep{mcmahan2017}, is primarily designed for \textit{supervised} deep learning. It aggregates model gradients across clients to train a single global model. However, it can perform poorly under non-IID client data, often converging to suboptimal solutions. Numerous methods address this issue, including \texttt{FedProx} \citep{li2020}, \texttt{FedNova} \citep{wang2020b}, SCAFFOLD \citep{karimireddy20a}, \texttt{FedPer} \citep{arivazhagan2019}, and \texttt{FedL2P} \citep{lee2023}. However, these efforts overwhelmingly focus on supervised settings and assume that all clients have \textit{identical cluster sets} in their training data.

Several works have attempted to relax the common cluster set assumption. For example, \texttt{FedEM} \citep{marfoq2021} trains a global mixture model with localized component weights to support personalization. A different version of \texttt{FedEM} is introduced by \cite{di2021}, focusing on reducing client heterogeneity. Additionally, \texttt{FedGMM} \citep{wu2023} tackles covariate shift using Gaussian mixtures. However, these methods assume \textbf{prior knowledge of the global number of components}, making them unsuitable for real-world problems with unknown cluster counts.

\textbf{Federated Clustering.} Recent efforts have explored \textit{unsupervised} federated clustering, allowing clients to have heterogeneous cluster sets. Examples of such efforts include k-FED \citep{dennis2021}, FFCM \citep{stallmann2022}, and \texttt{FedKmeans} \citep{garst2024}, among others \citep{barcena2024,YFANTIS2025}. However, these works still require \textbf{prior knowledge of the global number of clusters}, limiting their applicability in many real-world problems.

To the best of our knowledge, only AFCL \citep{Zhang2025} attempts federated clustering without requiring prior knowledge of the global cluster number. However, this work involves clients sharing arrays of the same size as the local data. It also suffers from significant privacy vulnerabilities that allow data reconstruction at the server via simple scalar multiplication and subtraction operations.

\textbf{Centralized Clustering with an Unknown Cluster Number.} A canonical example of such models is the Dirichlet Process Gaussian Mixture Model (DP-GMM) \citep{antoniak1974}, which extends GMMs by placing a nonparametric Dirichlet Process prior over the mixture components. Other approaches include the density-based DBSCAN \citep{ester1996}, the nonparametric DPM sampler \citep{burges2013}, and the neural network-based DeepDPM \citep{ronen2022}. However, all of these methods assume centralized access to the full training dataset.
\section{Problem Setting} \label{sec:prob_set}
We consider a federated clustering problem with $G$ clients and an \textbf{unknown} number of $K$ total clusters (we use ``cluster" and ``component" interchangeably). Each client $g$ has access to $N_g$ local data samples $\{\widehat{\boldsymbol{x}}_{n_g} \}_{n_g=1}^{N_g}$ generated from a local mixture model $\mathcal{M}_g(\boldsymbol{x}) = \sum_{k_g = 1}^{K_g} \pi_{k_g} p_{k_g}(\boldsymbol{x}|\boldsymbol{\theta}_{k_g}^{\ast})$, where $K_g$ is the local number of clusters, and $p_{k_g}(\boldsymbol{x}|\boldsymbol{\theta}_{k_g}^{\ast})$ are the independent component distributions parameterized by ground truth parameters $\boldsymbol{\theta}_{k_g}^{\ast}$ and weighted by fixed $\pi_{k_g}$ for all $k_g \in [K_g]$. We denote the vectorized concatenation of all ground truth parameters at client $g$ by $\boldsymbol{\theta}_g^{\ast}$.  We assume that $K_g$ is known for all clients $g \in [G]$, whereas the global $K$ is \textbf{unknown}. We also assume that clients may have some overlapping clusters, but no client has all the clusters locally, i.e., $2 \leq K_g < K, ~ \forall g \in [G]$.

We denote the minimum and maximum distances between any two unique ground truth cluster parameters by $R_{\text{min}}$ and $R_{\text{max}}$, respectively. That is $R_{\text{min}} = \min_{i,j \in [K],i\neq j} ||\boldsymbol{\theta}_i^{\ast} - \boldsymbol{\theta}_j^{\ast}||_2$ and $R_{\text{max}} = \max_{i,j \in [K],i\neq j} ||\boldsymbol{\theta}_i^{\ast} - \boldsymbol{\theta}_j^{\ast}||_2$. These quantities are used to study the convergence behavior of our algorithm and do not need to be known in advance to use our algorithm. We make the following crucial assumption to support algorithmic convergence analysis (in Section \ref{sec:alg}).
\begin{assumption} [Ground Truth Parameters] \label{assump:gt}
    Each global cluster $k \in [K]$ is parameterized by a fixed ground truth $\boldsymbol{\theta}_k^{\ast}$ that is consistent across all clients where the cluster is present. However, the weight assigned to the cluster may vary locally across clients.
\end{assumption}
\begin{remark}
    Assumption \ref{assump:gt} motivates our algorithm design, where clients with overlapping clusters can collaborate on learning the shared cluster parameters while retaining personalized cluster weights. This respects the non-IID nature of the federated data while enabling collaborative training.
\end{remark}

At client $g$, we denote the local population expected complete-data log-likelihood by 
\begin{equation*}
    Q_g(\boldsymbol{\theta}_g|\boldsymbol{\theta}_g^{\prime}) \coloneqq \mathbb{E}_{\boldsymbol{x} \sim \mathcal{M}(\boldsymbol{x})} \left [ \sum_{k_g=1}^{K_g} \gamma_{k_g} (\boldsymbol{x},\boldsymbol{\theta}_g^{\prime}) \log (\pi_{k_g}p_{k_g}(\boldsymbol{x}|\boldsymbol{\theta}_{k_g})) \right ],
\end{equation*}
where $\gamma_{k_g} (\boldsymbol{x},\boldsymbol{\theta}_g^{\prime})$ is the posterior responsibility function of the $k_g^{th}$ component, computed using current parameters $\boldsymbol{\theta}_g^{\prime}$. Similarly, we denote the local finite-sample expected complete-data log-likelihood function by
\begin{equation*}
    \widehat{Q}_g(\boldsymbol{\theta}_g|\boldsymbol{\theta}_g^{\prime}) \coloneqq \frac{1}{N_g}\sum_{n_g=1}^{N_g} \sum_{k_g=1}^{K_g} \gamma_{k_g} (\widehat{\boldsymbol{x}}_{n_g},\boldsymbol{\theta}_g^{\prime}) \log (\pi_{k_g}p_{k_g}(\widehat{\boldsymbol{x}}_{n_g}|\boldsymbol{\theta}_{k_g})).
\end{equation*}
\section{Overview of \texttt{FedGEM} Algorithm} \label{sec:alg}
Our proposed \texttt{FedGEM} algorithm consists of two stages: (i) an iterative collaborative training stage, and (ii) a single-step final aggregation stage. (\textbf{Pseudo-code in Algorithm \ref{alg:FedGEM} in Appendix \ref{app:pseudo_fedgem}}).

The collaborative training stage can be summarized as follows. \textbf{Client:} (i) performs (potentially multiple) EM steps locally, (ii) solves an optimization problem to obtain the radius of an uncertainty set for each component centered at its corresponding maximizer of $\widehat{Q}_g(\boldsymbol{\theta}_g|\boldsymbol{\theta}_g^{\prime})$, and (iii) broadcasts the maximizer and uncertainty set radius pair for each component to the server. \textbf{Server:} performs aggregation using overlaps between uncertainty sets and re-broadcasts updates to clients.

In the final aggregation step, the server merges cluster estimates from different clients if they are within a specific radius of each other. This enables the server to estimate the total number of unique global clusters and determine the cluster membership of each client. Before discussing our algorithm, we make the following assumption, which can be verified for common models such as GMM.

\begin{assumption} [Strong Concavity] \label{assump:strong_conc}
    Each of the $K_g$ terms in the population $Q_g(\boldsymbol{\theta}_g|\boldsymbol{\theta}_g^{\prime})$ or finite-sample $\widehat{Q}_g(\boldsymbol{\theta}_g|\boldsymbol{\theta}_g^{\prime})$ at client $g$ are strongly concave in $\boldsymbol{\theta}_g$ for all $g \in [G]$.
\end{assumption}

\subsection{Client Computations}
Each client $g$ performs two vital tasks during each iteration $t$ of our algorithm: (i) it performs (potentially multiple) EM steps given current model parameters $\boldsymbol{\theta}_g^{(t-1)}$, and (ii) it solves for the radius $\varepsilon_{k_g}^{(t)}$ of the uncertainty set $\mathcal{U}_{k_g}^{(t)}$ associated with the maximizer of each local component $k_g$. Firstly, we examine the EM steps, which are displayed next.
\begin{equation}\label{eq:estep}
\text{\textbf{E-step:}} \quad \gamma_{k_g} (\widehat{\boldsymbol{x}}_{n_g},\boldsymbol{\theta}_g^{(t-1)}) \leftarrow \frac{\pi_{k_g} p_{k_g}(\widehat{\boldsymbol{x}}_{n_g}|\boldsymbol{\theta}_{k_g}^{(t-1)})}{\sum_{j_g=1}^{K_g} \pi_{j_g} p_{k_g}(\widehat{\boldsymbol{x}}_{n_g}|\boldsymbol{\theta}_{j_g}^{(t-1)})} \quad \forall k_g \in [K_g], \ \forall n_g \in [N_g]
\end{equation}
\begin{equation}\label{eq:mstep}
\text{\textbf{M-step:}} \quad \widehat{M}_{k_g}(\boldsymbol{\theta}_g^{(t-1)}) \leftarrow \argmax_{\boldsymbol{\theta}_{k_g} \in \mathbb{R}^d} \sum_{n_g=1}^{N_g} \gamma_{k_g} (\widehat{\boldsymbol{x}}_{n_g},\boldsymbol{\theta}_g^{(t-1)}) \log (\pi_{k_g}p_{k_g}(\widehat{\boldsymbol{x}}_{n_g}|\boldsymbol{\theta}_{k_g})) \quad \forall k_g \in [K_g]
\end{equation}

Next, client $g$ solves for an uncertainty set $\mathcal{U}_{k_g}^{(t)}$ to capture potential perturbations in $\widehat{M}_{k_g}(\boldsymbol{\theta}_g^{(t-1)})$ associated with each component $k_g \in [K_g]$. This uncertainty set is defined as a Euclidean ball $\mathbb{B}_2(\widehat{M}_{k_g}(\boldsymbol{\theta}_g^{(t-1)});\sqrt{\varepsilon_{k_g}^{(t)}})$ centered at $\widehat{M}_{k_g}(\boldsymbol{\theta}_g^{(t-1)})$ and whose radius is $\sqrt{\varepsilon_{k_g}^{(t)}}$. We construct the uncertainty set such that any iterate $\widehat{m}_{k_g}(\boldsymbol{\theta}_g^{(t-1)}) \in \mathbb{B}_2(\widehat{M}_{k_g}(\boldsymbol{\theta}_g^{(t-1)});\sqrt{\varepsilon_{k_g}^{(t)}})$ does not decrease the finite-sample expected complete-data log-likelihood function from the previous iteration. That is
\begin{multline*}
    \sum_{n_g=1}^{N_g} \gamma_{k_g} (\widehat{\boldsymbol{x}}_{n_g},\boldsymbol{\theta}_g^{(t-1)}) \log (\pi_{k_g}p_{k_g}(\widehat{\boldsymbol{x}}_{n_g}|\widehat{m}_{k_g}(\boldsymbol{\theta}_g^{(t-1)}))) \geq\\ \sum_{n_g=1}^{N_g} \gamma_{k_g} (\widehat{\boldsymbol{x}}_{n_g},\boldsymbol{\theta}_g^{(t-1)}) \log (\pi_{k_g}p_{k_g}(\widehat{\boldsymbol{x}}_{n_g}|\boldsymbol{\theta}_{k_g}^{(t-1)})).
\end{multline*}
This renders our proposed algorithm an instance of a GEM, allowing it to exhibit similar convergence behavior locally at client $g$ to the EM algorithm as we show later. Client $g$ may obtain the radius $\sqrt{\varepsilon_{k_g}}^{(t)}$ of the component's $k_g$ uncertainty set by solving the following optimization problem, which admits a unique solution as we argue in Proposition \ref{prop:sol_ex}.
\begin{align}\label{eq:rad_prob}
    & \nonumber J_{k_g}(\boldsymbol{\theta}_g^{(t-1)}) \coloneqq \\
    & \begin{aligned}
        &\max_{\varepsilon_{k_g}} && \varepsilon_{k_g}\\
        & \subjectto && \sum_{n_g=1}^{N_g} \gamma_{k_g} (\widehat{\boldsymbol{x}}_{n_g},\boldsymbol{\theta}_g^{(t-1)}) \log (\pi_{k_g}p_{k_g}(\widehat{\boldsymbol{x}}_{n_g}|\widehat{m}_{k_g}(\boldsymbol{\theta}_g^{(t-1)}))) \geq \\
        &&& \ \sum_{n_g=1}^{N_g} \gamma_{k_g} (\widehat{\boldsymbol{x}}_{n_g},\boldsymbol{\theta}_g^{(t-1)}) \log (\pi_{k_g}p_{k_g}(\widehat{\boldsymbol{x}}_{n_g}|\boldsymbol{\theta}_{k_g}^{(t-1)})) \ \forall \widehat{m}_{k_g}(\boldsymbol{\theta}_g^{(t-1)}) \in \mathbb{B}_2(\widehat{M}_{k_g}(\boldsymbol{\theta}_g^{(t-1)});\sqrt{\varepsilon_{k_g}})
    \end{aligned}
\end{align}

\begin{proposition} [Local Uncertainty Set Radius Problem] \label{prop:sol_ex}
    Suppose Assumption \ref{assump:strong_conc} holds. Then, there must exist a unique solution $\varepsilon_{k_g} \geq 0$ to the optimization problem $J_{k_g}(\boldsymbol{\theta}_g^{(t-1)})$ for all components $k_g \in [K_g]$ and all clients $g \in [G]$. (Proof in Appendix \ref{app:prop1_proof}).
\end{proposition}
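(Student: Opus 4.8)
The plan is to recast the feasibility constraint of $J_{k_g}(\boldsymbol{\theta}_g^{(t-1)})$ as a geometric set-containment and then identify the optimal value with a distance to a complement. Fix a client $g$, a component $k_g$, and an iteration $t$; write
$F(\boldsymbol{\theta}) \coloneqq \sum_{n_g=1}^{N_g}\gamma_{k_g}(\widehat{\boldsymbol{x}}_{n_g},\boldsymbol{\theta}_g^{(t-1)})\log\bigl(\pi_{k_g}p_{k_g}(\widehat{\boldsymbol{x}}_{n_g}\mid\boldsymbol{\theta})\bigr)$
for the $k_g$-th objective appearing in the M-step \eqref{eq:mstep}, and let $\widehat{M}\coloneqq\widehat{M}_{k_g}(\boldsymbol{\theta}_g^{(t-1)})$ be its maximizer, which is well defined and unique by Assumption \ref{assump:strong_conc}. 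With this notation the constraint of \eqref{eq:rad_prob} says exactly that $\mathbb{B}_2(\widehat{M};\sqrt{\varepsilon_{k_g}})\subseteq\mathcal{S}$, where $\mathcal{S}\coloneqq\{\boldsymbol{\theta}\in\mathbb{R}^d : F(\boldsymbol{\theta})\ge F(\boldsymbol{\theta}_{k_g}^{(t-1)})\}$ is the superlevel set of $F$ at level $c\coloneqq F(\boldsymbol{\theta}_{k_g}^{(t-1)})$. Since a radius must be nonnegative, $J_{k_g}(\boldsymbol{\theta}_g^{(t-1)})$ is then precisely the problem of finding the largest $\varepsilon_{k_g}\ge 0$ for which the closed ball of radius $\sqrt{\varepsilon_{k_g}}$ about $\widehat{M}$ is contained in $\mathcal{S}$.

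I would next record the structural properties of $\mathcal{S}$ forced by Assumption \ref{assump:strong_conc}. Strong concavity makes $F$ a finite-valued (hence continuous) concave function on $\mathbb{R}^d$, so $\mathcal{S}$ is closed; concavity of $F$ makes $\mathcal{S}$ convex; and the strong-concavity inequality evaluated at the maximizer gives $F(\boldsymbol{\theta})\le F(\widehat{M})-\tfrac{\mu}{2}\|\boldsymbol{\theta}-\widehat{M}\|_2^2$, which drives $F\to-\infty$ at infinity and hence makes $\mathcal{S}$ bounded with $\mathcal{S}^{c}\neq\emptyset$. Finally $\widehat{M}\in\mathcal{S}$, because $F(\widehat{M})=\max F\ge c$. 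So $\mathcal{S}$ is a nonempty compact convex set with nonempty complement, containing the center $\widehat{M}$.

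The argument then closes with the elementary fact that the largest closed ball about a point of a closed set that remains inside the set has radius equal to that point's distance to the complement of the set. Set $\rho\coloneqq\operatorname{dist}(\widehat{M},\mathcal{S}^{c})\in[0,\infty)$. For $r>\rho$ there is a point of $\mathcal{S}^{c}$ at distance less than $r$ from $\widehat{M}$, so $\mathbb{B}_2(\widehat{M};r)\not\subseteq\mathcal{S}$; for $0\le r\le\rho$, $\mathbb{B}_2(\widehat{M};r)$ meets the open set $\mathcal{S}^{c}$ in no point (otherwise one could nudge such a point slightly toward $\widehat{M}$, staying in $\mathcal{S}^{c}$, to contradict the definition of $\rho$), so $\mathbb{B}_2(\widehat{M};r)\subseteq\mathcal{S}$. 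Hence the set of feasible radii is exactly $[0,\rho]$, the set of feasible $\varepsilon_{k_g}$ is exactly $[0,\rho^{2}]$, and $J_{k_g}(\boldsymbol{\theta}_g^{(t-1)})$ therefore admits the unique optimal solution $\varepsilon_{k_g}=\rho^{2}=\operatorname{dist}\bigl(\widehat{M}_{k_g}(\boldsymbol{\theta}_g^{(t-1)}),\mathcal{S}^{c}\bigr)^{2}\ge 0$. In the degenerate case $\widehat{M}\in\partial\mathcal{S}$ --- equivalently, $\boldsymbol{\theta}_{k_g}^{(t-1)}$ is itself a maximizer of $F$, which by uniqueness means $\boldsymbol{\theta}_{k_g}^{(t-1)}=\widehat{M}$ --- one gets $\rho=0$ and the unique solution $\varepsilon_{k_g}=0$, as expected.

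The only nonroutine point is the reduction in the last paragraph: one must check both that radii exceeding $\rho$ are infeasible and that every radius up to and including $\rho$ is feasible (the latter relying on $\mathcal{S}^{c}$ being open to absorb the boundary of the candidate ball), so that the feasible set is a genuine closed interval whose supremum is attained, which is what yields both existence and uniqueness. An alternative to this convex-geometry step is to analyze the value function $\phi(\varepsilon)\coloneqq\min_{\|\boldsymbol{u}\|_2\le\sqrt{\varepsilon}}F(\widehat{M}+\boldsymbol{u})$: it is continuous by Berge's maximum theorem, strictly decreasing by strong concavity along rays issuing from $\widehat{M}$, and tends to $-\infty$ by the quadratic upper bound, so the intermediate value theorem furnishes a unique $\varepsilon_{k_g}\ge 0$ with $\phi(\varepsilon_{k_g})=c$; the distance-to-complement route is, however, shorter and fully self-contained.
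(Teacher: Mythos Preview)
Your proof is correct and in fact more complete than the paper's own argument. The paper's proof (Appendix \ref{app:prop1_proof}) simply observes that strong concavity makes $\widehat{M}_{k_g}(\boldsymbol{\theta}_g^{(t-1)})$ the unique maximizer of $\widehat{Q}_{k_g}(\cdot\mid\boldsymbol{\theta}_g^{(t-1)})$, notes that the gradient vanishes only there, and then asserts the conclusion --- that a unique $\varepsilon_{k_g}^{(t)}\ge 0$ satisfying the sandwich inequality must exist --- without spelling out why the feasible set is a closed interval whose supremum is attained. Your approach supplies exactly that missing step: by recasting the constraint as ball containment in the superlevel set $\mathcal{S}$ and invoking compactness of $\mathcal{S}$ together with openness of $\mathcal{S}^{c}$, you identify the optimal value explicitly as $\operatorname{dist}(\widehat{M},\mathcal{S}^{c})^{2}$, which simultaneously gives existence, uniqueness, and nonnegativity. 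The paper's route is shorter because it leaves the core analytical point implicit; yours is longer but self-contained, and the distance-to-complement formula you derive is a useful byproduct that the paper does not record.
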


After completing local computations, each client $g$ transmits a tuple $(\widehat{M}_{k_g}(\boldsymbol{\theta}_g^{(t-1)}),\varepsilon_{k_g}^{(t)})$ of the obtained local maximizer and uncertainty set radius for component $k_g \in [K_g]$ to the central server. This, however, only applies in the \textit{collaborative training stage}. During the \textit{final aggregation step}, each client transmits a tuple $(\widehat{M}_{k_g}(\boldsymbol{\theta}_g^{(t-1)}),\varepsilon_{k_g}^{\text{final}})$ to the central server, where $\varepsilon_{k_g}^{\text{final}}$ is the final aggregation radius for component $k_g$, and is treated as a user-defined hyperparameter.

\subsection{Server Computations}
In both \textit{collaborative training} and \textit{final aggregation} stages, the server uses the uncertainty sets $\mathcal{U}_{k_g}^{(t)} \ \forall k_g \in [K_g], \ \forall g \in [G]$ to identify cluster overlaps between clients. This allows the server to group clients' components into \textit{super-clusters} via pairwise comparisons and a series of closed-form computations. Specifically, the server begins by initializing an estimate $\widehat{K}^{(t)}=0$. It then checks if the uncertainty sets of components $k_g$ at client $g$ and $k_{g^{\prime}}$ at client $g^{\prime}$ overlap. That is, it checks if: $|| \widehat{M}_{k_g}(\boldsymbol{\theta}_g^{(t-1)}) - \widehat{M}_{k_{g^{\prime}}}(\boldsymbol{\theta}_{g^{\prime}}^{(t-1)})||_2 \leq \sqrt{\varepsilon_{k_g}^{(t)}} + \sqrt{\varepsilon_{k_{g^{\prime}}}^{(t)}}$. If this holds, then the server groups the two components $k_g$ and $k_{g^{\prime}}$ into a single super-cluster. Consequently, if one or both of the components already belong to a super-cluster, the server performs super-cluster merge and updates $\widehat{K}^{(t)}$. If there is no overlap, the components are assigned to different super-clusters, and $\widehat{K}^{(t)}$ is updated accordingly. This repeats for all $k_g \in [K_g]$ and $k_{g^{\prime}} \in [K_{g^{\prime}}]$ at all clients $g,g^{\prime} \in [G]$.

During the \textit{collaborative training stage}, the server relies on uncertainty set intersections to compute an updated parameter vector $\boldsymbol{\theta}_{k_g}^{(t)}$ for component $k_g$ at client $g$. This updated vector \textit{remains within its respective uncertainty set}, thereby facilitating convergence. This is achieved by initializing a set $\mathcal{T}_{k_g}^{(t)}$ of vectors containing only the estimate $\widehat{M}_{k_g}(\boldsymbol{\theta}_g^{(t-1)})$ for each component $k_g$ at client $g$. Subsequently, if any intersections are found between $\mathcal{U}_{k_g}^{(t)}$ and any other $\mathcal{U}_{k_{g^{\prime}}}^{(t)}$ for any $g^{\prime} \in [G]\backslash g$, then an optimal vector $\boldsymbol{\nu}^{\ast}$ is added to both the sets $\mathcal{T}_{k_g}^{(t)}$ and $\mathcal{T}_{k_{g^{\prime}}}^{(t)}$. This vector $\boldsymbol{\nu}^{\ast}$ can be written as:
\begin{equation*}
    \boldsymbol{\nu}^{\ast} = \widehat{M}_{k_g}(\boldsymbol{\theta}_g^{(t-1)}) + \texttt{clip} \left ( 0.5, 1 - \frac{\sqrt{\varepsilon_{k_{g^{\prime}}}^{(t)}}}{w},\frac{\sqrt{\varepsilon_{k_g}^{(t)}}}{w} \right ) \left ( \widehat{M}_{k_g}(\boldsymbol{\theta}_g^{(t-1)}) - \widehat{M}_{k_{g^{\prime}}}(\boldsymbol{\theta}_{g^{\prime}}^{(t-1)}) \right ),
\end{equation*}
where $w = || \widehat{M}_{k_g}(\boldsymbol{\theta}_g^{(t-1)}) - \widehat{M}_{k_{g^{\prime}}}(\boldsymbol{\theta}_{g^{\prime}}^{(t-1)}) ||_2$, and the $\texttt{clip}(x,a,b)$ function limits the input $x$ to the range $[a,b]$. After all comparisons are complete, the server obtains the updated parameters $\boldsymbol{\theta}_{k_g}^{(t)}$ for component $k_g$ at client $g$ by aggregating all the vectors in set $\mathcal{T}_{k_g}^{(t)}$.

In contrast, in the \textit{final aggregation step}, the server aggregates all the estimates $\widehat{M}_{k_g}(\boldsymbol{\theta}_g^{(t-1)})$ of components $k_g$ that belong in the same super-cluster. This ensures that clients eventually reach consensus on the parameters of shared clusters. We present the server computations pseudo-code in Appendix \ref{app:server}. We also present a method for potentially improving the efficiency of the server computations and an analysis of communication costs incurred by our algorithm in Appendix \ref{app:eff}.

\subsection{Convergence Analysis}
We provide a convergence analysis for our algorithm in the finite-sample setting. This is built upon a population convergence analysis, which we provide in Appendix \ref{app:pop_conv}. The idea in our convergence proofs is to show that an algorithm for component $k_g$ at client $g$ whose iterates are $\widehat{m}_{k_g}(\boldsymbol{\theta}_g^{(t-1)}) \in \mathbb{B}_2(\widehat{M}_{k_g}(\boldsymbol{\theta}_g^{(t-1)});\sqrt{\varepsilon_{k_g}^{(t)}})$ converges at a desirable rate to some neighborhood of the true parameters $\boldsymbol{\theta}_{k_g}^{\ast}$. This ensures that estimates of the same component from different clients can eventually be aggregated due to their proximity at convergence. Our convergence analysis relies on the FOS property introduced by \cite{balakrishnan2014}, which is defined next. Subsequently, we provide key technical assumptions, followed by our convergence results.

\begin{definition}[First-Order Stability]\label{def:fos}
    The expected complete-data log-likelihood function $Q(\cdot|\boldsymbol{\theta})$ is said to obey first-order stability with parameter $\beta$ if for any $\boldsymbol{\theta}_k \in \mathbb{B}_2(\boldsymbol{\theta}_k^{\ast};a) \ \forall k \in [K]$ we have that
    \begin{equation}\label{eq:FOS}
        ||\nabla Q(M(\boldsymbol{\theta})|\boldsymbol{\theta}) - \nabla Q(M(\boldsymbol{\theta})|\boldsymbol{\theta}^{\ast})||_2 \leq \beta ||\boldsymbol{\theta} - \boldsymbol{\theta}^{\ast}||_2,
    \end{equation}
    where $\beta \in \mathbb{R}$ is a constant, and $\boldsymbol{\theta}$ is the vectorized concatenation of all $\boldsymbol{\theta}_k$ for all $k \in [K]$.
\end{definition}

\begin{assumption} [First-Order Stability] \label{assump:fos}
    The expected complete-data log-likelihood $Q_g(\boldsymbol{\theta}_g|\boldsymbol{\theta}_g^{\prime})$ at client $g$ obeys the FOS condition with parameter $\beta_g$, such that $0 \leq \beta_g < \lambda_g$ for all $g \in [G]$, where $\lambda_g$ is the strong concavity parameter of $Q_g(\boldsymbol{\theta}_g|\boldsymbol{\theta}_g^{\prime})$.
\end{assumption}

\begin{assumption} [Continuity]\label{assump:smooth}
    The local population and finite-sample $Q_g(\boldsymbol{\theta}_g|\boldsymbol{\theta}_g^{\prime})$ and $\widehat{Q}_g(\boldsymbol{\theta}_g|\boldsymbol{\theta}_g^{\prime})$, respectively, at client $g$ are continuous in both of their arguments.
\end{assumption}

\begin{assumption} [Likelihood Boundedness]\label{assump:likelihood}
    The local population and finite-sample true log-likelihood functions $\mathcal{L}_g^{\ast}(\boldsymbol{\theta}_g)$ and $\widehat{\mathcal{L}}_g^{\ast}(\boldsymbol{\theta}_g)$, respectively, at client $g$ are bounded from above.
\end{assumption}

\begin{assumption}[Finite-Sample and Population M-Step Proximity] \label{assump:sample_bd}
Let $\mathbb{A} = \prod_{k_g=1}^{K_g} \mathbb{B}_2(\boldsymbol{\theta}_{k_g}^{\ast},a_g)$, and $\epsilon_g^{\text{unif}}(N_g,\delta_g) \leq (1-\frac{\beta_g}{\lambda_g})a_g$ be some constant. Then, with probability (w.p.) at least $(1-\delta_g)$,
    \begin{equation*}
        \sup_{\boldsymbol{\theta}_g^{\prime} \in \mathbb{A}}|| \widehat{M}_{k_g}(\boldsymbol{\theta}_g^{\prime}) - M_{k_g}(\boldsymbol{\theta}_g^{\prime}) ||_2 \leq \epsilon_g^{\text{unif}}(N_g,\delta_g),
    \end{equation*}
    where $M_{k_g}(\boldsymbol{\theta}_g^{\prime})$ is the M-step map associated with the population $Q_g(\boldsymbol{\theta}_g|\boldsymbol{\theta}_g^{\prime})$.
\end{assumption}

\begin{remark}
    Assumptions \ref{assump:fos} - \ref{assump:sample_bd} are standard assumptions that are commonly utilized in works focused on EM algorithms such as \citep{balakrishnan2014,yan2017,marfoq2021}, and are verifiable for isotropic GMMs as we show later.
\end{remark}

As shown by \cite{balakrishnan2014}, if Assumption \ref{assump:sample_bd} holds and the population M-step iterates converge as described in Appendix \ref{app:pop_conv}, then the finite-sample M-step iterates converge to a neighborhood of the true component parameters $\boldsymbol{\theta}_{k_g}^{\ast}$ w.p. at least $(1-\delta_g)$. We express this mathematically next, followed by Theorem \ref{thm:fin_samp_single_point} asserting convergence to a single point rather than oscillating.
\begin{equation}\label{eq:m_samp_conv}
    ||\widehat{M}_{k_g}(\boldsymbol{\theta}_g^{(t-1)}) - \boldsymbol{\theta}_{k_g}^{\ast}||_2 \leq \frac{\beta_g}{\lambda_g}||\boldsymbol{\theta}_{k_g}^{(t-1)} - \boldsymbol{\theta}_{k_g}^{\ast}||_2 + \frac{1}{1 - \frac{\beta_g}{\lambda_g}} \epsilon_g^{\text{unif}}(N_g,\delta_g) \ \ \ \ \ \forall \boldsymbol{\theta}_{k_g}^{(t-1)} \in \mathbb{B}_2(\boldsymbol{\theta}_{k_g}^{\ast};a_g),
\end{equation}
where $a_g$ is the radius of the contractive region associated with the population EM iterates.
\begin{theorem}[Single-Point EM Convergence] \label{thm:fin_samp_single_point}
    Suppose Assumptions \ref{assump:gt} through \ref{assump:sample_bd} hold, and that $\boldsymbol{\theta}_{k_g}^{(t-1)} \in \mathbb{B}_2(\boldsymbol{\theta}_{k_g}^{\ast},a_g)$. Then the finite sample EM iterates $\widehat{M}_{k_g}(\boldsymbol{\theta}_g^{(t-1)})$ converge to a single point within radius $\frac{1}{1 - \frac{\beta_g}{\lambda_g}} \epsilon_g^{\text{unif}}(N_g,\delta_g)$ from the ground truth parameters $\boldsymbol{\theta}_{k_g}^{\ast}$. (Proof in Appendix \ref{app:thm2_proof}).
\end{theorem}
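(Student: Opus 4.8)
The plan is to derive Theorem~\ref{thm:fin_samp_single_point} by pairing two ingredients. The generalized-EM (GEM) structure of the client updates supplies a monotone Lyapunov function---the local likelihood---which forces the parameter vector to converge to \emph{some} point and thereby rules out oscillation; the finite-sample contraction recorded in~\eqref{eq:m_samp_conv} then locates that point within the claimed distance of $\boldsymbol{\theta}_{k_g}^{\ast}$. (By itself~\eqref{eq:m_samp_conv} only traps $\widehat{M}_{k_g}(\boldsymbol{\theta}_g^{(t-1)})$ in a ball around $\boldsymbol{\theta}_{k_g}^{\ast}$ and leaves room for it to oscillate inside it.) I would run the whole argument jointly over the $K_g$ components at a fixed client $g$, which couple through the E-step, and then specialize.

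\textbf{Convergence of the parameter vector.} First I show each round is a GEM step at client $g$: the server aggregate $\boldsymbol{\theta}_{k_g}^{(t)}$ is a convex combination of the maximizer $\widehat{M}_{k_g}(\boldsymbol{\theta}_g^{(t-1)})$ and of the vectors $\boldsymbol{\nu}^{\ast}$, and the $\texttt{clip}$ in the definition of $\boldsymbol{\nu}^{\ast}$ places all of these inside $\mathcal{U}_{k_g}^{(t)}=\mathbb{B}_2(\widehat{M}_{k_g}(\boldsymbol{\theta}_g^{(t-1)});\sqrt{\varepsilon_{k_g}^{(t)}})$ (well defined by Proposition~\ref{prop:sol_ex}), so by the defining constraint of problem~\eqref{eq:rad_prob} no component term of $\widehat{Q}_g$ decreases from $\boldsymbol{\theta}_g^{(t-1)}$ to $\boldsymbol{\theta}_g^{(t)}$, i.e. $\widehat{Q}_g(\boldsymbol{\theta}_g^{(t)}\mid\boldsymbol{\theta}_g^{(t-1)})\ge\widehat{Q}_g(\boldsymbol{\theta}_g^{(t-1)}\mid\boldsymbol{\theta}_g^{(t-1)})$. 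The EM decomposition together with Gibbs' inequality then gives $\widehat{\mathcal{L}}_g^{\ast}(\boldsymbol{\theta}_g^{(t)})\ge\widehat{\mathcal{L}}_g^{\ast}(\boldsymbol{\theta}_g^{(t-1)})$; by Assumption~\ref{assump:likelihood} this is bounded above, hence converges and its increments tend to $0$. Using strong concavity of the $\widehat{Q}_g$ terms (Assumption~\ref{assump:strong_conc}), vanishing increments force the steps $\|\boldsymbol{\theta}_g^{(t)}-\boldsymbol{\theta}_g^{(t-1)}\|_2\to 0$, so the bounded sequence $\{\boldsymbol{\theta}_g^{(t)}\}$ has a nonempty, compact and connected set of limit points. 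By Assumption~\ref{assump:smooth} and strong concavity (uniqueness of the M-step maximizer) the maximum theorem makes $\widehat{M}_{k_g}$---hence the client-server update near the limit, where the overlap pattern is locally constant---continuous; combined with $\|\boldsymbol{\theta}_g^{(t)}-\boldsymbol{\theta}_g^{(t-1)}\|_2\to 0$, every limit point is a fixed point of that update, and the strict gap $\beta_g/\lambda_g<1$ of Assumption~\ref{assump:fos} (via the population analysis of Appendix~\ref{app:pop_conv}) makes this fixed point unique on the contractive region. A connected set with one element is that element, so $\boldsymbol{\theta}_g^{(t)}\to\bar{\boldsymbol{\theta}}_g$ and $\widehat{M}_{k_g}(\boldsymbol{\theta}_g^{(t-1)})\to\widehat{M}_{k_g}(\bar{\boldsymbol{\theta}}_g)$.

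\textbf{Locating the limit.} Two facts about $\bar{\boldsymbol{\theta}}_g$ then finish the proof. First, the $\boldsymbol{\nu}^{\ast}$ updates contract the disagreement between the estimates of any two clients sharing component $k$ (each round shrinks it by a factor $<1$), so at the fixed point this disagreement is $0$; consequently $\bar{\boldsymbol{\theta}}_{k_g}=\widehat{M}_{k_g}(\bar{\boldsymbol{\theta}}_g)$, i.e. $\bar{\boldsymbol{\theta}}_{k_g}$ is an ordinary EM fixed point and the limiting uncertainty radius vanishes. Second, starting from an iterate in $\mathbb{B}_2(\boldsymbol{\theta}_{k_g}^{\ast},a_g)$, the bound $\epsilon_g^{\mathrm{unif}}(N_g,\delta_g)\le(1-\beta_g/\lambda_g)a_g$ of Assumption~\ref{assump:sample_bd} together with~\eqref{eq:m_samp_conv} (and $\|\boldsymbol{\theta}_{k_g}^{(t)}-\widehat{M}_{k_g}(\boldsymbol{\theta}_g^{(t-1)})\|_2\le\sqrt{\varepsilon_{k_g}^{(t)}}$, which is small once the iterates are near $\boldsymbol{\theta}_{k_g}^{\ast}$) keeps all iterates---hence $\bar{\boldsymbol{\theta}}_g$---in that region, so~\eqref{eq:m_samp_conv} applies at $\bar{\boldsymbol{\theta}}_g$. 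Substituting $\widehat{M}_{k_g}(\bar{\boldsymbol{\theta}}_g)=\bar{\boldsymbol{\theta}}_{k_g}$ into the sharp per-step form of~\eqref{eq:m_samp_conv} and solving the resulting fixed-point inequality for $\|\bar{\boldsymbol{\theta}}_{k_g}-\boldsymbol{\theta}_{k_g}^{\ast}\|_2$ yields $\|\bar{\boldsymbol{\theta}}_{k_g}-\boldsymbol{\theta}_{k_g}^{\ast}\|_2\le\tfrac{1}{1-\beta_g/\lambda_g}\epsilon_g^{\mathrm{unif}}(N_g,\delta_g)$, the assertion of the theorem. (Assumption~\ref{assump:gt} is what makes $\boldsymbol{\theta}_{k_g}^{\ast}$ a single well-defined target common to all clients carrying component $k_g$, so the per-client limits are mutually close and the later super-cluster aggregation is meaningful.)

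\textbf{Main obstacle.} The delicate point is turning ``trapped near $\boldsymbol{\theta}_{k_g}^{\ast}$'' into convergence at one point---exactly the content Theorem~\ref{thm:fin_samp_single_point} adds to~\eqref{eq:m_samp_conv}. The awkward pieces are: (i) making ``steps $\to 0$'' rigorous, which combines the vanishing likelihood increments with a quadratic-growth estimate from strong concavity (and a mild extra regularity of $\widehat{Q}_g$, available for isotropic GMMs); (ii) handling the cross-client coupling through the $\boldsymbol{\nu}^{\ast}$'s---one must verify it keeps $\boldsymbol{\theta}_{k_g}^{(t)}$ in the contractive region, does not reintroduce oscillation, and contracts the inter-client disagreement---which is where the $\texttt{clip}$ (forcing $\boldsymbol{\nu}^{\ast}\in\mathcal{U}_{k_g}^{(t)}$) and Assumption~\ref{assump:gt} do the work; and (iii) showing the limiting fixed point is isolated on $\mathbb{B}_2(\boldsymbol{\theta}_{k_g}^{\ast},a_g)$, for which $\lambda_g-\beta_g>0$ of Assumption~\ref{assump:fos} is essential, as it makes the client-server update minus the identity injective there.
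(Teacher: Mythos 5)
Your proposal follows essentially the same route as the paper's proof: treat the finite-sample iteration as a GEM scheme whose monotone, bounded likelihood (Assumptions \ref{assump:smooth}--\ref{assump:likelihood}, via Wu's theorem) forces $\widehat{Q}_g(\widehat{M}_g(\boldsymbol{\theta}_g^{(t-1)})|\boldsymbol{\theta}_g^{(t-1)})-\widehat{Q}_g(\boldsymbol{\theta}_g^{(t-1)}|\boldsymbol{\theta}_g^{(t-1)})\to 0$, use strong concavity to convert this into vanishing step sizes and hence a single limit, and then invoke the contraction bound \eqref{eq:m_samp_conv} to place that limit within $\frac{1}{1-\beta_g/\lambda_g}\epsilon_g^{\text{unif}}(N_g,\delta_g)$ of $\boldsymbol{\theta}_{k_g}^{\ast}$. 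Your version is in fact more careful than the paper's at the one delicate spot --- the paper jumps from ``steps tend to $0$'' to ``converges to a single point,'' whereas your connected-limit-set plus unique-fixed-point argument (and your use of the sharp per-step form of \eqref{eq:m_samp_conv} at the fixed point) closes that gap --- and it also folds in the server-side $\boldsymbol{\nu}^{\ast}$ aggregation, which strictly speaking belongs to the setting of Theorem \ref{thm:conv_samp_gem} rather than to this statement about the plain local EM iterates.
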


Now, consider a local finite-sample GEM algorithm whose update during each iteration is any $\widehat{m}_{k_g}(\boldsymbol{\theta}_g^{(t-1)}) \in \mathbb{B}_2(\widehat{M}_{k_g}(\boldsymbol{\theta}_g^{(t-1)});\sqrt{\varepsilon_{k_g}})$, where the radius $\sqrt{\varepsilon_{k_g}}$ is obtained by solving the problem in \eqref{eq:rad_prob}. We show in Theorem \ref{thm:conv_samp_gem} next that this algorithm exhibits very similar convergence behavior to that shown in \eqref{eq:m_samp_conv}. Subsequently, we show in Theorem \ref{thm:num_clus} that our proposed \texttt{FedGEM} algorithm infers the \textit{true} global number of clusters $K$ with a certain probability.

\begin{theorem} [Local Convergence of Finite-Sample GEM]\label{thm:conv_samp_gem}
    Suppose Assumptions \ref{assump:gt} through \ref{assump:sample_bd} hold. Consider a GEM algorithm whose iterate $\widehat{m}_{k_g}(\theta_g^{(t-1)})$ at iteration $t$ is such that $\widehat{m}_{k_g}(\boldsymbol{\theta}_g^{(t-1)}) \in \mathbb{B}_2(\widehat{M}_{k_g}(\boldsymbol{\theta}_g^{(t-1)});\sqrt{\varepsilon_{k_g}})$, where the radius $\sqrt{\varepsilon_{k_g}}$ is obtained by solving the problem in \eqref{eq:rad_prob}. Then, this algorithm converges to a neighborhood of the ground truth parameters $\boldsymbol{\theta}_{k_g}^{\ast}$ as follows:
    \begin{equation*}
        ||\widehat{m}_{k_g}(\boldsymbol{\theta}_g^{(t-1)}) - \boldsymbol{\theta}_{k_g}^{\ast}||_2 \leq \frac{\beta_g}{\lambda_g}||\boldsymbol{\theta}_{k_g}^{(t-1)} - \boldsymbol{\theta}_{k_g}^{\ast}||_2 + \frac{1}{1 - \frac{\beta_g}{\lambda_g}} \epsilon_g^{\text{unif}}(N_g,\delta_g) + \widehat{\epsilon}(t) \ \ \ \forall \boldsymbol{\theta}_{k_g}^{(t-1)} \in \mathbb{B}_2(\boldsymbol{\theta}_{k_g}^{\ast};a_g),
    \end{equation*}
    w.p. at least $1-\delta_g$, with $\widehat{\epsilon}(t) = ||\widehat{M}_{k_g}(\boldsymbol{\theta}_g^{(t-1)}) - \boldsymbol{\theta}_{k_g}^{(t-1)}||_2  \rightarrow 0$ as $t \rightarrow \infty$. (Proof in Appendix \ref{app:thm3_proof}).
\end{theorem}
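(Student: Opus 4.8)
The plan is to bound $\|\widehat{m}_{k_g}(\boldsymbol{\theta}_g^{(t-1)}) - \boldsymbol{\theta}_{k_g}^{\ast}\|_2$ by inserting the finite-sample M-step map $\widehat{M}_{k_g}(\boldsymbol{\theta}_g^{(t-1)})$ and applying the triangle inequality, which splits the error into an ``EM part'' $\|\widehat{M}_{k_g}(\boldsymbol{\theta}_g^{(t-1)}) - \boldsymbol{\theta}_{k_g}^{\ast}\|_2$ and a ``GEM slack'' $\|\widehat{m}_{k_g}(\boldsymbol{\theta}_g^{(t-1)}) - \widehat{M}_{k_g}(\boldsymbol{\theta}_g^{(t-1)})\|_2$. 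The EM part is already controlled: Assumptions \ref{assump:gt}--\ref{assump:sample_bd} are exactly the hypotheses behind \eqref{eq:m_samp_conv}, so with probability at least $1-\delta_g$ it is bounded by $\frac{\beta_g}{\lambda_g}\|\boldsymbol{\theta}_{k_g}^{(t-1)} - \boldsymbol{\theta}_{k_g}^{\ast}\|_2 + \frac{1}{1-\beta_g/\lambda_g}\epsilon_g^{\text{unif}}(N_g,\delta_g)$ on $\mathbb{B}_2(\boldsymbol{\theta}_{k_g}^{\ast};a_g)$. The GEM slack is at most the radius $\sqrt{\varepsilon_{k_g}^{(t)}}$ by construction, since $\widehat{m}_{k_g}(\boldsymbol{\theta}_g^{(t-1)}) \in \mathbb{B}_2(\widehat{M}_{k_g}(\boldsymbol{\theta}_g^{(t-1)});\sqrt{\varepsilon_{k_g}^{(t)}})$. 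Hence everything reduces to two claims: (i) $\sqrt{\varepsilon_{k_g}^{(t)}} \le \widehat{\epsilon}(t) := \|\widehat{M}_{k_g}(\boldsymbol{\theta}_g^{(t-1)}) - \boldsymbol{\theta}_{k_g}^{(t-1)}\|_2$, and (ii) $\widehat{\epsilon}(t) \to 0$.

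For claim (i) I would characterize the optimizer of \eqref{eq:rad_prob} geometrically. Write $q_{k_g}(\,\cdot\,|\boldsymbol{\theta}_g^{(t-1)})$ for the single-component objective maximized in the M-step \eqref{eq:mstep}; by Assumption \ref{assump:strong_conc} it is strongly concave (hence strictly concave) with unique maximizer $\widehat{M}_{k_g}(\boldsymbol{\theta}_g^{(t-1)})$. The constraint of \eqref{eq:rad_prob} requires the whole ball $\mathbb{B}_2(\widehat{M}_{k_g}(\boldsymbol{\theta}_g^{(t-1)});\sqrt{\varepsilon_{k_g}})$ to lie inside the convex super-level set $S_t = \{\boldsymbol{\theta}_{k_g} : q_{k_g}(\boldsymbol{\theta}_{k_g}\,|\boldsymbol{\theta}_g^{(t-1)}) \ge q_{k_g}(\boldsymbol{\theta}_{k_g}^{(t-1)}\,|\boldsymbol{\theta}_g^{(t-1)})\}$, whose boundary (using continuity, Assumption \ref{assump:smooth}, together with strict concavity) is exactly the corresponding level set -- unless $\boldsymbol{\theta}_{k_g}^{(t-1)} = \widehat{M}_{k_g}(\boldsymbol{\theta}_g^{(t-1)})$, in which case $S_t$ is a single point and the optimal radius is $0 = \widehat{\epsilon}(t)$. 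Thus the maximal feasible $\sqrt{\varepsilon_{k_g}}$ equals the distance from $\widehat{M}_{k_g}(\boldsymbol{\theta}_g^{(t-1)})$ to $\partial S_t$; but $\boldsymbol{\theta}_{k_g}^{(t-1)}$ itself lies on $\partial S_t$, so this distance is at most $\|\widehat{M}_{k_g}(\boldsymbol{\theta}_g^{(t-1)}) - \boldsymbol{\theta}_{k_g}^{(t-1)}\|_2 = \widehat{\epsilon}(t)$ (existence and uniqueness of the optimizer being Proposition \ref{prop:sol_ex}). Substituting (i) into the triangle-inequality bound produces precisely the displayed inequality of the theorem with the additive term $\widehat{\epsilon}(t)$.

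For claim (ii) I would exploit monotonicity of the finite-sample log-likelihood $\widehat{\mathcal{L}}_g$ along the GEM trajectory. The defining constraint of $\mathcal{U}_{k_g}^{(t)}$ gives, for every component, $q_{k_g}(\widehat{m}_{k_g}(\boldsymbol{\theta}_g^{(t-1)})\,|\boldsymbol{\theta}_g^{(t-1)}) \ge q_{k_g}(\boldsymbol{\theta}_{k_g}^{(t-1)}\,|\boldsymbol{\theta}_g^{(t-1)})$; summing over $k_g$ yields $\widehat{Q}_g(\boldsymbol{\theta}_g^{(t)}\,|\boldsymbol{\theta}_g^{(t-1)}) \ge \widehat{Q}_g(\boldsymbol{\theta}_g^{(t-1)}\,|\boldsymbol{\theta}_g^{(t-1)})$, and the standard EM minorization identity (the $\widehat{Q}_g$-gain lower-bounds the $\widehat{\mathcal{L}}_g$-gain) then gives $\widehat{\mathcal{L}}_g(\boldsymbol{\theta}_g^{(t)}) \ge \widehat{\mathcal{L}}_g(\boldsymbol{\theta}_g^{(t-1)})$. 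Since $\widehat{\mathcal{L}}_g$ is bounded above (Assumption \ref{assump:likelihood}), this sequence converges and the per-iteration gains vanish; combined with the single-point EM convergence of Theorem \ref{thm:fin_samp_single_point} and continuity of the M-step map (Assumption \ref{assump:smooth} plus strong concavity), the iterate $\boldsymbol{\theta}_{k_g}^{(t-1)}$ and the map value $\widehat{M}_{k_g}(\boldsymbol{\theta}_g^{(t-1)})$ approach a common limit, so $\widehat{\epsilon}(t)\to 0$. I expect claim (ii) to be the main obstacle: the quantity that directly governs $\widehat{\epsilon}(t)$ (via strong concavity, $\tfrac{\lambda_g}{2}\widehat{\epsilon}(t)^2 \le q_{k_g}(\widehat{M}_{k_g}(\boldsymbol{\theta}_g^{(t-1)})\,|\boldsymbol{\theta}_g^{(t-1)}) - q_{k_g}(\boldsymbol{\theta}_{k_g}^{(t-1)}\,|\boldsymbol{\theta}_g^{(t-1)})$) is the \emph{ideal} M-step gain, whereas monotonicity only pins down the \emph{realized} gain, which an adversarial choice of $\widehat{m}_{k_g}$ on the boundary of $\mathcal{U}_{k_g}^{(t)}$ can keep arbitrarily small; closing this gap is where claim (i) (the slack is itself dominated by $\widehat{\epsilon}(t)$, so the iterate sequence and the EM-map sequence share a limit set) and Theorem \ref{thm:fin_samp_single_point} must be invoked with care.
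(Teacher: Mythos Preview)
Your decomposition is exactly the paper's: triangle inequality to split into the EM part (handled by \eqref{eq:m_samp_conv}) and the GEM slack, then bound the slack by $\widehat{\epsilon}(t)=\|\widehat{M}_{k_g}(\boldsymbol{\theta}_g^{(t-1)})-\boldsymbol{\theta}_{k_g}^{(t-1)}\|_2$ and argue it vanishes. Your geometric justification of claim~(i) (the maximal inscribed ball in the convex super-level set has radius at most the distance to the boundary point $\boldsymbol{\theta}_{k_g}^{(t-1)}$) is more explicit than the paper, which simply records the step $\|\widehat{m}_{k_g}-\widehat{M}_{k_g}\|_2\le\|\boldsymbol{\theta}_{k_g}^{(t-1)}-\widehat{M}_{k_g}\|_2$ as ``following from the definition of $m_{k_g}$''.

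For claim~(ii) the paper takes a shorter route than you propose: rather than invoking Theorem~\ref{thm:fin_samp_single_point} (which concerns the \emph{pure} EM sequence, not the GEM sequence $\boldsymbol{\theta}_g^{(t)}$, and so would need extra work to transfer), it appeals directly to Theorem~1 of \cite{Wu1983} for GEM schemes under Assumptions~\ref{assump:smooth} and~\ref{assump:likelihood}. That result gives convergence of the GEM iterates to a stationary value of $\widehat{\mathcal{L}}_g$, from which the paper deduces that the \emph{ideal} $\widehat{Q}_g$-gain $\widehat{Q}_g(\widehat{M}_g(\boldsymbol{\theta}_g^{(t-1)})\mid\boldsymbol{\theta}_g^{(t-1)})-\widehat{Q}_g(\boldsymbol{\theta}_g^{(t-1)}\mid\boldsymbol{\theta}_g^{(t-1)})\to 0$; strong concavity then yields $\widehat{\epsilon}(t)\to 0$ immediately. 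This is precisely the ``ideal vs.\ realized gain'' gap you flagged, and the paper closes it by outsourcing to Wu's theorem rather than through the detour via Theorem~\ref{thm:fin_samp_single_point}. Citing Wu directly would streamline your argument and remove the need for that theorem here.
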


\begin{theorem}[Number of Clusters Inference]\label{thm:num_clus}
    Suppose that all the assumptions associated with Theorem \ref{thm:conv_samp_gem} hold, and that $||\widehat{M}_{k_g}(\boldsymbol{\theta}_g^{(t-1)}) - \boldsymbol{\theta}_{k_g}^{(t-1)}||_2$ diminishes to $0$ at a sufficiently fast rate. Suppose further that the final aggregation radius $\varepsilon_{k_g}^{\text{final}}$ at client $g$ is set such that $\varepsilon_{k_g}^{\text{final}} = \epsilon_g^{\text{unif}}(N_g,\delta_g)$, and such that $\max_{g \in [G], k_g \in [K_g]} \varepsilon_{k_g}^{\text{final}} \leq \frac{R_{\text{min}}}{4}$. Then, the final $\widehat{K}^{\ast}$ inferred by the \texttt{FedGEM} algorithm is equivalent to the true $K$ w.p. at least $\prod_{g=1}^{G} \prod_{k_g = 1}^{K_g} (1-\delta_g)$. (Proof in Appendix \ref{app:thm4_proof}).
\end{theorem}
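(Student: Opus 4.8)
The plan is to reduce the statement to a purely deterministic geometric fact that holds on a high-probability ``good event,'' and then account for the probability at the end.

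First I would construct the good event. Apply Theorem~\ref{thm:conv_samp_gem} to every component $k_g$ at every client $g$: for each fixed pair $(g,k_g)$ the stated contraction bound holds w.p.\ at least $1-\delta_g$. The extra hypothesis that $\widehat{\epsilon}(t)=\|\widehat{M}_{k_g}(\boldsymbol{\theta}_g^{(t-1)})-\boldsymbol{\theta}_{k_g}^{(t-1)}\|_2\to 0$ at a sufficiently fast rate upgrades the oscillating bound of Theorem~\ref{thm:conv_samp_gem} to genuine convergence of $\widehat{M}_{k_g}(\boldsymbol{\theta}_g^{(t-1)})$ to a single limit point, by the same summable-perturbation argument that underlies Theorem~\ref{thm:fin_samp_single_point}; the limit lies within a ball of radius $r_g = O(\epsilon_g^{\text{unif}}(N_g,\delta_g))$ (concretely, $\tfrac{1}{1-\beta_g/\lambda_g}\epsilon_g^{\text{unif}}(N_g,\delta_g)$, cf.\ Theorem~\ref{thm:fin_samp_single_point}) of the ground-truth parameter $\boldsymbol{\theta}_{k_g}^{\ast}$ of whichever global cluster component $k_g$ tracks. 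By Assumption~\ref{assump:gt}, components at different clients that track the same global cluster $k$ converge toward the \emph{same} $\boldsymbol{\theta}_k^{\ast}$. Since the clients' data are drawn independently, the intersection of these events over all $(g,k_g)$ holds w.p.\ at least $\prod_{g=1}^{G}\prod_{k_g=1}^{K_g}(1-\delta_g)$, which is exactly the claimed bound (and conservative within a client, where Assumption~\ref{assump:sample_bd} already controls all $k_g$ simultaneously). I condition on this event for the rest of the argument.

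Next I would show the server's super-cluster partition coincides with the partition of all clients' components by the global cluster they track. There are two directions. \emph{No false merge:} if $k_g$ at $g$ and $k_{g'}$ at $g'$ track distinct global clusters $i\neq j$, then by the triangle inequality and the good event, $\|\widehat{M}_{k_g}(\boldsymbol{\theta}_g^{(t-1)})-\widehat{M}_{k_{g'}}(\boldsymbol{\theta}_{g'}^{(t-1)})\|_2 \ge \|\boldsymbol{\theta}_i^{\ast}-\boldsymbol{\theta}_j^{\ast}\|_2 - r_g - r_{g'} \ge R_{\text{min}} - r_g - r_{g'}$, and using $\varepsilon_{k_g}^{\text{final}}=\epsilon_g^{\text{unif}}(N_g,\delta_g)$ together with $\max_{g,k_g}\varepsilon_{k_g}^{\text{final}}\le R_{\text{min}}/4$ (which in particular bounds each $\epsilon_g^{\text{unif}}$, hence each $r_g$), one checks that $R_{\text{min}} - r_g - r_{g'}$ strictly exceeds the server's overlap threshold for the final step, so these two components are never placed together — neither directly nor transitively, since the transitive closure only chains components whose sets actually overlap. \emph{All true merges:} if $k_g$ at $g$ and $k_{g'}$ at $g'$ track the same global cluster $i$, then $\|\widehat{M}_{k_g}(\boldsymbol{\theta}_g^{(t-1)})-\widehat{M}_{k_{g'}}(\boldsymbol{\theta}_{g'}^{(t-1)})\|_2 \le r_g + r_{g'}$, and the same radius budget gives that this is at most the overlap threshold, so the two are grouped into a common super-cluster. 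Consequently the number of super-clusters equals the number of distinct global clusters appearing across clients, which is $K$ by definition of $K$ as the total cluster count, so $\widehat{K}^{\ast}=K$ on the good event; removing the conditioning yields the probability bound.

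The main obstacle is the constant bookkeeping in the second-to-last step: a single scalar, $\varepsilon_{k_g}^{\text{final}}$, must be simultaneously small enough to keep the balls around distinct-cluster estimates disjoint (the separation budget $R_{\text{min}}$ is eroded by the $O(\epsilon^{\text{unif}})$ estimation error $r_g$) and large enough to force the balls around same-cluster estimates from different clients to overlap. Verifying that the prescribed choice $\varepsilon_{k_g}^{\text{final}}=\epsilon_g^{\text{unif}}(N_g,\delta_g)$ under $\max_{g,k_g}\varepsilon_{k_g}^{\text{final}}\le R_{\text{min}}/4$ threads this needle — i.e.\ relating $r_g$, $\varepsilon_{k_g}^{\text{final}}$, the contraction factor $1-\beta_g/\lambda_g$, and $R_{\text{min}}$, with the factor $1/4$ absorbing both error terms — is where essentially all the work lies. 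A secondary, softer point is justifying rigorously that ``sufficiently fast'' decay of $\widehat{\epsilon}(t)$ makes the GEM limit genuinely attained (so that the final-step estimates are stabilized), but this transfers from the argument behind Theorem~\ref{thm:fin_samp_single_point} with only minor changes.
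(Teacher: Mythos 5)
Your proposal takes essentially the same route as the paper's proof: condition on the event that every component at every client converges to a ball of radius $\tfrac{1}{1-\beta_g/\lambda_g}\epsilon_g^{\text{unif}}(N_g,\delta_g)$ around its ground-truth parameter (yielding the product probability), then use the separation condition on $\varepsilon_{k_g}^{\text{final}}$ relative to $R_{\text{min}}$ to rule out false merges and the choice $\varepsilon_{k_g}^{\text{final}}=\epsilon_g^{\text{unif}}$ to force all true merges, so the super-cluster count equals $K$. You are in fact somewhat more explicit than the paper about the two points it glosses over — the transitive-closure step and the bookkeeping relating $\sqrt{\varepsilon_{k_g}^{\text{final}}}$ to the convergence radius $\tfrac{1}{1-\beta_g/\lambda_g}\epsilon_g^{\text{unif}}$ — though like the paper you leave that final constant check unverified.
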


\section{\texttt{FedGEM} for Multi-Component Isotropic GMMs} \label{sec:gmm}
\textbf{Model Setup.} Now that we have introduced our \texttt{FedGEM} algorithm, and studied its convergence in a general sense, we examine it in the context of an isotropic GMM. More specifically, we consider the setting where each client $g$ data is governed by a local mixture model $GMM_g(\boldsymbol{x}) = \sum_{k_g=1}^{K_g} \pi_{k_g} \phi(\boldsymbol{x}|\boldsymbol{\theta}_{k_g}^{\ast},I_d)$, where $\phi(\boldsymbol{x}|\boldsymbol{\theta}_{k_g}^{\ast},I_d)$ is the Gaussian density with identity covariance. We denote the minimum and maximum component weights at client $g$ by $\pi_{\text{min}_g}$ and $\pi_{\text{max}_g}$, respectively. Moreover, we denote the ratio $\kappa_g = \frac{\pi_{\text{max}_g}}{\pi_{\text{min}_g}}$. The population and finite-sample M-steps associated with this model admit the following closed-form solutions.

\begin{subequations}
\begin{equation}
     \text{\textbf{Population M-step:}} \qquad M_{k_g}(\boldsymbol{\theta}_g^{\prime}) = \frac{\mathbb{E}_{\boldsymbol{x}\sim GMM_g(\boldsymbol{x})}[ \gamma_{k_g}(\boldsymbol{x},\boldsymbol{\theta}_g^{\prime})\boldsymbol{x}]}{\mathbb{E}_{\boldsymbol{x}\sim GMM_g(\boldsymbol{x})}[ \gamma_{k_g}(\boldsymbol{x},\boldsymbol{\theta}_g^{\prime})]} \qquad \forall k_g \in [K_g].
\end{equation}
\begin{equation}\label{eq:gmm_m_step}
    \text{\textbf{Finite-sample M-step:}} \qquad  \widehat{M}_{k_g}(\boldsymbol{\theta}_g^{\prime}) = \frac{\sum_{n_g=1}^{N_g} \gamma_{k_g}(\widehat{\boldsymbol{x}}_{n_g},\boldsymbol{\theta}_g^{\prime})\widehat{\boldsymbol{x}}_{n_g}}{{\sum_{n_g=1}^{N_g} \gamma_{k_g}(\widehat{\boldsymbol{x}}_{n_g},\boldsymbol{\theta}_g^{\prime})}} \qquad \forall k_g \in [K_g].
\end{equation}
\end{subequations}

Note that for the described model, the population and finite-sample expected complete-data log-likelihood functions $Q_g(\boldsymbol{\theta}_g|\boldsymbol{\theta}_g^{\prime})$ and $\widehat{Q}_g(\boldsymbol{\theta}_g|\boldsymbol{\theta}_g^{\prime})$, respectively, are strongly concave in $\boldsymbol{\theta}_g$ and continuous in both of their arguments. Moreover, if $\boldsymbol{\theta}_g^{\prime} = \boldsymbol{\theta}_g^{\ast}$, the strong concavity parameter of $Q_g(\boldsymbol{\theta}_g|\boldsymbol{\theta}_g^{\prime})$ is $\pi_{\text{min}_g}$. Finally, the true population and finite-sample log-likelihood functions associated with this model are bounded from above due to the identity covariances and fixed weights for all components.

As shown in \eqref{eq:gmm_m_step}, the finite-sample M-step associated with our model admits a closed form. Therefore, it remains to derive a tractable reformulation of the uncertainty set radius problem in \eqref{eq:rad_prob}. Next, we introduce Theorem \ref{thm:rad_prob_gmm}, where we derive a bi-convex, 2-dimensional reformulation of the problem in \eqref{eq:rad_prob}. Additionally, we introduce a solution Algorithm \ref{alg:client_sub} in Appendix \ref{app:gmm_cl_sol} to solve the problem, accompanied by Proposition \ref{prop:rad_conv} in Appendix \ref{app:gmm_cl_sol} asserting that the algorithm enjoys a low worst-case time complexity. Finally, we provide a \textbf{preliminary differential privacy discussion} for the finite-sample maximizers shared by the clients in Appendix \ref{app:diff_priv}.

\begin{theorem} [Radius Problem Reformulation]\label{thm:rad_prob_gmm}
    The semi-infinite optimization problem $J_{k_g}(\boldsymbol{\theta}_g^{\prime})$ in \eqref{eq:rad_prob} admits the following tractable, bi-convex, 2-dimensional reformulation for the isotropic GMM described in this section. (Proof in Appendix \ref{app:thm5_proof}).
    \begin{align}
        &\label{eq:rad_cli_gmm}J_{k_g}(\boldsymbol{\theta}_g^{\prime}) =\\
        &\nonumber \begin{aligned}
            & \max_{\varepsilon_{k_g},\alpha_{k_g} \in \mathbb{R}} && \varepsilon_{k_g}\\
            & \subjectto && \varepsilon_{k_g} \alpha_{k_g}^2 + \left [ \sum_{n_g=1}^{N_g} \gamma_{k_g}(\widehat{\boldsymbol{x}}_{n_g},\boldsymbol{\theta}_g^{\prime}) \left ( || \widehat{\boldsymbol{x}}_{n_g} - \widehat{M}_{k_g}(\boldsymbol{\theta}_g^{\prime}) ||_2^2 - || \widehat{\boldsymbol{x}}_{n_g} - \boldsymbol{\theta}_{k_g}^{\prime} ||_2^2 - \varepsilon_{k_g} \right )\right ] \alpha_{k_g} + \\
            &&& \qquad \qquad \qquad \qquad \qquad \left ( \sum_{n_g=1}^{N_g} \gamma_{k_g}(\widehat{\boldsymbol{x}}_{n_g},\boldsymbol{\theta}_g^{\prime}) \right ) \sum_{n_g=1}^{N_g} \gamma_{k_g}(\widehat{\boldsymbol{x}}_{n_g},\boldsymbol{\theta}_g^{\prime})|| \widehat{\boldsymbol{x}}_{n_g} - \boldsymbol{\theta}_{k_g}^{\prime} ||_2^2 \leq 0\\
            &&& \alpha_{k_g} \geq \sum_{n_g=1}^{N_g} \gamma_{k_g}(\widehat{\boldsymbol{x}}_{n_g},\boldsymbol{\theta}_g^{\prime}).
        \end{aligned}
    \end{align}
\end{theorem}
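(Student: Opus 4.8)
The plan is to specialize the semi-infinite program $J_{k_g}(\boldsymbol{\theta}_g^{\prime})$ of \eqref{eq:rad_prob} to the isotropic Gaussian, collapse the infinitely many constraints into a single worst-case inequality over the perturbation ball, and then homogenize that inequality into the claimed two-variable program by introducing $\alpha_{k_g}$. For \textbf{Step 1}, I substitute $\log(\pi_{k_g}\phi(\widehat{\boldsymbol{x}}_{n_g}\mid\boldsymbol{\theta}_{k_g},I_d)) = \log\pi_{k_g} - \tfrac{d}{2}\log(2\pi) - \tfrac12\|\widehat{\boldsymbol{x}}_{n_g}-\boldsymbol{\theta}_{k_g}\|_2^2$ into both sides of the semi-infinite constraint. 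The constants $\log\pi_{k_g}-\tfrac{d}{2}\log(2\pi)$ occur on each side weighted by the same $\gamma_{k_g}(\widehat{\boldsymbol{x}}_{n_g},\boldsymbol{\theta}_g^{\prime})$ and cancel, so the constraint reduces to requiring $\sum_{n_g}\gamma_{k_g}(\widehat{\boldsymbol{x}}_{n_g},\boldsymbol{\theta}_g^{\prime})\|\widehat{\boldsymbol{x}}_{n_g}-\widehat{m}_{k_g}\|_2^2 \le \sum_{n_g}\gamma_{k_g}(\widehat{\boldsymbol{x}}_{n_g},\boldsymbol{\theta}_g^{\prime})\|\widehat{\boldsymbol{x}}_{n_g}-\boldsymbol{\theta}_{k_g}^{\prime}\|_2^2$ for every $\widehat{m}_{k_g}\in\mathbb{B}_2(\widehat{M}_{k_g}(\boldsymbol{\theta}_g^{\prime});\sqrt{\varepsilon_{k_g}})$. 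Writing $s_1 := \sum_{n_g}\gamma_{k_g}(\widehat{\boldsymbol{x}}_{n_g},\boldsymbol{\theta}_g^{\prime})$, $s_2 := \sum_{n_g}\gamma_{k_g}(\widehat{\boldsymbol{x}}_{n_g},\boldsymbol{\theta}_g^{\prime})\|\widehat{\boldsymbol{x}}_{n_g}-\widehat{M}_{k_g}(\boldsymbol{\theta}_g^{\prime})\|_2^2$, and $s_3 := \sum_{n_g}\gamma_{k_g}(\widehat{\boldsymbol{x}}_{n_g},\boldsymbol{\theta}_g^{\prime})\|\widehat{\boldsymbol{x}}_{n_g}-\boldsymbol{\theta}_{k_g}^{\prime}\|_2^2$, and using that the left-hand side is convex in $\widehat{m}_{k_g}$, its maximum over the ball is attained on the sphere $\|\widehat{m}_{k_g}-\widehat{M}_{k_g}(\boldsymbol{\theta}_g^{\prime})\|_2=\sqrt{\varepsilon_{k_g}}$, so it suffices to impose the constraint there.

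For \textbf{Step 2}, write $\widehat{m}_{k_g}=\widehat{M}_{k_g}(\boldsymbol{\theta}_g^{\prime})+\boldsymbol{\delta}$ with $\|\boldsymbol{\delta}\|_2\le\sqrt{\varepsilon_{k_g}}$ and stack the weighted residuals into one vector; the triangle (Minkowski) inequality gives $\bigl(\sum_{n_g}\gamma_{k_g}\|\widehat{\boldsymbol{x}}_{n_g}-\widehat{m}_{k_g}\|_2^2\bigr)^{1/2}\le \sqrt{s_2}+\sqrt{s_1}\,\|\boldsymbol{\delta}\|_2$. Maximizing over the ball, the semi-infinite constraint reduces to the single scalar inequality $\bigl(\sqrt{s_2}+\sqrt{s_1\varepsilon_{k_g}}\bigr)^2\le s_3$, equivalently $s_3+s_1\varepsilon_{k_g}-s_2\ge 2\sqrt{\varepsilon_{k_g}\,s_1 s_3}$, where $\sqrt{s_2}\le\sqrt{s_3}$ because $\widehat{M}_{k_g}(\boldsymbol{\theta}_g^{\prime})$ is the weighted-mean M-step of \eqref{eq:gmm_m_step}.

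For \textbf{Step 3}, I introduce $\alpha_{k_g}>0$ and use $2\sqrt{ab}=\min_{\alpha>0}(\alpha a+b/\alpha)$ with $a=\varepsilon_{k_g}$, $b=s_1 s_3$, so the scalar inequality holds iff there is $\alpha_{k_g}$ with $\alpha_{k_g}\varepsilon_{k_g}+s_1 s_3/\alpha_{k_g}+s_2-s_3-s_1\varepsilon_{k_g}\le 0$; multiplying through by $\alpha_{k_g}>0$ and collecting terms produces exactly $\varepsilon_{k_g}\alpha_{k_g}^2+(s_2-s_3-s_1\varepsilon_{k_g})\alpha_{k_g}+s_1 s_3\le 0$, the first constraint of \eqref{eq:rad_cli_gmm}. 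The restriction $\alpha_{k_g}\ge s_1=\sum_{n_g}\gamma_{k_g}(\widehat{\boldsymbol{x}}_{n_g},\boldsymbol{\theta}_g^{\prime})$ is the admissibility condition that keeps the minimizing $\alpha_{k_g}=\sqrt{s_1 s_3/\varepsilon_{k_g}}$ on the correct side; it discards the spurious large-$\varepsilon_{k_g}$ root of the quadratic (the one for which the sign in $|\sqrt{s_3}-\sqrt{s_1\varepsilon_{k_g}}|\ge\sqrt{s_2}$ is reversed), so the projection of the feasible set of the two-variable program onto $\varepsilon_{k_g}$ equals $\{\varepsilon_{k_g}\ge0:\sqrt{s_1\varepsilon_{k_g}}\le\sqrt{s_3}-\sqrt{s_2}\}$. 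To close \textbf{Step 4}, I note the objective $\varepsilon_{k_g}$ is linear, the quadratic constraint written as $\varepsilon_{k_g}\alpha_{k_g}(\alpha_{k_g}-s_1)+(s_2-s_3)\alpha_{k_g}+s_1 s_3\le 0$ is affine in $\varepsilon_{k_g}$ for fixed $\alpha_{k_g}$ and convex quadratic in $\alpha_{k_g}$ (leading coefficient $\varepsilon_{k_g}\ge0$) for fixed $\varepsilon_{k_g}$, and $\alpha_{k_g}\ge s_1$ is linear, hence the program is bi-convex in the two scalars $(\varepsilon_{k_g},\alpha_{k_g})$; uniqueness of the optimal $\varepsilon_{k_g}$ is inherited from Proposition \ref{prop:sol_ex} via Assumption \ref{assump:strong_conc}, which holds for the isotropic GMM.

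The main obstacle is Step 3: I must establish the homogenization as a genuine two-way equivalence — each feasible $(\varepsilon_{k_g},\alpha_{k_g})$ of the quadratic program yields an $\varepsilon_{k_g}$ obeying the Step-2 inequality, and each such $\varepsilon_{k_g}$ is certified by some $\alpha_{k_g}\ge s_1$ — and show that $\alpha_{k_g}\ge\sum_{n_g}\gamma_{k_g}$ removes exactly the extraneous branch without dropping a true solution (and, implicitly, that the optimal $\varepsilon_{k_g}$ is nonnegative, so that the stated bi-convexity is genuine). Secondary care is needed in Step 2 to justify the reduction to the boundary sphere and that the resulting scalar inequality is equivalent to, not merely sufficient for, the original semi-infinite constraint, and in the degenerate case $s_2=0$ (all of component $k_g$'s responsibility mass concentrated at a single point, where $\widehat{M}_{k_g}(\boldsymbol{\theta}_g^{\prime})$ coincides with that point), which should be handled separately.
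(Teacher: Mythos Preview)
Your approach is structurally different from the paper's, and Step~2 contains a genuine gap. The paper casts the inner worst case
\[
\min_{\boldsymbol{\theta}_{k_g}\in\mathbb{B}_2(\widehat{M}_{k_g}(\boldsymbol{\theta}_g');\sqrt{\varepsilon_{k_g}})}\;-\sum_{n_g}\gamma_{k_g}(\widehat{\boldsymbol{x}}_{n_g},\boldsymbol{\theta}_g')\,\|\widehat{\boldsymbol{x}}_{n_g}-\boldsymbol{\theta}_{k_g}\|_2^2
\]
as a single-constraint nonconvex QCQP, invokes strong Lagrangian duality (the S-procedure, which has zero duality gap via Slater), writes the SDP dual with multiplier $\alpha_{k_g}\ge 0$, and reduces the semidefinite constraint by Schur complement; the condition $\alpha_{k_g}\ge s_1$ emerges as positive semidefiniteness of the top-left block $A_0+\alpha_{k_g}A_1=(\alpha_{k_g}-s_1)I$. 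Substituting the outer threshold $\nu=-s_3$ into the dual yields the quadratic in $\alpha_{k_g}$. No Minkowski or AM--GM step is involved.

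Your Step~2, by contrast, bounds the worst case via a triangle inequality on the stacked residual vector, and that inequality is \emph{strict} whenever $s_2>0$, $\varepsilon_{k_g}>0$, and the data are not collinear through $\widehat{M}_{k_g}(\boldsymbol{\theta}_g')$. Because $\widehat{M}_{k_g}(\boldsymbol{\theta}_g')$ is the responsibility-weighted mean \eqref{eq:gmm_m_step}, the cross term $-2\boldsymbol{\delta}^\top\sum_{n_g}\gamma_{k_g}(\widehat{\boldsymbol{x}}_{n_g}-\widehat{M}_{k_g}(\boldsymbol{\theta}_g'))$ vanishes \emph{identically}, so the exact maximum over the ball is $s_2+s_1\varepsilon_{k_g}$, not your Minkowski upper bound $(\sqrt{s_2}+\sqrt{s_1\varepsilon_{k_g}})^2=s_2+s_1\varepsilon_{k_g}+2\sqrt{s_1 s_2\varepsilon_{k_g}}$. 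Hence your scalar inequality $(\sqrt{s_2}+\sqrt{s_1\varepsilon_{k_g}})^2\le s_3$ is only sufficient, never necessary in the generic case; e.g.\ with $s_1=1$, $s_2=1$, $s_3=4$ the true optimum is $\varepsilon_{k_g}^*=(s_3-s_2)/s_1=3$, but your constraint permits only $\varepsilon_{k_g}\le 1$. The AM--GM homogenization in Step~3 then faithfully lifts a constraint that is already too tight, so the program you obtain is not equivalent to $J_{k_g}(\boldsymbol{\theta}_g')$. The concern you flagged at the end about Step~2 being ``equivalent to, not merely sufficient for'' is therefore not secondary but decisive: replace Minkowski by the exact quadratic expansion (the cross term is zero), or follow the paper's S-lemma/SDP-dual route, which delivers an exact dual by strong duality rather than an inequality.
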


\textbf{Convergence Analysis.} To guarantee the convergence of our \texttt{FedGEM} algorithm for the multi-component isotropic GMM, we verify three key properties. Namely, we present Theorems \ref{thm:fos}, \ref{thm:contr_reg}, and \ref{thm:fin_samp_dist} in Appendix \ref{sec:thms} to establish the FOS property of $Q_g(\boldsymbol{\theta}_g|\boldsymbol{\theta}_g^{\prime})$, study the contractive radius of $M_{k_g}(\boldsymbol{\theta}_g)$, and establish an upper bound on the distance between the population $M_{k_g}(\boldsymbol{\theta}_g)$ and the finite-sample $\widehat{M}_{k_g}(\boldsymbol{\theta}_g)$, respectively. Consequently, the convergence of our algorithm for isotropic GMMs follows from these results. However, these results require the clusters to be \textit{well-separated}.
\section{Numerical Experiments}\label{sec:exp}
We present a comprehensive set of numerical experiments that benchmark the performance of our proposed \texttt{FedGEM} algorithm against leading state-of-the-art federated clustering methods. All numbers reported in this section are averaged over 50 repetitions. Confidence intervals and error bars represent one standard deviation. Randomness in repetitions arises from initialization, cluster assignments, and data shuffling/generation. We assign equal $\pi_{k_g}$ for all $k_g \in [K_g]$ at client $g$. Moreover, we weigh each client $g$ by its sample count $N_g$, and we set the final aggregation radius equivalently for all clients. In all experiments, clusters are assigned randomly to the clients, whereby each client $g$ has $K_g$ clusters such that $2 \leq K_g < K$. More experimental details and results are provided in Appendix \ref{app:more_exp}. Additionally, a \textbf{Scalability Study} is provided in Appendix \ref{app:scal}, demonstrating that our algorithm scales exceptionally well with problem size compared to relevant benchmarks.

\textbf{Our Method.} Our method is the isotropic GMM from Section \ref{sec:gmm} trained via our \texttt{FedGEM} algorithm.

\textbf{Evaluation Metrics.} We utilize the Silhouette Score (SS) \citep{ROUSSEEUW198753} for hyperparameter tuning as it does not require label knowledge. However, we mainly rely on the Adjusted Rand Index (ARI) \citep{hubert1985} to evaluate model performance as it is robust to cluster shape and size unlike other metrics. We report experimental results in SS in Appendix \ref{app:ss_res}. 

\textbf{Hyperparameters.} We tune the final aggregation radius for \texttt{FedGEM} via cross validation. However, we do not directly tune the radius itself. Instead, we use the heuristic $\varepsilon_{k_g}^{\text{final}} = \frac{\upsilon_g \widehat{R}_{\text{min}_g}}{\pi_g\sqrt{N_g}}$, where $\widehat{R}_{\text{min}_g}$ is the minimum distance between any two estimated cluster centroids at client $g$, and $\upsilon_g$ is the hyperparameter we tune (set equivalently across all clients for simplicity). This heuristic allows the final aggregation radius to scale with the feature space and the number of samples available at each client. We provide a thorough discussion on hyperparameter tuning is provided in Appendix \ref{app:hyp}. We set hyperparameters for the benchmark methods as described in their associated works.

\subsection{Benchmarking}
This study aims to compare the performance of our proposed method to that of various existing methods using an array of popular benchmark datasets.

\textbf{Datasets.} We use MNIST, Fashion MNIST (FMNIST), Extended MNIST (EMNIST), CIFAR-10, and 4 other datasets from the UCI repository. For all datasets, we use $70\%$ of the samples for training and the rest for testing, except for EMNIST where we use $50\%$ of the samples for training. We use $G=100$ for MNIST, FMNIST, and CIFAR-10, $G=25$ for EMNIST, and $G=5$ for UCI datasets.

\textbf{Baselines.} We compare our method to $2$ centralized and $5$ federated clustering methods from the literature. The centralized ones are a GMM and a DP-GMM \citep{antoniak1974}. The federated methods are k-FED \citep{dennis2021}, FFCM-avg1 and FFCM-avg2 \citep{stallmann2022}, FedKmeans \citep{garst2024}, and AFCL \citep{Zhang2025}. Note that DP-GMM and AFCL are the only benchmarks that do not assume prior knowledge of $K$.

\textbf{Results.} The ARI attained by all models and the estimated number of clusters estimated by models with unknown $K$ are shown in Tables \ref{tab:bench_ar} and \ref{tab:n_cl}, respectively. We observe that our method consistently outperforms AFCL for all datasets, which is the only other federated clustering model with unknown $K$. Additionally, our model also consistently outperforms DP-GMM, which can be attributed to its more accurate number of cluster estimation. Another key observation is that our method even outperforms some clustering algorithms with known $K$ in various datasets. This result underscores the significant practical impact of our proposed method, which does not require prior knowledge of $K$. We highlight that similar trends are observed when performance is evaluated via the SS as shown in Appendix \ref{app:ss_res}, emphasizing the impact of our model. Despite our model's strong performance, we also observe that it largely overestimates the number of clusters in datasets such as CIFAR-10, Frog A, and Frog B. While its estimate is the best achieved out of the models compared, this can potentially be further improved in future work by examining more complex mixture models. Finally, we note that the datasets used are verifiably \textbf{non-Gaussian} via a Henze-Zirkler multivariate normality test, and likely include cluster overlaps. This demonstrates that our model can perform well in practice even when assumptions are violated.

\begin{table}[ht]
\centering
\caption{ARI attained by all methods on tested datasets. (\textbf{Bold} = best, \underline{underline} = second best.)}
\scriptsize
\label{tab:bench_ar}
\begin{tabular}{lcccccccccc}
\toprule
 Model       & Known $K$?   & MNIST                & FMNIST                & EMNIST     & CIFAR-10           & Abalone                    & Frog A                     & Frog B                & Waveform                   \\ \midrule
GMM (central)      &Yes          & \makecell{$.287 $ \\ $ \pm .067$}          & \makecell{\underline{$.385 $} \\ $ \pm .023$}          & \makecell{\underline{$.235 $} \\ $ \pm .010$}  &    \makecell{$.402$ \\ $ \pm .022$}     & \makecell{$.096 $ \\ $ \pm .028$}                      & \makecell{$.447 $ \\ $ \pm .097$}          & \makecell{$.448 $ \\ $ \pm .172$}    & \makecell{$.262 $ \\ $ \pm .016$}          \\ \addlinespace[3pt]
k-FED        &Yes     & \makecell{\underline{$.354 $} \\ $ \pm .082$}          & \makecell{$.288 $ \\ $ \pm .101$}                & \makecell{$.223 $ \\ $ \pm .031$}   &   \makecell{$.358 $ \\ $ \pm .043$}      & \makecell{\underline{$.100 $} \\ $ \pm .030$}                       & \makecell{\underline{$.617 $} \\ $ \pm .144$}          & \makecell{$.467 $ \\ $ \pm .148$}    & \makecell{$\mathbf{.277} $ \\ $ \pm .061$} \\ \addlinespace[3pt]
FFCM-avg1       &Yes      & \makecell{$.148 $ \\ $ \pm .031$}          & \makecell{$.164 $ \\ $ \pm .030$}                     & \makecell{$.025 $ \\ $ \pm .007$}  &   \makecell{$.312 $ \\ $ \pm .035$}       & \makecell{$.096 $ \\ $ \pm .029$}                            & \makecell{$.470 $ \\ $ \pm .094$}          & \makecell{$.442 $ \\ $ \pm .112$}            & \makecell{$.254 $ \\ $ \pm .029$}          \\ \addlinespace[3pt]
FFCM-avg2       &Yes     & \makecell{$.336 $ \\ $ \pm .053$}          & \makecell{$.352 $ \\ $ \pm .038$}                   & \makecell{$.114 $ \\ $ \pm .011$}   &    \makecell{$\mathbf{.513} $ \\ $ \pm .028$}    & \makecell{$\mathbf{.102} $ \\ $ \pm .032$}                   & \makecell{$\mathbf{.720} $ \\ $ \pm .149$} & \makecell{$\mathbf{.645} $ \\ $ \pm .117$}  & \makecell{\underline{$.268 $} \\ $ \pm .057$}          \\ \addlinespace[3pt]
FedKmeans     &Yes     & \makecell{$\mathbf{.640} $ \\ $ \pm .035$} & \makecell{$\mathbf{.449} $ \\ $ \pm .027$}              & \makecell{$\mathbf{.285} $ \\ $ \pm .007$} & \makecell{\underline{$.437 $} \\ $ \pm .024$}& \makecell{$.098 $ \\ $ \pm .033$}                                 & \makecell{$.546 $ \\ $ \pm .143$}          & \makecell{\underline{$.492 $} \\ $ \pm .118$}        & \makecell{$.260 $ \\ $ \pm .015$}          \\ \midrule
DP-GMM (central)&No  & \makecell{\underline{$.115 $} \\ $ \pm .021$}  & \makecell{\underline{$.179 $} \\ $ \pm .011$}          & \makecell{\underline{$.120$} \\ $ \pm .015$}  &    \makecell{\underline{$.068$} \\ $ \pm .003$}    & \makecell{\underline{$.075 $} \\ $ \pm .023$}          & \makecell{$.326 $ \\ $ \pm .090$}          & \makecell{$.223 $ \\ $ \pm .065$}         & \makecell{\underline{$.255 $} \\ $ \pm .015$}          \\ \addlinespace[3pt]
AFCL  &No          & \makecell{$.038 $ \\ $ \pm .002$ }         & \makecell{$.035 $ \\ $ \pm .002$   }       & \makecell{$.089 $ \\ $ \pm .005$ }  &   \makecell{$.034 $ \\ $ \pm .002$ }    & \makecell{$.062 $ \\ $ \pm .020$ }            & \makecell{\underline{$.344 $} \\ $ \pm .108$ }         & \makecell{\underline{$.272 $} \\ $ \pm .079$ }       & \makecell{$.157 $ \\ $ \pm .036$ }         \\ \addlinespace[3pt]
FedGEM (\textbf{ours}) &No      & \makecell{$\mathbf{.452} $ \\ $ \pm .049$} & \makecell{$\mathbf{.287} $ \\ $ \pm .057$ }          & \makecell{$\mathbf{.285} $ \\ $ \pm .022$} &\makecell{$\mathbf{.286} $ \\ $ \pm .033$} & \makecell{$\mathbf{.138} $ \\ $ \pm .056$}        & \makecell{$\mathbf{.552} $ \\ $ \pm .129$} & \makecell{$\mathbf{.468} $ \\ $ \pm .117$}     & \makecell{$\mathbf{.335} $ \\ $ \pm .078$ }\\
\bottomrule
\end{tabular}
\end{table}

\begin{table}[ht]
\centering
\caption{Estimated number of clusters for algorithms with unknown $K$.}
\scriptsize
\label{tab:n_cl}
\begin{tabular}{lcccccccc}
\toprule Model    & MNIST                  & FMNIST                  & EMNIST     & CIFAR-10             & Abalone                   & Frog A                    & Frog B  & Waveform                 \\ \midrule
True $K$ & $10$                       & $10$                      & $47$          & 10         & $7$                        & $10$                        & $8$        & $3    $                    \\ \midrule
DP-GMM (central)  & \makecell{\underline{$110.20 $} \\ $ \pm 6.09$} &\makecell{ \underline{$86.34$} \\ $ \pm 5.61$} &\makecell{ \underline{$247.40 $} \\ $ \pm 23.46$ }& \makecell{ \underline{$364.03 $} \\ $ \pm 12.57$ } & \makecell{\underline{$15.46 $} \\ $ \pm 2.20$    }      &\makecell{ $29.46 $ \\ $ \pm 5.49$   }       &\makecell{ $24.38 $ \\ $ \pm 4.24$ } & \makecell{\underline{$6.84 $} \\ $ \pm 0.65$ }         \\\addlinespace[3pt]
AFCL     &\makecell{ $501.82$ \\ $ \pm 18.00$  }       &\makecell{ $501.37 $ \\ $ \pm 22.51$ }        & \makecell{$575.39$ \\ $ \pm 55.63$   }  &   \makecell{$502.97$ \\ $ \pm 19.70$   }  & \makecell{$25.00 $ \\ $ \pm 4.43$     }     &\makecell{ \underline{$28.42 $} \\ $ \pm 6.42$   }       & \makecell{\underline{$23.96 $} \\ $ \pm 4.26$     } & \makecell{$12.52 $ \\ $ \pm 1.11$ }        \\\addlinespace[3pt]
FedGEM (\textbf{ours})  &\makecell{ $\mathbf{13.63}$ \\ $ \pm 2.29$  }       & \makecell{$\mathbf{17.59}$ \\ $ \pm 6.61$  }       & \makecell{$\mathbf{58.67}$ \\ $ \pm 5.23$   } &   \makecell{$\mathbf{37.72}$ \\ $ \pm 13.60$   }   & \makecell{$\mathbf{12.14} $ \\ $ \pm 3.65$} & \makecell{$\mathbf{23.94} $ \\ $ \pm 9.99$ }&\makecell{ $\mathbf{20.00} $ \\ $ \pm 7.07$ }&\makecell{ $\mathbf{4.42} $ \\ $ \pm 1.40$}\\  \bottomrule
\end{tabular}
\end{table}

\subsection{Sensitivity}
This study evaluates the performance of our model as $R_{\text{min}}$ changes. This includes \textit{non-well-separated} settings, which violate the convergence conditions of the GMM in Section \ref{sec:gmm}. We also examine the sensitivity of our algorithm to its hyperparameter in Appendix \ref{app:sens}.

\textbf{Dataset.} The data used for this experiment is isotropic Gaussian clusters generated via the \texttt{make\_blobs} module in Python. We control $R_{\text{min}}$, requiring that the centers of at least two clusters in each dataset be $R_{\text{min}}$ apart. Moreover, we study three key settings: i) nominal: data is balanced across clients and clusters, ii) client imbalance: the data distribution across clients is $[40\%, 24\%, 16\%, 16\%, 4\%]$, and iii) cluster imbalance: the local data for each client is randomly distributed across the local clusters. For all settings we use $G=5$, $N_{\text{train}}=2500$, and $N_{\text{test}}=5000$.

\textbf{Baseline.} We compare our model to a centralized GMM trained via EM as the latter represents a strong benchmark. This allows us to quantify the effects of our model's federation and unknown $K$.

\textbf{Result.} Figure \ref{fig:res} illustrates that the performance for both models improves as $R_{\text{min}}$ increases. Notably, our proposed model achieves very close performance to the centralized GMM, and even outperforms it with $R_{\text{min}} \in \{1,2\}$ across all settings. This can be attributed to cluster heterogeneity across clients. That is, each client only has a subset of the total clusters, so each client's clustering problem is potentially easier than the centralized problem. This can cause each client to perform better individually than the centralized model, which is trained on all clusters. Indeed, the benefits gained from cluster heterogeneity in clustering problems were first observed by \cite{dennis2021}. 

It is worth noting that some of the settings explored in this study involve \textbf{overlapping clusters}, violating the \textit{well-separated} cluster assumption required for convergence. However, our model still achieves very competitive performance. This suggests that even when some assumptions are violated, our algorithm can still converge in practice to a well-performing model. Finally, we observe that our model's estimate $\widehat{K}^{\ast}$ across all experimental settings is very close to the true number of clusters $K$. While it tends to overestimate slightly in most settings, performance could potentially be further improved through better tuning of the final aggregation radius hyperparameter.

\begin{figure}[t]
\centering
\begin{subfigure}{\textwidth}
    \centering
    \includegraphics[width=0.97\textwidth]{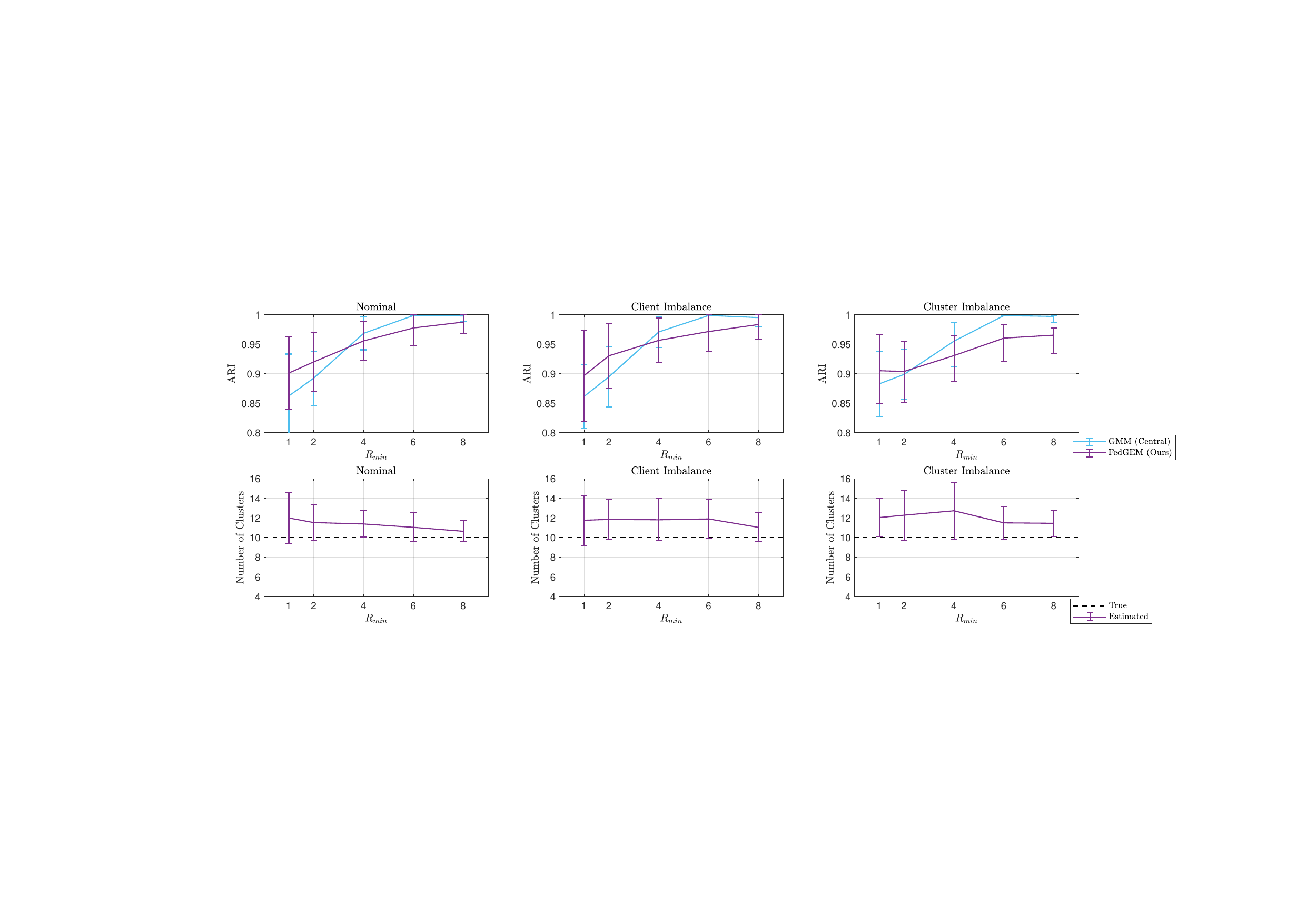}
    \caption{Adjusted rand index for our proposed \texttt{FedGEM} vs. a centralized GMM trained via EM.}
    \label{fig:ar1}
\end{subfigure}
\begin{subfigure}{\textwidth}
    \centering
    \includegraphics[width=0.97\textwidth]{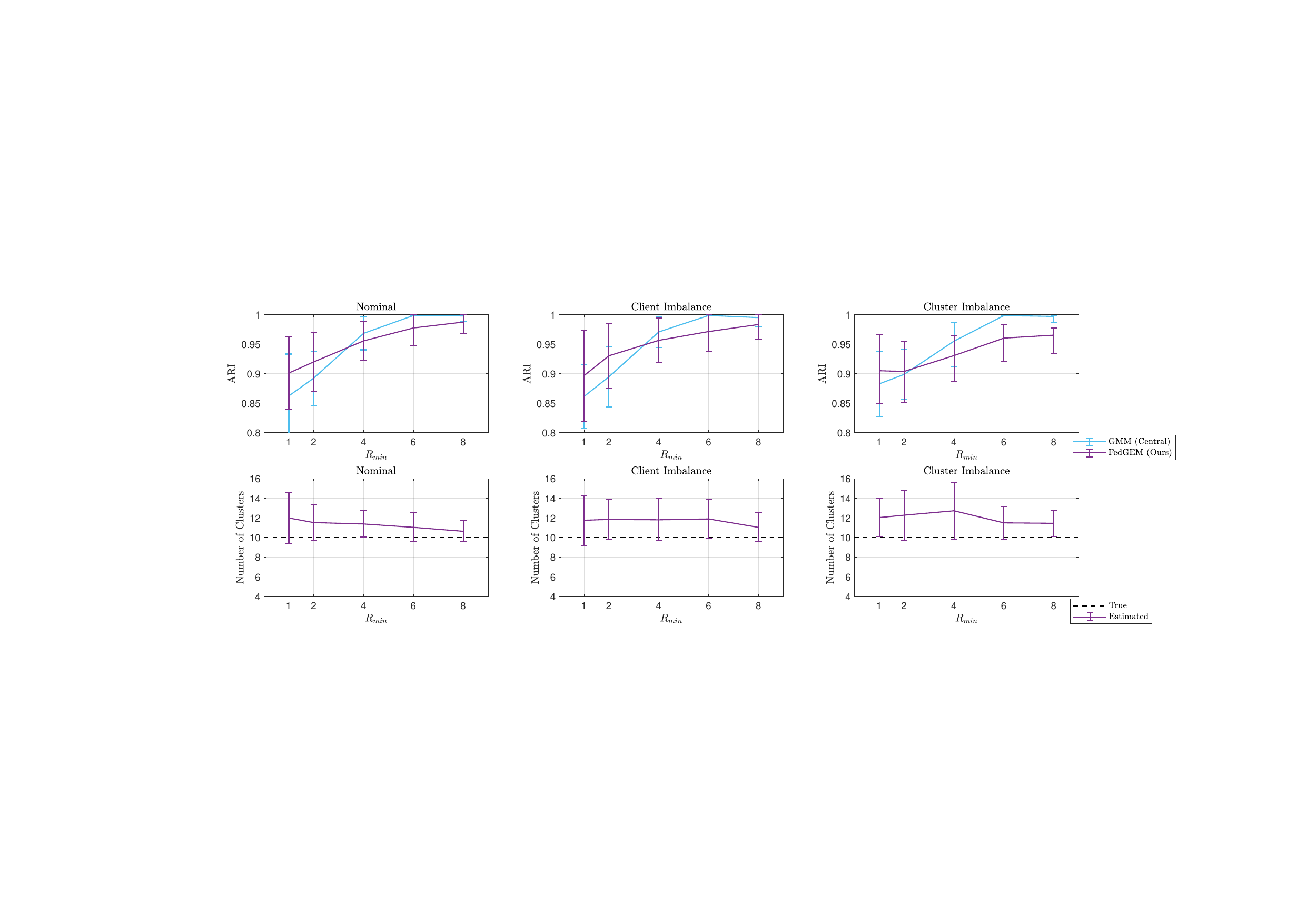}
    \caption{Number of clusters estimated by our proposed \texttt{FedGEM} vs. the true number of clusters.}
    \label{fig:ar2}
\end{subfigure}
\caption{Results of the sensitivity study.}
\label{fig:res}
\end{figure}
\section{Conclusions} \label{sec:conc}
We introduce \texttt{FedGEM}: a federated GEM algorithm for training mixture models with an unknown number of components, geared towards federated clustering for clients whose local cluster sets are heterogeneous but potentially overlapping. Our algorithm requires clients to perform local EM steps, and compute an uncertainty set centered at the maximizer corresponding to each component. These uncertainty sets are then shared with the server. The server leverages uncertainty set intersections to infer overlap between clients' clusters, allowing it to perform model parameter aggregation and to estimate the total number of unique clusters. We study theoretical aspects of our algorithm, where we prove probabilistic convergence under standard assumptions. Subsequently, we study our algorithm in the context of isotropic GMMs. To that end, we derive a tractable and convex reformulation of the problem used by each client to obtain the uncertainty sets, and we verify key assumptions required to prove convergence. We empirically demonstrate that our proposed algorithm outperforms existing ones through a series of numerical experiments utilizing synthetic and popular datasets. We provide a thorough discussion on limitations and future work in Appendix \ref{app:lim}.

\subsubsection*{Acknowledgments}
This work was funded by the National Aeronautics and Space Administration (NASA), Space Technology Research Institute (STRI) Habitats Optimized for Missions of Exploration (HOME) ‘SmartHab’ Project (grant No. 80NSSC19K1052).

\subsubsection*{Ethics Statement}
All software and datasets utilized in this work are used under proper licenses, as detailed in Appendix \ref{app:more_exp}. We do not release any data as part of our submission, and we provide full references to all datasets used.

\subsubsection*{Reproducibility Statement}
We have taken various steps to ensure the ease of reproducibility of both the theoretical and experimental aspects of this paper. On the theoretical side, we have provided full formal and complete proofs for all theoretical results, as well as all the required assumptions and a full description of the problem setting. More specifically, the detailed description of the problem setting is provided in Section \ref{sec:prob_set}, whereas the required assumptions are provided in throughout the main body of the paper. Additionally, supplementary theoretical results along with their formal proofs are provided in Appendix \ref{app:theor}. The proofs of all theoretical results presented in the main body of the paper are provided in Appendix \ref{app:proofs}. Finally, detailed explanations and interpretations of all assumptions and theoretical results are provided in Appendix \ref{app:disc}. On the experimental side, we have provided a summarized description of our experimental settings and results in Section \ref{sec:exp}, whereas we provided full detail on all aspects of the experiments as well as supplementary results in Appendices \ref{app:more_exp} and \ref{app:scal}. This includes all software and hardware details, all dataset license information and preprocessing details, as well as hyperparameter details. Finally, we have included all the code used to run our experiments along with detailed instructions in a .zip file in the supplementary materials.

\bibliography{references}

@article{LTSAschimmoller2001long,
  title={Long-term service agreements: Weighing the risks and rewards},
  author={Schimmoller, Brian K},
  journal={Power Engineering},
  volume={105},
  number={8},
  pages={28--28},
  year={2001},
  publisher={PennWell Publishing Corp.}
}

@article{LTSAthompson2003long,
  title={Long-term service agreements: Top 10 contractual pitfalls and how to avoid them--Part I.(Gas Turbines)},
  author={Thompson, Richard E and Yost, Jason B and others},
  journal={Power Engineering},
  volume={107},
  number={2},
  pages={56--59},
  year={2003},
  publisher={PennWell Publishing Corp.}
}

@article{dutta2023,
Author = {Dutta, Nabanita and Kaliannan, Palanisamy and Shanmugam, Paramasivam},
Title = {SVM Algorithm for Vibration Fault Diagnosis in Centrifugal Pump},
Journal = {INTELLIGENT AUTOMATION AND SOFT COMPUTING},
Year = {2023},
Volume = {35},
Number = {3},
Pages = {2997-3020},
DOI = {10.32604/iasc.2023.028704},
ISSN = {1079-8587},
EISSN = {2326-005X},
ResearcherID-Numbers = {Dutta, Nabanita/AAA-6229-2019},
Unique-ID = {WOS:000861046500008},
}

@article{yang2025,
Author = {Yang, Shuming and Xie, Changlin and Cheng, Yuqiang and Wang, Biao and
   Ma, Xunyi and Wang, Zinuo},
Title = {Data-Driven Fault Diagnosis for Rolling Bearings Based on Machine
   Learning and Multisensor Information Fusion},
Journal = {IEEE SENSORS JOURNAL},
Year = {2025},
Volume = {25},
Number = {2},
Pages = {3452-3464},
Month = {JAN 15},
DOI = {10.1109/JSEN.2024.3499365},
ISSN = {1530-437X},
EISSN = {1558-1748},
Unique-ID = {WOS:001397852000034},
}

@article{lei2020,
Author = {Lei, Yaguo and Yang, Bin and Jiang, Xinwei and Jia, Feng and Li, Naipeng
   and Nandi, Asoke K.},
Title = {Applications of machine learning to machine fault diagnosis: A review
   and roadmap},
Journal = {MECHANICAL SYSTEMS AND SIGNAL PROCESSING},
Year = {2020},
Volume = {138},
Month = {APR},
DOI = {10.1016/j.ymssp.2019.106587},
Article-Number = {106587},
ISSN = {0888-3270},
EISSN = {1096-1216},
ResearcherID-Numbers = {Lei, Yaguo/N-4891-2014
   Yang, Bin/JGC-7746-2023
   Nandi, Asoke/C-4572-2011
   Li, Naipeng/AFF-2789-2022
   },
ORCID-Numbers = {Yang, Bin/0000-0002-3015-3580
   Li, Naipeng/0000-0003-0678-8161},
Unique-ID = {WOS:000517855800037},
}

@InProceedings{mcmahan2017,
  title = 	 {{Communication-Efficient Learning of Deep Networks from Decentralized Data}},
  author = 	 {McMahan, Brendan and Moore, Eider and Ramage, Daniel and Hampson, Seth and Arcas, Blaise Aguera y},
  booktitle = 	 {Proceedings of the 20th International Conference on Artificial Intelligence and Statistics},
  pages = 	 {1273--1282},
  year = 	 {2017},
  editor = 	 {Singh, Aarti and Zhu, Jerry},
  volume = 	 {54},
  series = 	 {Proceedings of Machine Learning Research},
  month = 	 {20--22 Apr},
  publisher =    {PMLR},
  pdf = 	 {http://proceedings.mlr.press/v54/mcmahan17a/mcmahan17a.pdf},
  url = 	 {https://proceedings.mlr.press/v54/mcmahan17a.html}
}

@misc{konecny2016e,
      title={Federated Optimization: Distributed Machine Learning for On-Device Intelligence}, 
      author={Jakub Konečný and H. Brendan McMahan and Daniel Ramage and Peter Richtárik},
      year={2016},
      eprint={1610.02527},
      archivePrefix={arXiv},
      primaryClass={cs.LG},
      url={https://arxiv.org/abs/1610.02527}, 
}

@article{li2020,
  title={Federated optimization in heterogeneous networks},
  author={Li, Tian and Sahu, Anit Kumar and Zaheer, Manzil and Sanjabi, Maziar and Talwalkar, Ameet and Smith, Virginia},
  journal={Proceedings of Machine learning and systems},
  volume={2},
  pages={429--450},
  year={2020}
}

@inproceedings{wang2020b,
author = {Wang, Jianyu and Liu, Qinghua and Liang, Hao and Joshi, Gauri and Poor, H. Vincent},
title = {Tackling the objective inconsistency problem in heterogeneous federated optimization},
year = {2020},
isbn = {9781713829546},
publisher = {Curran Associates Inc.},
address = {Red Hook, NY, USA},
booktitle = {Proceedings of the 34th International Conference on Neural Information Processing Systems},
articleno = {638},
numpages = {13},
location = {Vancouver, BC, Canada},
series = {NIPS '20}
}

@InProceedings{karimireddy20a,
  title = 	 {{SCAFFOLD}: Stochastic Controlled Averaging for Federated Learning},
  author =       {Karimireddy, Sai Praneeth and Kale, Satyen and Mohri, Mehryar and Reddi, Sashank and Stich, Sebastian and Suresh, Ananda Theertha},
  booktitle = 	 {Proceedings of the 37th International Conference on Machine Learning},
  pages = 	 {5132--5143},
  year = 	 {2020},
  editor = 	 {III, Hal Daumé and Singh, Aarti},
  volume = 	 {119},
  series = 	 {Proceedings of Machine Learning Research},
  month = 	 {13--18 Jul},
  publisher =    {PMLR},
  pdf = 	 {http://proceedings.mlr.press/v119/karimireddy20a/karimireddy20a.pdf},
  url = 	 {https://proceedings.mlr.press/v119/karimireddy20a.html}
}

@article{arivazhagan2019,
  title={Federated learning with personalization layers},
  author={Arivazhagan, Manoj Ghuhan and Aggarwal, Vinay and Singh, Aaditya Kumar and Choudhary, Sunav},
  journal={arXiv preprint arXiv:1912.00818},
  year={2019}
}

@inproceedings{lee2023,
 author = {Lee, Royson and Kim, Minyoung and Li, Da and Qiu, Xinchi and Hospedales, Timothy and Huszar, Ferenc and Lane, Nicholas},
 booktitle = {Advances in Neural Information Processing Systems},
 editor = {A. Oh and T. Naumann and A. Globerson and K. Saenko and M. Hardt and S. Levine},
 pages = {14818--14836},
 publisher = {Curran Associates, Inc.},
 title = {FedL2P: Federated Learning to Personalize},
 url = {https://proceedings.neurips.cc/paper_files/paper/2023/file/2fb57276bfbaf1b832d7bfcba36bb41c-Paper-Conference.pdf},
 volume = {36},
 year = {2023}
}

@inproceedings{marfoq2021,
 author = {Marfoq, Othmane and Neglia, Giovanni and Bellet, Aur\'{e}lien and Kameni, Laetitia and Vidal, Richard},
 booktitle = {Advances in Neural Information Processing Systems},
 editor = {M. Ranzato and A. Beygelzimer and Y. Dauphin and P.S. Liang and J. Wortman Vaughan},
 pages = {15434--15447},
 publisher = {Curran Associates, Inc.},
 title = {Federated Multi-Task Learning under a Mixture of Distributions},
 url = {https://proceedings.neurips.cc/paper_files/paper/2021/file/82599a4ec94aca066873c99b4c741ed8-Paper.pdf},
 volume = {34},
 year = {2021}
}

@inproceedings{di2021,
 author = {Dieuleveut, Aymeric and Fort, Gersende and Moulines, Eric and Robin, Genevi\`{e}ve},
 booktitle = {Advances in Neural Information Processing Systems},
 editor = {M. Ranzato and A. Beygelzimer and Y. Dauphin and P.S. Liang and J. Wortman Vaughan},
 pages = {29553--29566},
 publisher = {Curran Associates, Inc.},
 title = {Federated-EM with heterogeneity mitigation and variance reduction},
 url = {https://proceedings.neurips.cc/paper_files/paper/2021/file/f740c8d9c193f16d8a07d3a8a751d13f-Paper.pdf},
 volume = {34},
 year = {2021}
}

@inproceedings{wu2023,
author = {Wu, Yue and Zhang, Shuaicheng and Yu, Wenchao and Liu, Yanchi and Gu, Quanquan and Zhou, Dawei and Chen, Haifeng and Cheng, Wei},
title = {Personalized federated learning under mixture of distributions},
year = {2023},
publisher = {JMLR.org},
booktitle = {Proceedings of the 40th International Conference on Machine Learning},
articleno = {1577},
numpages = {20},
location = {Honolulu, Hawaii, USA},
series = {ICML'23}
}

@inproceedings{dennis2021,
  title={Heterogeneity for the win: One-shot federated clustering},
  author={Dennis, Don Kurian and Li, Tian and Smith, Virginia},
  booktitle={International conference on machine learning},
  pages={2611--2620},
  year={2021},
  organization={PMLR}
}

@misc{stallmann2022,
      title={Towards Federated Clustering: A Federated Fuzzy $c$-Means Algorithm (FFCM)}, 
      author={Morris Stallmann and Anna Wilbik},
      year={2022},
      eprint={2201.07316},
      archivePrefix={arXiv},
      primaryClass={cs.LG},
      url={https://arxiv.org/abs/2201.07316}, 
}

@misc{garst2024,
      title={Federated K-means Clustering}, 
      author={Swier Garst and Marcel Reinders},
      year={2024},
      eprint={2310.01195},
      archivePrefix={arXiv},
      primaryClass={cs.LG},
      url={https://arxiv.org/abs/2310.01195}, 
}

@ARTICLE{barcena2024,
  author={Bárcena, José Luis Corcuera and Marcelloni, Francesco and Renda, Alessandro and Bechini, Alessio and Ducange, Pietro},
  journal={IEEE Transactions on Artificial Intelligence}, 
  title={Federated $c$-Means and Fuzzy $c$-Means Clustering Algorithms for Horizontally and Vertically Partitioned Data}, 
  year={2024},
  volume={5},
  number={12},
  pages={6426-6441},
  keywords={Clustering algorithms;Distributed databases;Fuzzy logic;Partitioning algorithms;Servers;Data privacy;Data models;Federated clustering;federated learning (FL);fuzzy c-means (FCM);k-means},
  doi={10.1109/TAI.2024.3426408}}

@article{YFANTIS2025,
title = {Federated K-means clustering via dual decomposition-based distributed optimization},
journal = {Franklin Open},
volume = {10},
pages = {100204},
year = {2025},
issn = {2773-1863},
doi = {https://doi.org/10.1016/j.fraope.2024.100204},
url = {https://www.sciencedirect.com/science/article/pii/S2773186324001348},
author = {Vassilios Yfantis and Achim Wagner and Martin Ruskowski},
keywords = {Distributed optimization, Federated learning, Dual decomposition, Clustering}
}

@article{Zhang2025, title={Asynchronous Federated Clustering with Unknown Number of Clusters}, volume={39}, url={https://ojs.aaai.org/index.php/AAAI/article/view/34429}, DOI={10.1609/aaai.v39i21.34429}, number={21}, journal={Proceedings of the AAAI Conference on Artificial Intelligence}, author={Zhang, Yunfan and Zhang, Yiqun and Lu, Yang and Li, Mengke and Chen, Xi and Cheung, Yiu-ming}, year={2025}, month={Apr.}, pages={22695-22703} }

@article{antoniak1974,
 ISSN = {00905364, 21688966},
 URL = {http://www.jstor.org/stable/2958336},
 author = {Charles E. Antoniak},
 journal = {The Annals of Statistics},
 number = {6},
 pages = {1152--1174},
 publisher = {Institute of Mathematical Statistics},
 title = {Mixtures of Dirichlet Processes with Applications to Bayesian Nonparametric Problems},
 urldate = {2025-05-08},
 volume = {2},
 year = {1974}
}

@inproceedings{ester1996,
author = {Ester, Martin and Kriegel, Hans-Peter and Sander, J\"{o}rg and Xu, Xiaowei},
title = {A density-based algorithm for discovering clusters in large spatial databases with noise},
year = {1996},
publisher = {AAAI Press},
booktitle = {Proceedings of the Second International Conference on Knowledge Discovery and Data Mining},
pages = {226–231},
numpages = {6},
keywords = {handling nlj4-275oise, efficiency on large spatial databases, clustering algorithms, arbitrary shape of clusters},
location = {Portland, Oregon},
series = {KDD'96}
}

@inproceedings{burges2013,
 author = {Hughes, Michael C and Sudderth, Erik},
 booktitle = {Advances in Neural Information Processing Systems},
 editor = {C.J. Burges and L. Bottou and M. Welling and Z. Ghahramani and K.Q. Weinberger},
 pages = {},
 publisher = {Curran Associates, Inc.},
 title = {Memoized Online Variational Inference for Dirichlet Process Mixture Models},
 url = {https://proceedings.neurips.cc/paper_files/paper/2013/file/d490d7b4576290fa60eb31b5fc917ad1-Paper.pdf},
 volume = {26},
 year = {2013}
}

@INPROCEEDINGS{ronen2022,
  author={Ronen, Meitar and Finder, Shahaf E. and Freifeld, Oren},
  booktitle={2022 IEEE/CVF Conference on Computer Vision and Pattern Recognition (CVPR)}, 
  title={DeepDPM: Deep Clustering With an Unknown Number of Clusters}, 
  year={2022},
  volume={},
  number={},
  pages={9851-9860},
  keywords={Deep learning;Training;Computer vision;Codes;Statistical analysis;Scalability;Computational modeling;Self-& semi-& meta- Deep learning architectures and techniques; Statistical methods},
  doi={10.1109/CVPR52688.2022.00963}}

@article{yan2017,
  title={Convergence of gradient EM on multi-component mixture of Gaussians},
  author={Yan, Bowei and Yin, Mingzhang and Sarkar, Purnamrita},
  journal={Advances in Neural Information Processing Systems},
  volume={30},
  year={2017}
}

@misc{balakrishnan2014,
      title={Statistical guarantees for the EM algorithm: From population to sample-based analysis}, 
      author={Sivaraman Balakrishnan and Martin J. Wainwright and Bin Yu},
      year={2014},
      eprint={1408.2156},
      archivePrefix={arXiv},
      primaryClass={math.ST},
      url={https://arxiv.org/abs/1408.2156}, 
}

@book{boyd2004,
  title={Convex optimization},
  author={Boyd, Stephen P and Vandenberghe, Lieven},
  year={2004},
  publisher={Cambridge university press}
}

@article{Wu1983,
 ISSN = {00905364, 21688966},
 URL = {http://www.jstor.org/stable/2240463},
 author = {C. F. Jeff Wu},
 journal = {The Annals of Statistics},
 number = {1},
 pages = {95--103},
 publisher = {Institute of Mathematical Statistics},
 title = {On the Convergence Properties of the EM Algorithm},
 urldate = {2025-03-28},
 volume = {11},
 year = {1983}
}

@book{nn1994,
author = {Nesterov, Yurii and Nemirovskii, Arkadii},
title = {Interior-Point Polynomial Algorithms in Convex Programming},
publisher = {Society for Industrial and Applied Mathematics},
year = {1994},
doi = {10.1137/1.9781611970791},
address = {},
edition   = {},
URL = {https://epubs.siam.org/doi/abs/10.1137/1.9781611970791},
eprint = {https://epubs.siam.org/doi/pdf/10.1137/1.9781611970791}
}

@misc{abalone_1,
  author       = {Nash, Warwick and Sellers, Tracy and Talbot, Simon and Cawthorn, Andrew and Ford, Wes},
  title        = {{Abalone}},
  year         = {1994},
  howpublished = {UCI Machine Learning Repository},
  note         = {{DOI}: https://doi.org/10.24432/C55C7W}
}

@misc{anuran,
  author       = {Colonna, Juan and Nakamura, Eduardo and Cristo, Marco and Gordo, Marcelo},
  title        = {{Anuran Calls (MFCCs)}},
  year         = {2015},
  howpublished = {UCI Machine Learning Repository},
  note         = {{DOI}: https://doi.org/10.24432/C5CC9H}
}

@misc{wf,
  author       = {Breiman, L. and Stone, C.J.},
  title        = {{Waveform Database Generator (Version 1)}},
  year         = {1984},
  howpublished = {UCI Machine Learning Repository},
  note         = {{DOI}: https://doi.org/10.24432/C5CS3C}
}

@article{ROUSSEEUW198753,
title = {Silhouettes: A graphical aid to the interpretation and validation of cluster analysis},
journal = {Journal of Computational and Applied Mathematics},
volume = {20},
pages = {53-65},
year = {1987},
issn = {0377-0427},
doi = {https://doi.org/10.1016/0377-0427(87)90125-7},
url = {https://www.sciencedirect.com/science/article/pii/0377042787901257},
author = {Peter J. Rousseeuw},
keywords = {Graphical display, cluster analysis, clustering validity, classification}
}

@article{hubert1985,
  title={Comparing partitions},
  author={Hubert, Lawrence and Arabie, Phipps},
  journal={Journal of classification},
  volume={2},
  pages={193--218},
  year={1985},
  publisher={Springer}
}

@article{mnist,
  title={MNIST handwritten digit database},
  author={LeCun, Yann and Cortes, Corinna and Burges, CJ},
  journal={ATT Labs [Online]. Available: http://yann.lecun.com/exdb/mnist},
  volume={2},
  year={2010}
}

@article{mnist_emb,
    author = {{Bickford Smith}, Freddie and Foster, Adam and Rainforth, Tom},
    year = {2024},
    title = {Making better use of unlabelled data in {Bayesian} active learning},
    journal = {International Conference on Artificial Intelligence and Statistics},
}

@article{fmnist,
  author    = {Han Xiao and
               Kashif Rasul and
               Roland Vollgraf},
  title     = {Fashion-MNIST: a Novel Image Dataset for Benchmarking Machine Learning
               Algorithms},
  journal   = {CoRR},
  volume    = {abs/1708.07747},
  year      = {2017},
  url       = {http://arxiv.org/abs/1708.07747},
  archivePrefix = {arXiv},
  eprint    = {1708.07747},
  timestamp = {Mon, 13 Aug 2018 16:47:27 +0200},
  biburl    = {https://dblp.org/rec/bib/journals/corr/abs-1708-07747},
  bibsource = {dblp computer science bibliography, https://dblp.org}
}

@misc{cifar10,
  title={Learning multiple layers of features from tiny images},
  author={Krizhevsky, Alex and Hinton, Geoffrey and others},
  year={2009},
  publisher={Toronto, ON, Canada}
}

@article{cifar10-emb,
    author = {{Bickford Smith}, Freddie and Foster, Adam and Rainforth, Tom},
    year = {2024},
    title = {Making better use of unlabelled data in {Bayesian} active learning},
    journal = {International Conference on Artificial Intelligence and Statistics},
}

@inproceedings{emnist,
  title={EMNIST: Extending MNIST to handwritten letters},
  author={Cohen, Gregory and Afshar, Saeed and Tapson, Jonathan and Van Schaik, Andre},
  booktitle={2017 international joint conference on neural networks (IJCNN)},
  pages={2921--2926},
  year={2017},
  organization={IEEE}
}

@article{dwork2014,
  title={The algorithmic foundations of differential privacy},
  author={Dwork, Cynthia and Roth, Aaron and others},
  journal={Foundations and trends{\textregistered} in theoretical computer science},
  volume={9},
  number={3--4},
  pages={211--407},
  year={2014},
  publisher={Now Publishers, Inc.}
}

@article{bentley75,
author = {Bentley, Jon Louis},
title = {Multidimensional binary search trees used for associative searching},
year = {1975},
issue_date = {Sept. 1975},
publisher = {Association for Computing Machinery},
address = {New York, NY, USA},
volume = {18},
number = {9},
issn = {0001-0782},
url = {https://doi.org/10.1145/361002.361007},
doi = {10.1145/361002.361007},
journal = {Commun. ACM},
month = sep,
pages = {509–517},
numpages = {9},
keywords = {associative retrieval, attribute, binary search trees, binary tree insertion, information retrieval system, intersection queries, key, nearest neighbor queries, partial match queries}
}

@article{friedman77,
author = {Friedman, Jerome H. and Bentley, Jon Louis and Finkel, Raphael Ari},
title = {An Algorithm for Finding Best Matches in Logarithmic Expected Time},
year = {1977},
issue_date = {Sept. 1977},
publisher = {Association for Computing Machinery},
address = {New York, NY, USA},
volume = {3},
number = {3},
issn = {0098-3500},
url = {https://doi.org/10.1145/355744.355745},
doi = {10.1145/355744.355745},
journal = {ACM Trans. Math. Softw.},
month = sep,
pages = {209–226},
numpages = {18}
}
\bibliographystyle{iclr2026_conference}

\newpage
\appendix
\section*{\LARGE A Federated Generalized Expectation- Maximization Algorithm for Mixture Models with an Unknown Number of Components (Appendices)}
\addtocontents{toc}{\protect\setcounter{tocdepth}{3}}

{\small \tableofcontents}
\newpage

\section{Supplementary Pseudo-code} \label{app:code}
In this section we provide detailed pseudo-code for our \texttt{FedGEM} algorithm, as well as detailed pseudo-code for the server computations both in the collaborative training and final aggregation phases.

\subsection{Detailed \texttt{FedGEM} Pseuo-code} \label{app:pseudo_fedgem}

\begin{algorithm}[ht!]
    \caption{\texttt{FedGEM}}
    \label{alg:FedGEM}
    \small
    \textbf{Input}: Number of communication rounds $T$, Number of local steps $S_g$, Final aggregation radius $\varepsilon_{k_g}^{\text{final}}$ and fixed weights $\pi_{k_g} \ \forall k_g \in [K_g], \forall g \in [G]$ \\
    \textbf{Output}: Final $\boldsymbol{\theta}_{k_g}^{\text{final}}$ for all components $k_g \in [K_g]$ and clients $g \in [G]$, inferred $\widehat{K}^{\ast}$
    \begin{algorithmic}[1]
    \STATE \textbf{\textit{INITIALIZATION}}
        \FOR{clients $g=1,\dots,G$ in parallel}
            \STATE Initialize $\boldsymbol{\theta}_{k_g}^{(0)}$ for all $k_g \in [K_g]$ via \texttt{k-means++}.
        \ENDFOR

    \STATE \textbf{\textit{COLLABORATIVE TRAINING}}
        \FOR{round $t=0,\dots,T$}
        \STATE \textbf{\textit{Clients}}
            \FOR{clients $g=1,\dots,G$ in parallel}
            \STATE $\boldsymbol{\theta}_{k_g}^{(t-1,0)} \leftarrow \boldsymbol{\theta}_{k_g}^{(t-1)}$ for all $k_g \in [K_g]$
                \FOR{step $s_g=1,\dots,S_g$}
                    \STATE Compute $\gamma_{k_g}(\widehat{\boldsymbol{x}}_{n_g},\boldsymbol{\theta}_{k_g}^{(t-1,s_g-1)})$ via E-step in \eqref{eq:estep} for all $k_g \in [K_g]$ and samples $n_g \in [N_g]$.

                    \STATE Compute $\widehat{M}_{k_g}(\boldsymbol{\theta}_{k_g}^{(t-1,s_g-1)})$ via M-step in \eqref{eq:mstep} for all $k_g \in [K_g]$.

                    \STATE Update $\boldsymbol{\theta}_{k_g}^{(t,s_g)} \leftarrow \widehat{M}_{k_g}(\boldsymbol{\theta}_{k_g}^{(t-1,s_g-1)})$ for all $k_g \in [K_g]$.
                \ENDFOR

                \STATE Solve for $\varepsilon_{k_g}^{(t)} \leftarrow \argmax_{\varepsilon_{k_g} \in \mathbb{R}} J_{k_g}(\boldsymbol{\theta}_{k_g}^{(t,S_g)})$ via problem in \eqref{eq:rad_prob} for all $k_g \in [K_g]$.

                \IF{$t < T$}
                    \STATE Transmit tuple $(\boldsymbol{\theta}_{k_g}^{(t,S_g)},\varepsilon_{k_g}^{(t)})$ for all $k_g \in [K_g]$ to central server.
                \ELSE
                    \STATE Transmit tuple $(\boldsymbol{\theta}_{k_g}^{(t,S_g)},\varepsilon_{k_g}^{\text{final}})$ for all $k_g \in [K_g]$ to central server.
                \ENDIF
            \ENDFOR

        \STATE \textbf{\textit{Server}}
            \IF{t < T}
                \STATE Update $(\boldsymbol{\theta}_{k_g}^{(t)},\widehat{K}^{(t)}) \leftarrow$ \texttt{server\_update}$(\boldsymbol{\theta}_{k_g}^{(t,S_g)},\varepsilon_{k_g}^{(t)})$ for all $k_g \in [K_g]$ and $g \in [G]$ via Algorithm \ref{alg:server_update} in Appendix \ref{app:server}.

                \STATE Transmit $\boldsymbol{\theta}_{k_g}^{(t+1)}$ to clients for all $k_g \in [K_g]$ and and $g \in [G]$.

            \ELSE
            \STATE \textbf{\textit{FINAL AGGREGATION}}
                \STATE Compute $(\boldsymbol{\theta}_{k_g}^{\text{final}},\widehat{K}^{\ast}) \leftarrow$ \texttt{server\_final\_aggregation}$(\boldsymbol{\theta}_{k_g}^{(t,S_g)},\varepsilon_{k_g}^{(t)})$ for all $k_g \in [K_g]$ and $g \in [G]$ via Algorithm \ref{alg:server_final_agg} in Appendix \ref{app:server}.

                \STATE Transmit $\boldsymbol{\theta}_{k_g}^{\text{final}}$ to clients for all $k_g \in [K_g]$ and and $g \in [G]$.

            \ENDIF
        \ENDFOR

    \end{algorithmic}
\end{algorithm}

\subsection{Detailed Server Computations Pseudo-code} \label{app:server}
\begin{algorithm}[H]
    \caption{\texttt{server\_update}$(\boldsymbol{\theta}_{k_g},\varepsilon_{k_g})$}
    \label{alg:server_update}
    \small
    \textbf{Input}: $\boldsymbol{\theta}_{k_g}$ and $\varepsilon_{k_g}$ for all clients $g \in [G]$ and components $k_g \in [K_g]$ at the $g^{th}$ client \\
    \textbf{Output}: Updated $\boldsymbol{\theta}_{k_g}^{\prime}$ for all clients $g \in [G]$ and components $k_g \in [K_g]$ at client $g$, $\widehat{K}^{\ast}$
    \begin{algorithmic}[1]
    \STATE Initialize $\widehat{K} = 0$.
    \STATE Initialize set $\mathcal{T}_{k_g}$ containing only $\boldsymbol{\theta}_{k_g}$ for each $g \in [G]$ and $k_g \in [K_g]$.
    \STATE Initialize \texttt{comp}$(g,k_g)$\texttt{.assigned} $\leftarrow$ \texttt{False} for all $g \in [G]$, $k_g \in [K_g]$.
    \STATE Initialize \texttt{comp}$(g,k_g)$\texttt{.supercluster} $\leftarrow$ \texttt{Null} for all $g \in [G]$, $k_g \in [K_g]$.
    \FOR{client $g_1=1, \dots, G$}
        \FOR{component $k_{g_1}=1,\dots,K_{g_1}$}
            \FOR{client $g_2 = g_1, \dots, G$}
                \FOR{component $k_{g_2}= 1, \dots, K_{g_2}$}
                    \IF {$|| \widehat{M}_{k_g}(\boldsymbol{\theta}_g^{\prime}) - \widehat{M}_{k_{g^{\prime}}}(\boldsymbol{\theta}_{g^{\prime}}^{\prime})||_2 \leq \sqrt{\varepsilon_{k_g}} + \sqrt{\varepsilon_{k_{g^{\prime}}}}$}
                        \STATE $\boldsymbol{\nu}^{\ast} \leftarrow \widehat{M}_{k_g}(\boldsymbol{\theta}_g^{\prime}) + \texttt{clip} \left ( 0.5, 1 - \frac{\sqrt{\varepsilon_{k_{g^{\prime}}}}}{w},\frac{\sqrt{\varepsilon_{k_g}}}{w} \right ) \left ( \widehat{M}_{k_g}(\boldsymbol{\theta}_g^{\prime}) - \widehat{M}_{k_g}(\boldsymbol{\theta}_{g^{\prime}}^{\prime}) \right )$.
                        \STATE $\mathcal{T}_{k_{g_1}} \leftarrow \mathcal{T}_{k_{g_1}} \cup \boldsymbol{\nu}^{\ast}$.
                        \STATE $\mathcal{T}_{k_{g_2}} \leftarrow \mathcal{T}_{k_{g_2}} \cup \boldsymbol{\nu}^{\ast}$.

                        \IF {\texttt{comp}$(g_1,k_{g_1})$\texttt{.assigned}\texttt{ = False \&} \texttt{comp}$(g_2,k_{g_2})$\texttt{.assigned}\texttt{ = True}}
                            \STATE \texttt{comp}$(g_1,k_{g_1})$\texttt{.supercluster} $\leftarrow$ \texttt{comp}$(g_2,k_{g_2})$\texttt{.supercluster}.
                            \STATE \texttt{comp}$(g_1,k_{g_1})$\texttt{.assigned} $\leftarrow$ \texttt{True}.
                        \ELSIF{\texttt{comp}$(g_1,k_{g_1})$\texttt{.assigned}\texttt{ = True \&} \texttt{comp}$(g_2,k_{g_2})$\texttt{.assigned}\texttt{ = False}}
                            \STATE \texttt{comp}$(g_2,k_{g_2})$\texttt{.supercluster} $\leftarrow$ \texttt{comp}$(g_1,k_{g_1})$\texttt{.supercluster}.
                            \STATE \texttt{comp}$(g_2,k_{g_2})$\texttt{.assigned} $\leftarrow$ \texttt{True}.
                        \ELSIF{\texttt{comp}$(g_1,k_{g_1})$\texttt{.assigned}\texttt{ = True \&} \texttt{comp}$(g_2,k_{g_2})$\texttt{.assigned}\texttt{ = True}}
                            \IF{\texttt{comp}$(g_1,k_{g_1})$\texttt{.supercluster}\texttt{ != } \texttt{comp}$(g_2,k_{g_2})$\texttt{.supercluster}}
                                \STATE \texttt{comp}$(g^{\prime},k_{g^{\prime}})$\texttt{.supercluster} $ \leftarrow$ \texttt{comp}$(g_1,k_{g_1})$\texttt{.supercluster} $\forall k_{g^{\prime}}$ such that \texttt{comp}$(g^{\prime},k_{g^{\prime}})$\texttt{.supercluster} \texttt{=} \texttt{comp}$(g_2,k_{g_2})$\texttt{.supercluster}.
                                
                                \STATE $\widehat{K} \leftarrow \widehat{K} - 1$

                                \STATE Reorganize supercluster numbers for all components.
                            \ENDIF
                        \ELSIF{\texttt{comp}$(g_1,k_{g_1})$\texttt{.assigned}\texttt{ = False \&} \texttt{comp}$(g_2,k_{g_2})$\texttt{.assigned}\texttt{ = False}}
                            \STATE $\widehat{K} \leftarrow \widehat{K} + 1$.
                            \STATE \texttt{comp}$(g_1,k_{g_1})$\texttt{.supercluster} $ \leftarrow \widehat{K}$.
                            \STATE \texttt{comp}$(g_2,k_{g_2})$\texttt{.supercluster} $\leftarrow \widehat{K}$.
                            \STATE \texttt{comp}$(g_1,k_{g_1})$\texttt{.assigned} $ \leftarrow$ \texttt{True}.
                            \STATE \texttt{comp}$(g_2,k_{g_2})$\texttt{.assigned} $ \leftarrow$ \texttt{True}.
                        \ENDIF
                    \ENDIF
                \ENDFOR
            \ENDFOR
            \IF{\texttt{comp}$(g_1,k_{g_1})$\texttt{.assigned}\texttt{ = False}}
                \STATE $\widehat{K} \leftarrow \widehat{K} + 1$.
                \STATE \texttt{comp}$(g_1,k_{g_1})$\texttt{.supercluster} $\leftarrow \widehat{K}$.
                \STATE \texttt{comp}$(g_1,k_{g_1})$\texttt{.assigned} $ \leftarrow$ \texttt{True}.
            \ENDIF
        \ENDFOR
    \ENDFOR

    \FOR{client $g=1,\dots,G$}
        \FOR{component $k_g=1,\dots,K_g$}
            \STATE $\boldsymbol{\theta}_{k_g}^{\prime} \leftarrow $ aggregate of elements in $\mathcal{T}_{k_g}$.
        \ENDFOR
    \ENDFOR
    \STATE $\widehat{K}^{\ast} \leftarrow \widehat{K}$

    \end{algorithmic}
\end{algorithm}

\begin{algorithm}[H]
    \caption{\texttt{server\_final\_aggregation}$(\boldsymbol{\theta}_{k_g},\varepsilon_{k_g}^{\text{final}})$}
    \label{alg:server_final_agg}
    \small
    \textbf{Input}: $\boldsymbol{\theta}_{k_g}$ and $\varepsilon_{k_g}^{\text{final}}$ for all clients $g \in [G]$ and components $k_g \in [K_g]$ at the $g^{th}$ client \\
    \textbf{Output}: Final $\boldsymbol{\theta}_{k_g}^{\text{final}}$ for all clients $g \in [G]$ and components $k_g \in [K_g]$ at client $g$, $\widehat{K}^{\ast}$
    \begin{algorithmic}[1]
    \STATE Initialize $\widehat{K} = 0$.
    \STATE Initialize set $\mathcal{T}_{k_g}$ containing only $\boldsymbol{\theta}_{k_g}$ for each $g \in [G]$ and $k_g \in [K_g]$.
    \STATE Initialize \texttt{comp}$(g,k_g)$\texttt{.assigned} $\leftarrow$ \texttt{False} for all $g \in [G]$, $k_g \in [K_g]$.
    \STATE Initialize \texttt{comp}$(g,k_g)$\texttt{.supercluster} $\leftarrow$ \texttt{Null} for all $g \in [G]$, $k_g \in [K_g]$.
    \FOR{client $g_1=1, \dots, G$}
        \FOR{component $k_{g_1}=1,\dots,K_{g_1}$}
            \FOR{client $g_2 = g_1, \dots, G$}
                \FOR{component $k_{g_2}= 1, \dots, K_{g_2}$}
                    \IF {$|| \widehat{M}_{k_g}(\boldsymbol{\theta}_g^{\prime}) - \widehat{M}_{k_{g^{\prime}}}(\boldsymbol{\theta}_{g^{\prime}}^{\prime})||_2 \leq \sqrt{\varepsilon_{k_g}} + \sqrt{\varepsilon_{k_{g^{\prime}}}}$}

                        \IF {\texttt{comp}$(g_1,k_{g_1})$\texttt{.assigned}\texttt{ = False \&} \texttt{comp}$(g_2,k_{g_2})$\texttt{.assigned}\texttt{ = True}}
                            \STATE \texttt{comp}$(g_1,k_{g_1})$\texttt{.supercluster} $\leftarrow$ \texttt{comp}$(g_2,k_{g_2})$\texttt{.supercluster}.
                            \STATE \texttt{comp}$(g_1,k_{g_1})$\texttt{.assigned} $\leftarrow$ \texttt{True}.
                            \STATE $\mathcal{T}_{k_{g_1}} \leftarrow \mathcal{T}_{k_{g_1}} \cup \mathcal{T}_{k_{g_2}}$.

                            \STATE $\mathcal{T}_{k_{g_2}} \leftarrow \mathcal{T}_{k_{g_2}} \cup \boldsymbol{\theta}_{k_{g_1}}$.
                        \ELSIF{\texttt{comp}$(g_1,k_{g_1})$\texttt{.assigned}\texttt{ = True \&} \texttt{comp}$(g_2,k_{g_2})$\texttt{.assigned}\texttt{ = False}}
                            \STATE \texttt{comp}$(g_2,k_{g_2})$\texttt{.supercluster} $\leftarrow$ \texttt{comp}$(g_1,k_{g_1})$\texttt{.supercluster}.
                            \STATE \texttt{comp}$(g_2,k_{g_2})$\texttt{.assigned} $\leftarrow$ \texttt{True}.

                            \STATE $\mathcal{T}_{k_{g_2}} \leftarrow \mathcal{T}_{k_{g_2}} \cup \mathcal{T}_{k_{g_1}}$.

                            \STATE $\mathcal{T}_{k_{g_1}} \leftarrow \mathcal{T}_{k_{g_1}} \cup \boldsymbol{\theta}_{k_{g_2}}$.
                        \ELSIF{\texttt{comp}$(g_1,k_{g_1})$\texttt{.assigned}\texttt{ = True \&} \texttt{comp}$(g_2,k_{g_2})$\texttt{.assigned}\texttt{ = True}}
                            \IF{\texttt{comp}$(g_1,k_{g_1})$\texttt{.supercluster}\texttt{ != } \texttt{comp}$(g_2,k_{g_2})$\texttt{.supercluster}}

                                \STATE $\mathcal{T}_{\text{temp},1} \leftarrow \mathcal{T}_{k_{g_1}}$.

                                \STATE $\mathcal{T}_{\text{temp},2} \leftarrow \mathcal{T}_{k_{g_2}}$.

                                \STATE $\mathcal{T}_{k_{g^{\prime}}}\leftarrow \mathcal{T}_{k_{g^{\prime}}} \cup  \mathcal{T}_{\text{temp},1} \qquad \forall k_{g^{\prime}}$ such that \texttt{comp}$(g^{\prime},k_{g^{\prime}})$\texttt{.supercluster} \texttt{=} \texttt{comp}$(g_2,k_{g_2})$\texttt{.supercluster}.

                                \STATE $\mathcal{T}_{k_{g^{\prime}}}\leftarrow \mathcal{T}_{k_{g^{\prime}}} \cup  \mathcal{T}_{\text{temp},2} \qquad \forall k_{g^{\prime}}$ such that \texttt{comp}$(g^{\prime},k_{g^{\prime}})$\texttt{.supercluster} \texttt{=} \texttt{comp}$(g_1,k_{g_1})$\texttt{.supercluster}.

                                \STATE \texttt{comp}$(g^{\prime},k_{g^{\prime}})$\texttt{.supercluster} $ \leftarrow$ \texttt{comp}$(g_1,k_{g_1})$\texttt{.supercluster} $\forall k_{g^{\prime}}$ such that \texttt{comp}$(g^{\prime},k_{g^{\prime}})$\texttt{.supercluster} \texttt{=} \texttt{comp}$(g_2,k_{g_2})$\texttt{.supercluster}.

                                \STATE $\widehat{K} \leftarrow \widehat{K} - 1$

                                \STATE Reorganize supercluster numbers for all components.
                            \ENDIF
                        \ELSIF{\texttt{comp}$(g_1,k_{g_1})$\texttt{.assigned}\texttt{ = False \&} \texttt{comp}$(g_2,k_{g_2})$\texttt{.assigned}\texttt{ = False}}
                            \STATE $\widehat{K} \leftarrow \widehat{K} + 1$.
                            \STATE \texttt{comp}$(g_1,k_{g_1})$\texttt{.supercluster} $ \leftarrow \widehat{K}$.
                            \STATE \texttt{comp}$(g_2,k_{g_2})$\texttt{.supercluster} $\leftarrow \widehat{K}$.
                            \STATE \texttt{comp}$(g_1,k_{g_1})$\texttt{.assigned} $ \leftarrow$ \texttt{True}.
                            \STATE \texttt{comp}$(g_2,k_{g_2})$\texttt{.assigned} $ \leftarrow$ \texttt{True}.
                            \STATE $\mathcal{T}_{k_{g_1}} \leftarrow \mathcal{T}_{k_{g_1}} \cup \boldsymbol{\theta}_{k_{g_2}}$.
                            \STATE $\mathcal{T}_{k_{g_2}} \leftarrow \mathcal{T}_{k_{g_2}} \cup \boldsymbol{\theta}_{k_{g_1}}$.
                        \ENDIF
                    \ENDIF
                \ENDFOR
            \ENDFOR
            \IF{\texttt{comp}$(g_1,k_{g_1})$\texttt{.assigned}\texttt{ = False}}
                \STATE $\widehat{K} \leftarrow \widehat{K} + 1$.
                \STATE \texttt{comp}$(g_1,k_{g_1})$\texttt{.supercluster} $\leftarrow \widehat{K}$.
                \STATE \texttt{comp}$(g_1,k_{g_1})$\texttt{.assigned} $ \leftarrow$ \texttt{True}.
            \ENDIF
        \ENDFOR
    \ENDFOR

    \FOR{client $g=1,\dots,G$}
        \FOR{component $k_g=1,\dots,K_g$}
            \STATE $\boldsymbol{\theta}_{k_g}^{\text{final}} \leftarrow $ aggregate of elements in $\mathcal{T}_{k_g}$.
        \ENDFOR
    \ENDFOR
    \STATE $\widehat{K}^{\ast} \leftarrow \widehat{K}$

    \end{algorithmic}
\end{algorithm}
\section{Supplementary Theoretical Results and Analysis} \label{app:theor}

\subsection{Population Convergence Analysis for \texttt{FedGEM} Algorithm} \label{app:pop_conv}
In this section we study the convergence behavior of our proposed algorithm in the population setting. The population convergence of our algorithm relies on the convergence of the local EM algorithm at each client $g$ to the likelihood maximizers $\boldsymbol{\theta}_{k_g}^{\ast}$ for all $k_g \in [K_g]$. As discussed by \cite{balakrishnan2014}, this requires that the local $Q_g(\boldsymbol{\theta}_g|\boldsymbol{\theta}_g^{\prime})$ to satisfy the first-order stability (FOS) condition defined in \ref{def:fos}. Indeed, if Assumption \ref{assump:fos} holds, then \cite{balakrishnan2014} prove that the population EM algorithm at client $g$ converges to the ground truth parameters $\boldsymbol{\theta}_{k_g}^{\ast}$ for component $k_g$ geometrically as follows:
\begin{equation}\label{eq:m_pop_conv}
    ||M_{k_g}(\boldsymbol{\theta}_g^{(t-1)}) - \boldsymbol{\theta}_{k_g}^{\ast}||_2 \leq \frac{\beta_g}{\lambda_g}||\boldsymbol{\theta}_{k_g}^{(t-1)} - \boldsymbol{\theta}_{k_g}^{\ast}||_2 \qquad \forall \boldsymbol{\theta}_{k_g}^{(t-1)} \in \mathbb{B}_2(\boldsymbol{\theta}_{k_g}^{\ast};a_g),
\end{equation}
where $a_g$ is the radius of the contraction region for all components $k_g$ located at client $g$, and $M_{k_g}(\cdot)$ is the population M-step map defined next
\begin{equation*}
    M_{k_g}(\boldsymbol{\theta}_g^{\prime}) \coloneqq \argmax_{\boldsymbol{\theta}_{k_g} \in \mathbb{R}^d} \mathbb{E}_{\boldsymbol{x}\sim \mathcal{M}_g(\boldsymbol{x})} \gamma_{k_g} (\boldsymbol{x},\boldsymbol{\theta}_g^{\prime}) \log (\pi_{k_g}p_{k_g}(\boldsymbol{x}|\boldsymbol{\theta}_{k_g})) \quad \forall k_g \in [K_g]
\end{equation*}

Now, consider a local GEM algorithm whose update during each iteration is any $m_{k_g}(\boldsymbol{\theta}_g^{\prime}) \in \mathbb{B}_2(M_{k_g}(\boldsymbol{\theta}_g^{\prime});\sqrt{\varepsilon_{k_g}})$, where the radius $\sqrt{\varepsilon_{k_g}}$ is obtained by solving the problem in \eqref{eq:rad_prob}. We show in Theorem \ref{thm:conv_pop_gem} next that this algorithm exhibits very similar convergence behavior to that shown in \eqref{eq:m_pop_conv}. 
\begin{theorem} [Local Convergence of Population GEM]\label{thm:conv_pop_gem}
    Suppose Assumptions \ref{assump:gt} through \ref{assump:likelihood} hold. Consider a GEM algorithm whose iterate $m_{k_g}(\theta_g^{(t-1)})$ at iteration $t$ is such that $m_{k_g}(\boldsymbol{\theta}_g^{(t-1)}) \in \mathbb{B}_2(M_{k_g}(\boldsymbol{\theta}_g^{(t-1)});\sqrt{\varepsilon_{k_g}})$, where the radius $\sqrt{\varepsilon_{k_g}}$ is obtained by solving the population counterpart of the problem in \eqref{eq:rad_prob}. Then, this algorithm converges to the ground truth parameters $\boldsymbol{\theta}_{k_g}^{\ast}$ as follows:
    \begin{equation}
        ||m_{k_g}(\boldsymbol{\theta}_g^{(t-1)}) - \boldsymbol{\theta}_{k_g}^{\ast}||_2 \leq \frac{\beta_g}{\lambda_g}||\boldsymbol{\theta}_{k_g}^{(t-1)} - \boldsymbol{\theta}_{k_g}^{\ast}||_2 + \epsilon(t) \qquad \forall \boldsymbol{\theta}_{k_g}^{(t-1)} \in \mathbb{B}_2(\boldsymbol{\theta}_{k_g}^{\ast};a_g),
    \end{equation}
    where $\epsilon(t) = ||M_{k_g}(\boldsymbol{\theta}_g^{(t-1)}) - \boldsymbol{\theta}_{k_g}^{(t-1)}||_2^2 \rightarrow 0$ as $t \rightarrow \infty$.
\end{theorem}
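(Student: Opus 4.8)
The plan is to control $\|m_{k_g}(\boldsymbol{\theta}_g^{(t-1)}) - \boldsymbol{\theta}_{k_g}^{\ast}\|_2$ with a single triangle inequality, $\|m_{k_g}(\boldsymbol{\theta}_g^{(t-1)}) - \boldsymbol{\theta}_{k_g}^{\ast}\|_2 \le \|M_{k_g}(\boldsymbol{\theta}_g^{(t-1)}) - \boldsymbol{\theta}_{k_g}^{\ast}\|_2 + \|m_{k_g}(\boldsymbol{\theta}_g^{(t-1)}) - M_{k_g}(\boldsymbol{\theta}_g^{(t-1)})\|_2$, and then bound the two pieces separately. The first piece is the population EM error: since Assumption~\ref{assump:fos} holds and $\boldsymbol{\theta}_{k_g}^{(t-1)}$ (together with all components of $\boldsymbol{\theta}_g^{(t-1)}$) lies in the contraction ball $\mathbb{B}_2(\boldsymbol{\theta}_{k_g}^{\ast};a_g)$, the geometric contraction of \cite{balakrishnan2014} recorded in \eqref{eq:m_pop_conv} gives $\|M_{k_g}(\boldsymbol{\theta}_g^{(t-1)}) - \boldsymbol{\theta}_{k_g}^{\ast}\|_2 \le \frac{\beta_g}{\lambda_g}\|\boldsymbol{\theta}_{k_g}^{(t-1)} - \boldsymbol{\theta}_{k_g}^{\ast}\|_2$. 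So the whole task reduces to (i) showing the ``GEM slack'' $\|m_{k_g}(\boldsymbol{\theta}_g^{(t-1)}) - M_{k_g}(\boldsymbol{\theta}_g^{(t-1)})\|_2$ is at most $\epsilon(t)$ (in the form of the statement), and (ii) showing $\epsilon(t)\to 0$.

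For (i), I would read the slack off the feasibility constraint of the population counterpart of the radius problem \eqref{eq:rad_prob}. By construction $m_{k_g}(\boldsymbol{\theta}_g^{(t-1)})$ belongs to $\mathbb{B}_2(M_{k_g}(\boldsymbol{\theta}_g^{(t-1)});\sqrt{\varepsilon_{k_g}})$, so the slack is at most $\sqrt{\varepsilon_{k_g}}$; the feasibility constraint states exactly that this entire ball lies in the super-level set $S = \{\boldsymbol{\theta}_{k_g} : Q_{k_g}(\boldsymbol{\theta}_{k_g}\mid\boldsymbol{\theta}_g^{(t-1)}) \ge Q_{k_g}(\boldsymbol{\theta}_{k_g}^{(t-1)}\mid\boldsymbol{\theta}_g^{(t-1)})\}$, where $Q_{k_g}$ is the $k_g$-th summand of $Q_g$. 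Because $Q_{k_g}(\cdot\mid\boldsymbol{\theta}_g^{(t-1)})$ is strongly concave (Assumption~\ref{assump:strong_conc}) with unique maximizer $M_{k_g}(\boldsymbol{\theta}_g^{(t-1)})$, the point $\boldsymbol{\theta}_{k_g}^{(t-1)}$ lies on $\partial S$ (it attains the threshold with equality and, unless it coincides with $M_{k_g}$, is not a local maximizer), so the radius of any ball around $M_{k_g}(\boldsymbol{\theta}_g^{(t-1)})$ contained in $S$ is at most $\|M_{k_g}(\boldsymbol{\theta}_g^{(t-1)}) - \boldsymbol{\theta}_{k_g}^{(t-1)}\|_2$; hence $\sqrt{\varepsilon_{k_g}} \le \|M_{k_g}(\boldsymbol{\theta}_g^{(t-1)}) - \boldsymbol{\theta}_{k_g}^{(t-1)}\|_2$. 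Quantifying this with the strong-concavity inequality $Q_{k_g}(M_{k_g}\mid\cdot) - Q_{k_g}(\boldsymbol{\theta}_{k_g}^{(t-1)}\mid\cdot) \ge \frac{\lambda_g}{2}\|M_{k_g}(\boldsymbol{\theta}_g^{(t-1)}) - \boldsymbol{\theta}_{k_g}^{(t-1)}\|_2^2$ (the gradient vanishes at $M_{k_g}$) against the worst-case decrease over the ball pins $\varepsilon_{k_g}$ down in terms of $\|M_{k_g}(\boldsymbol{\theta}_g^{(t-1)}) - \boldsymbol{\theta}_{k_g}^{(t-1)}\|_2$ and simultaneously re-derives the uniqueness claimed in Proposition~\ref{prop:sol_ex}. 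Substituting the resulting bound on the slack, i.e. $\epsilon(t)$, back into the triangle inequality yields the displayed estimate.

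For (ii) — the main obstacle — I would combine the classical monotonicity of GEM with Assumption~\ref{assump:likelihood}. Since every iterate stays in its uncertainty set, each round does not decrease $Q_g(\cdot\mid\boldsymbol{\theta}_g^{(t-1)})$ and hence, by the usual minorization argument, does not decrease the population true log-likelihood $\mathcal{L}_g^{\ast}$; as $\mathcal{L}_g^{\ast}$ is bounded above, $\mathcal{L}_g^{\ast}(\boldsymbol{\theta}_g^{(t)})$ converges and its increments vanish. The iterates remain in the compact region $\prod_{k_g}\mathbb{B}_2(\boldsymbol{\theta}_{k_g}^{\ast};a_g)$, so by continuity of $Q_g$ (Assumption~\ref{assump:smooth}) every accumulation point is a fixed point of the M-step map with zero likelihood increment; the contraction \eqref{eq:m_pop_conv} (valid in that region, using Assumptions~\ref{assump:gt} and~\ref{assump:fos} and the self-consistency $M_{k_g}(\boldsymbol{\theta}_g^{\ast}) = \boldsymbol{\theta}_{k_g}^{\ast}$) rules out any fixed point other than $\boldsymbol{\theta}_{k_g}^{\ast}$, so $\boldsymbol{\theta}_{k_g}^{(t)} \to \boldsymbol{\theta}_{k_g}^{\ast}$; continuity then forces $M_{k_g}(\boldsymbol{\theta}_g^{(t-1)}) \to \boldsymbol{\theta}_{k_g}^{\ast}$ as well, whence $\epsilon(t)=\|M_{k_g}(\boldsymbol{\theta}_g^{(t-1)}) - \boldsymbol{\theta}_{k_g}^{(t-1)}\|_2^2 \to 0$. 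The delicate point, where I expect most of the effort to go, is that a worst-case point of the uncertainty ball need not by itself produce a strict likelihood increase, so the ``no-stalling'' step must instead be read off the actual \texttt{FedGEM} update $\boldsymbol{\nu}^{\ast}$ — a convex combination of M-step outputs, hence within the ball but pulled toward $M_{k_g}(\boldsymbol{\theta}_g^{(t-1)})$ — and making the accumulation-point/fixed-point argument airtight under that specific update is the crux of the proof.
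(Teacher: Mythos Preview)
Your decomposition via the triangle inequality, the use of the population EM contraction \eqref{eq:m_pop_conv} for $\|M_{k_g}-\boldsymbol{\theta}_{k_g}^{\ast}\|_2$, and the bound $\sqrt{\varepsilon_{k_g}}\le\|M_{k_g}(\boldsymbol{\theta}_g^{(t-1)})-\boldsymbol{\theta}_{k_g}^{(t-1)}\|_2$ (from $\boldsymbol{\theta}_{k_g}^{(t-1)}$ sitting on the boundary of the super-level set) exactly match the paper's proof of the displayed inequality.

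Where you diverge is in establishing $\epsilon(t)\to 0$. The paper's route is shorter and sidesteps the obstacle you flag at the end: it invokes Theorem~1 of \cite{Wu1983} (GEM iterates with continuous $Q_g$ and bounded $\mathcal{L}_g^{\ast}$ converge to a stationary value of the likelihood), from which it reads off that $Q_g(M_g(\boldsymbol{\theta}_g^{(t-1)})\mid\boldsymbol{\theta}_g^{(t-1)}) - Q_g(\boldsymbol{\theta}_g^{(t-1)}\mid\boldsymbol{\theta}_g^{(t-1)})\to 0$, and then applies strong concavity at the maximizer $M_g$ to get $\|M_g(\boldsymbol{\theta}_g^{(t-1)})-\boldsymbol{\theta}_g^{(t-1)}\|_2\to 0$ directly. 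It never needs to show that the iterates themselves converge to $\boldsymbol{\theta}_g^{\ast}$, never runs an accumulation-point or fixed-point argument, and never inspects the specific \texttt{FedGEM} aggregation rule $\boldsymbol{\nu}^{\ast}$; your ``no-stalling'' concern is therefore moot under the paper's approach. Your route is not wrong and in fact proves a stronger intermediate conclusion (full convergence of the iterates in the contraction region), but for the theorem as stated it buys nothing and manufactures the very difficulty you anticipate spending most of your effort on.
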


\begin{proof}
    \begin{subequations}
    \begin{align}
        ||m_{k_g}(\boldsymbol{\theta}_g^{(t-1)}) - \boldsymbol{\theta}_{k_g}^{\ast}||_2& = ||m_{k_g}(\boldsymbol{\theta}_g^{(t-1)}) - M_{k_g}(\boldsymbol{\theta}_g^{(t-1)}) + M_{k_g}(\boldsymbol{\theta}_g^{(t-1)}) - \boldsymbol{\theta}_{k_g}^{\ast}||_2\\
        &\label{eq:proof_conv_m1} \leq ||M_{k_g}(\boldsymbol{\theta}_g^{(t-1)}) - \boldsymbol{\theta}_{k_g}^{\ast} ||_2 + || M_{k_g}(\boldsymbol{\theta}_g^{(t-1)}) - m_{k_g}(\boldsymbol{\theta}_g^{(t-1)}) ||_2\\
        &\label{eq:proof_conv_m2} \leq \frac{\beta_g}{\lambda_g} ||\boldsymbol{\theta}_{k_g}^{(t-1)} - \boldsymbol{\theta}_{k_g}^{\ast}||_2 + ||M_{k_g}(\boldsymbol{\theta}_g^{(t-1)}) - m_{k_g}(\boldsymbol{\theta}_g^{(t-1)}) ||_2\\
        &\label{eq:proof_conv_m3} < \frac{\beta_g}{\lambda_g} ||\boldsymbol{\theta}_{k_g}^{(t-1)} - \boldsymbol{\theta}_{k_g}^{\ast}||_2 + \underbrace{||M_{k_g}(\boldsymbol{\theta}_g^{(t-1)}) - \boldsymbol{\theta}_{k_g}^{(t-1)} ||_2}_{\epsilon^{(t)}},
    \end{align}
    \end{subequations}
    where \eqref{eq:proof_conv_m1} follows from the triangle inequality, \eqref{eq:proof_conv_m2} relies on the convergence of $M_{k_g}(\theta_g)$ with $\frac{\beta_g}{\lambda_g} < 1$, and \eqref{eq:proof_conv_m3} follows from the definition of $m_{k_g}(\theta_g)$. Now, we consider the $||M_{k_g}(\boldsymbol{\theta}_g^{(t-1)}) - \boldsymbol{\theta}_{k_g}^{(t-1)}||_2$ term. Firstly, observe that $\boldsymbol{\theta}_{k_g}^{(t)}$ for all $t \in [T]$ are iterates of a GEM algorithm. Moreover, recall that we assume that the true log-likelihood of our problem is bounded from above, and the expected complete-data log-likelihood $Q_g(\boldsymbol{\theta}_g|\boldsymbol{\theta}_g^{\prime})$ is continuous in both its conditioning and input arguments. Therefore, by Theorem 1 in \citep{Wu1983}, the iterates must converge to a a stationary value of the true log-likelihood. This suggests that the quantity $\left [Q_g(M_g(\boldsymbol{\theta}_g^{(t-1)})|\boldsymbol{\theta}_g^{(t-1)}) - Q_g(\boldsymbol{\theta}_g^{(t-1)}|\boldsymbol{\theta}_g^{(t-1)}) \right ] \rightarrow 0$ as $t \rightarrow \infty$. Finally, by the assumed strong concavity of the expected complete-data log-likelihood function everywhere, we have that 
    \begin{equation*}
        Q_g(M_g(\boldsymbol{\theta}_g^{(t-1)})|\boldsymbol{\theta}_g^{(t-1)}) - Q_g(\boldsymbol{\theta}_g^{(t-1)}|\boldsymbol{\theta}_g^{(t-1)}) \geq \frac{\lambda_g}{2}|| M_g(\boldsymbol{\theta}_g^{(t-1)}) -\boldsymbol{\theta}_g^{(t-1)}||_2^2,
    \end{equation*}
    proving that $|| M_{k_g}(\boldsymbol{\theta}_g^{(t-1)}) -\boldsymbol{\theta}_{k_g}^{(t-1)}||_2 \rightarrow 0$ as $t \rightarrow \infty$. This concludes the proof.
\end{proof}

\subsection{Solution Algorithm for Client Radius Problem Reformulation for Isotropic GMMs} \label{app:gmm_cl_sol}
In this section, we introduce the low-complexity Algorithm \ref{alg:client_sub}, which can be used to solve the reformulated client radius problem in \eqref{eq:rad_cli_gmm}. Subsequently, we introduce Proposition \ref{prop:rad_conv}, which establishes the worst-case time complexity of Algorithm \ref{alg:client_sub}.
\begin{algorithm}[H]
    \caption{Radius Problem $J_{k_g}(\boldsymbol{\theta}_g^{\prime})$ \eqref{eq:rad_cli_gmm} Solution Algorithm}
    \label{alg:client_sub}
    \small
    \textbf{Input:} $\boldsymbol{\theta}_{k_g}^{\prime}$, $\widehat{M}_{k_g}(\boldsymbol{\theta}_g^{\prime})$, $\varepsilon_{k_g,\text{lb}}^{(0)} = 0$, $\varepsilon_{k_g,\text{ub}}^{(0)} = || \widehat{M}_{k_g}(\boldsymbol{\theta}_g^{\prime}) - \boldsymbol{\theta}_{k_g}^{\prime} ||_2^2$\\
    \textbf{Parameters:} Number of iterations $I$\\
    \textbf{Output:} $\varepsilon_{k_g}^{\ast}$
    \begin{algorithmic}[1]
        \FOR{$i = 0, \dots, I$}
        \STATE $\widehat{\varepsilon}_{k_g}^{(i)} \leftarrow \frac{\varepsilon_{k_g,\text{lb}}^{(i)} + \varepsilon_{k_g,\text{ub}}^{(i)}}{2}$
        \STATE Solve for $t_{k_g}^{(i)}$ minimizer of $F_k(\boldsymbol{\theta}_g^{\prime},\widehat{\varepsilon}_{k_g}^{(i)})$ \eqref{eq:feas_prob}.
        \IF{$t_{k_g}^{(i)}=0$}
        \STATE $\varepsilon_{k_g,\text{lb}}^{(i+1)} \leftarrow \widehat{\varepsilon}_{k_g}^{(i)}$
        
        \STATE $\varepsilon_{k_g,\text{ub}}^{(i+1)} \leftarrow \varepsilon_{k_g,\text{ub}}^{(i)}$
        
        \ELSE
        \STATE $\varepsilon_{k_g,\text{ub}}^{(i+1)} \leftarrow \widehat{\varepsilon}_{k_g}^{(i)}$
        \STATE $\varepsilon_{k_g,\text{lb}}^{(i+1)} \leftarrow \varepsilon_{k_g,\text{lb}}^{(i)}$
        \ENDIF
        \ENDFOR
    \end{algorithmic}
\end{algorithm}
where $F_k(\boldsymbol{\theta}_g^{\prime},\widehat{\varepsilon}_{k_g}^{(i)})$ is the optimization problem shown next.
\begin{align}
    &\nonumber F_k(\boldsymbol{\theta}_g^{\prime},\widehat{\varepsilon}_{k_g}^{(i)}) = \\
    &\label{eq:feas_prob} \left \{ \begin{aligned}
        & \min_{t_{k_g},\alpha_{k_g} \in \mathbb{R}} && t_{k_g}\\
        & \subjectto && \varepsilon_{k_g} \alpha_{k_g}^2 + \left [ \sum_{n_g=1}^{N_g} \gamma_{k_g}(\widehat{\boldsymbol{x}}_{n_g},\boldsymbol{\theta}_g^{\prime}) \left ( || \widehat{\boldsymbol{x}}_{n_g} - \widehat{M}_{k_g}(\boldsymbol{\theta}_g^{\prime}) ||_2^2 - || \widehat{\boldsymbol{x}}_{n_g} - \boldsymbol{\theta}_{k_g}^{\prime} ||_2^2 - \varepsilon_{k_g} \right )\right ] \alpha_{k_g} + \\
            &&& \qquad \qquad \qquad  - t_{k_g} + \left ( \sum_{n_g=1}^{N_g} \gamma_{k_g}(\widehat{\boldsymbol{x}}_{n_g},\boldsymbol{\theta}_g^{\prime}) \right ) \sum_{n_g=1}^{N_g} \gamma_{k_g}(\widehat{\boldsymbol{x}}_{n_g},\boldsymbol{\theta}_g^{\prime})|| \widehat{\boldsymbol{x}}_{n_g} - \boldsymbol{\theta}_{k_g}^{\prime} ||_2^2 \leq 0\\
            &&& \alpha_{k_g} \geq \sum_{n_g=1}^{N_g} \gamma_{k_g}(\widehat{\boldsymbol{x}}_{n_g},\boldsymbol{\theta}_g^{\prime}), \qquad \qquad \qquad  t_{k_g} \geq 0.
    \end{aligned} \right .
\end{align}

\begin{proposition} [Local Radius Algorithm Convergence] \label{prop:rad_conv}
    The Algorithm \ref{alg:client_sub} converges to an optimal solution $\varepsilon_{k_g}^{\ast}$ of the optimization problem in \eqref{eq:rad_cli_gmm} at a linear rate, with a worst-case time complexity of $\mathcal{O}(\log(\epsilon_{\text{tol}}^{-1}))$ per iteration, where $\epsilon_{\text{tol}}^{-1}$ is the solution tolerance of the feasibility problem \eqref{eq:feas_prob}.
\end{proposition}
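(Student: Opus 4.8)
The plan is to read Algorithm \ref{alg:client_sub} as a bisection search over the scalar $\varepsilon_{k_g}$ in which each step calls the feasibility oracle \eqref{eq:feas_prob}, so the two assertions -- linear convergence of the outer loop and the $\mathcal{O}(\log(\epsilon_{\text{tol}}^{-1}))$ cost of one step -- can be proved separately. For the outer loop I would first establish that the set of $\varepsilon_{k_g}$ feasible for \eqref{eq:rad_cli_gmm} is exactly the interval $[0,\varepsilon_{k_g}^{\ast}]$, where $\varepsilon_{k_g}^{\ast}$ is the unique optimizer guaranteed by Proposition \ref{prop:sol_ex}. Monotonicity is immediate from the semi-infinite form \eqref{eq:rad_prob}: if the non-decrease constraint holds for every point of $\mathbb{B}_2(\widehat{M}_{k_g}(\boldsymbol{\theta}_g^{\prime});\sqrt{\varepsilon})$ it holds for every point of the smaller ball $\mathbb{B}_2(\widehat{M}_{k_g}(\boldsymbol{\theta}_g^{\prime});\sqrt{\varepsilon^{\prime}})$ whenever $\varepsilon^{\prime}\le\varepsilon$, and the equivalence in Theorem \ref{thm:rad_prob_gmm} transfers this to the reformulation. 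I would then check the initial bracket: $0\le\varepsilon_{k_g}^{\ast}$ by Proposition \ref{prop:sol_ex}, and $\varepsilon_{k_g}^{\ast}\le\|\widehat{M}_{k_g}(\boldsymbol{\theta}_g^{\prime})-\boldsymbol{\theta}_{k_g}^{\prime}\|_2^2$ because, under Assumption \ref{assump:strong_conc}, the superlevel set $S=\{\boldsymbol{\theta}:\ \ell_{k_g}(\boldsymbol{\theta})\ge\ell_{k_g}(\boldsymbol{\theta}_{k_g}^{\prime})\}$ of the strongly concave weighted log-likelihood $\ell_{k_g}$ is a closed convex set with $\widehat{M}_{k_g}(\boldsymbol{\theta}_g^{\prime})$ in its interior (the degenerate case $\ell_{k_g}(\boldsymbol{\theta}_{k_g}^{\prime})=\ell_{k_g}(\widehat{M}_{k_g}(\boldsymbol{\theta}_g^{\prime}))$ forces $\varepsilon_{k_g}^{\ast}=0$ and is trivial), so that $\sqrt{\varepsilon_{k_g}^{\ast}}$ is the in-radius of $S$ at that center, which is at most $\|\widehat{M}_{k_g}(\boldsymbol{\theta}_g^{\prime})-\boldsymbol{\theta}_{k_g}^{\prime}\|_2$ since $\boldsymbol{\theta}_{k_g}^{\prime}\in\partial S$.

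Next I would show the midpoint test is correct, i.e. the oracle $F_k(\boldsymbol{\theta}_g^{\prime},\widehat{\varepsilon})$ in \eqref{eq:feas_prob} has optimal value $t_{k_g}=0$ iff $\widehat{\varepsilon}\le\varepsilon_{k_g}^{\ast}$. Freezing $\varepsilon_{k_g}=\widehat{\varepsilon}$ in \eqref{eq:rad_cli_gmm} leaves the inner question of whether some $\alpha_{k_g}\ge\sum_{n_g}\gamma_{k_g}$ satisfies the quadratic constraint; problem \eqref{eq:feas_prob} is the exact slack reformulation of that question, always feasible (take $t_{k_g}$ large), with value $\min_{\alpha_{k_g}\ge\sum\gamma}\max\{0,g(\alpha_{k_g})\}$ where $g$ is the quadratic in the constraint, hence equal to $0$ precisely when $\widehat{\varepsilon}$ is feasible for \eqref{eq:rad_cli_gmm}, i.e. iff $\widehat{\varepsilon}\le\varepsilon_{k_g}^{\ast}$ by the interval characterization above. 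A one-line induction then shows the update rule of Algorithm \ref{alg:client_sub} preserves the invariant $\varepsilon_{k_g,\text{lb}}^{(i)}\le\varepsilon_{k_g}^{\ast}\le\varepsilon_{k_g,\text{ub}}^{(i)}$ while halving the bracket length, so $|\widehat{\varepsilon}_{k_g}^{(i)}-\varepsilon_{k_g}^{\ast}|\le 2^{-i}\,\|\widehat{M}_{k_g}(\boldsymbol{\theta}_g^{\prime})-\boldsymbol{\theta}_{k_g}^{\prime}\|_2^2\to 0$, which is the claimed linear (Q-linear, factor $1/2$) rate.

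For the per-iteration cost I would observe that with $\varepsilon_{k_g}=\widehat{\varepsilon}_{k_g}^{(i)}$ fixed and strictly positive (bisection midpoints stay above the lower bracket, which is $\ge 0$, and the case $\widehat{M}_{k_g}(\boldsymbol{\theta}_g^{\prime})=\boldsymbol{\theta}_{k_g}^{\prime}$ is excluded), the first constraint of \eqref{eq:feas_prob} is a convex quadratic in $\alpha_{k_g}$ with positive leading coefficient $\widehat{\varepsilon}_{k_g}^{(i)}$ plus a term linear in $t_{k_g}$, the remaining constraints are linear, and the objective is linear; thus \eqref{eq:feas_prob} is a two-variable convex program (after the one-time $\mathcal{O}(N_g d)$ precomputation of the data-dependent sums of inner products and responsibilities), which a standard self-concordant interior-point method -- or direct root finding on the single active quadratic -- solves to tolerance $\epsilon_{\text{tol}}$ in $\mathcal{O}(\log(\epsilon_{\text{tol}}^{-1}))$ work. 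Combining the two parts yields the proposition.

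I expect the main obstacle to be the soundness link of the bisection in the second step: pinning down that the slack-minimizing oracle $F_k$ detects exactly the feasible interval $[0,\varepsilon_{k_g}^{\ast}]$, since this requires chaining the monotonicity of the semi-infinite constraint through the Theorem \ref{thm:rad_prob_gmm} reformulation and correctly interpreting $\min_{t,\alpha}t$ as an exact penalty for the inner $\alpha$-feasibility problem. Establishing the validity of the initial upper bracket through the in-radius/boundary argument is the other place where some care is needed and where Assumption \ref{assump:strong_conc} is genuinely used; the convexity and interior-point complexity of the fixed-$\varepsilon$ subproblem are routine once the bi-convex structure of \eqref{eq:rad_cli_gmm} is in hand.
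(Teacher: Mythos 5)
Your proposal is correct and follows essentially the same route as the paper's proof: establish that the feasible radii form the interval $[0,\varepsilon_{k_g}^{\ast}]$ with bracket $[0,\|\widehat{M}_{k_g}(\boldsymbol{\theta}_g^{\prime})-\boldsymbol{\theta}_{k_g}^{\prime}\|_2^2]$ from strong concavity, verify that the slack problem \eqref{eq:feas_prob} is an exact feasibility oracle, and invoke interior-point complexity for the two-variable QCQP to get the $\mathcal{O}(\log(\epsilon_{\text{tol}}^{-1}))$ per-iteration cost. The only cosmetic difference is that you derive monotonicity from the nested-ball structure of the semi-infinite problem \eqref{eq:rad_prob} and transfer it through Theorem \ref{thm:rad_prob_gmm}, whereas the paper reads it off directly from the sign of the $\varepsilon_{k_g}$-coefficient in the reformulated quadratic constraint given $\alpha_{k_g}\geq\sum_{n_g}\gamma_{k_g}$; both are valid and your added care about the oracle's soundness and the bracket endpoints only strengthens the argument.
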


\begin{proof}
Consider the optimization problem in \eqref{eq:rad_cli_gmm}. Firstly, observe that the first constraint is non-decreasing in $\varepsilon_{k_g}$ for a fixed $\alpha_{k_g}$. This is because $\varepsilon_{k_g} \left ( \alpha_{k_g}^2 - \alpha_{k_g}\sum_{n_g=1}^{N_g} \gamma_{k_g}(\widehat{\boldsymbol{x}}_{n_g},\boldsymbol{\theta}_g^{\prime})\right ) \geq 0$ due to the constraint that $\alpha_{k_g} \geq \sum_{n_g=1}^{N_g} \gamma_{k_g}(\widehat{\boldsymbol{x}}_{n_g},\boldsymbol{\theta}_g^{\prime})$. Moreover, note that by the strong concavity of $\widehat{Q}_g(\boldsymbol{\theta}_g|\boldsymbol{\theta}_g^{\prime})$, we have that $\varepsilon_{k_g}$ must obey $0 \leq \sqrt{\varepsilon_{k_g}} \leq ||\widehat{M}_{k_g}(\boldsymbol{\theta}_g^{\prime}) - \boldsymbol{\theta}_{k_g}^{\prime}||_2$. Furthermore, it also follows from the strong concavity of $\widehat{Q}_g(\boldsymbol{\theta}_g|\boldsymbol{\theta}_g^{\prime})$ that $\varepsilon_{k_g} = 0$ must always be a feasible solution to the problem given that the algorithm has not yet converged. This is because the strong concavity of $\widehat{Q}_{k_g}(\boldsymbol{\theta}_{k_g}|\boldsymbol{\theta}_g^{\prime})$ for the GMM discussed in Section \ref{sec:gmm} suggests that $\sum_{n_g=1}^{N_g} \gamma_{k_g}(\widehat{\boldsymbol{x}}_{n_g}, \boldsymbol{\theta}_g^{\prime}) || \widehat{\boldsymbol{x}}_{n_g}- \boldsymbol{\theta}_{k_g}^{\prime})||_2^2 \geq \sum_{n_g=1}^{N_g} \gamma_{k_g}(\widehat{\boldsymbol{x}}_{n_g}, \boldsymbol{\theta}_g^{\prime}) || \widehat{\boldsymbol{x}}_{n_g}- \widehat{M}_{k_g}(\boldsymbol{\theta}_g^{\prime})||_2^2$ with equality attained if and only if $\boldsymbol{\theta}_{k_g}^{\prime} = \widehat{M}_{k_g}(\boldsymbol{\theta}_g^{\prime})$. Therefore, $\alpha_{k_g}$ can be made arbitrarily large to make sure the constraint is satisfied. Combined with the uniqueness result presented in Proposition \ref{prop:sol_ex}, these facts suggest that we can use a bisection approach such as the one shown in Algorithm \ref{alg:client_sub} to obtain the optimal radius. Moreover, this also suggests that we can use the optimization problem presented in \eqref{eq:feas_prob} to check the feasibility of a given $\widehat{\varepsilon}_{k_g}$. More specifically, an optimal $t_{k_g} = 0$ suggests that the constraints are already satisfied, and therefore the estimated $\widehat{\varepsilon}_{k_g}$ is feasible, and vice versa.

Now, observe that the feasibility check problem in \eqref{alg:FedGEM} is a quadratically constrained quadratic program (QCQP) with $3$ constraints and $2$ 1-dimensional decision variables. Therefore, it can readily be solved via the barrier method with worst-case time complexity of $\mathcal{O}(\log(\epsilon_{\text{tol}}^{-1}))$, where $\epsilon_{\text{tol}}^{-1}$ is the solution tolerance of the feasibility problem \eqref{eq:feas_prob} \citep{nn1994}.
\end{proof}

\subsection{Supplementary Theorems Verifying Assumptions for Isotropic GMMs} \label{sec:thms}
In this section, we verify three key assumptions to guarantee the convergence of our proposed \texttt{FedGEM} for the GMM discussed in Section \ref{sec:gmm}. More specifically, we begin by proving that the population $Q_g(\boldsymbol{\theta}_g|\boldsymbol{\theta}_g^{\prime})$ function associated with the GMM under study obeys the FOS condition in Theorem \ref{thm:fos}. Subsequently, we derive the radius of the region for which the population M-step map for this model is indeed contractive in Theorem \ref{thm:contr_reg}. Finally, we derive the upper bound on the distance between the population and finite-sample M-step maps in Theorem \ref{thm:fin_samp_dist}.

\begin{theorem}[GMM First-Order Stability]\label{thm:fos}
    Suppose $R_{\text{min}} = \Tilde{\Omega}(\sqrt{\min\{d,K_g\}})$.
    Then the function $Q_g(\boldsymbol{\theta}_g|\boldsymbol{\theta}_g^{\prime})$ associated with the GMM described in this section obeys the first-order stability condition defined in \ref{def:fos} for all $\boldsymbol{\theta}_{k_g} \in \mathbb{B}_2(\boldsymbol{\theta}_{k_g}^{\ast},a_g) \ \forall k_g \in [K_g]$, where $a_g \leq \frac{R_{\text{min}}}{2} - \sqrt{\min\{d,K_g\}} \max \{4\sqrt{2[\log(R_{\text{min}}/4)]_+},8\sqrt{3}\}$. That is
    \begin{equation*}
        ||\nabla Q_g(M(\boldsymbol{\theta}_g)|\boldsymbol{\theta}_g) - \nabla Q(M(\boldsymbol{\theta}_g)|\boldsymbol{\theta}_g^{\ast})||_2 \leq \beta_g ||\boldsymbol{\theta}_g - \boldsymbol{\theta}_g^{\ast}||_2,
    \end{equation*}
    with
    \begin{equation*}
        \beta_g = (1 + \pi_g^{\prime})\beta_g^{\prime} + \pi_g^{\prime}\pi_{\text{max}_g},
    \end{equation*}
    where $ \pi_g^{\prime}$ is a constant depending on $K_g$, $R_{\text{min}}$, $a_g$, $d$, $\pi_{\text{min}_g}$, and $\pi_{\text{max}_g}$ whose explicit form can be found in the proof, and
    \begin{equation*}
        \beta_g^{\prime} = K_g^2(2 \kappa + 4)(2R_{\text{max}}+\min\{d,K_g\})^2\exp \left (- \left (\frac{R_{\text{min}}}{2} - a_g \right )^2 \sqrt{\min\{d,K_g\}}/8 \right ),
    \end{equation*}

\end{theorem}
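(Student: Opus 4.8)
The plan is to specialise the general construction to the isotropic GMM of Section~\ref{sec:gmm}, where $\log p_{k_g}(\boldsymbol{x}\mid\boldsymbol{\theta}_{k_g}) = -\tfrac12\|\boldsymbol{x}-\boldsymbol{\theta}_{k_g}\|_2^2 + \mathrm{const}$, so that the per-component gradient of the population $Q_g$ is
\[
\nabla_{\boldsymbol{\theta}_{k_g}} Q_g(\boldsymbol{\theta}_g\mid\boldsymbol{\theta}_g') = \mathbb{E}_{\boldsymbol{x}\sim GMM_g}\!\bigl[\gamma_{k_g}(\boldsymbol{x},\boldsymbol{\theta}_g')\,(\boldsymbol{x}-\boldsymbol{\theta}_{k_g})\bigr].
\]
Since the evaluation point $M(\boldsymbol{\theta}_g)$ is common to both terms of \eqref{eq:FOS}, the $k_g$-th block of the gradient difference collapses to $\mathbb{E}_{\boldsymbol{x}}\!\bigl[(\gamma_{k_g}(\boldsymbol{x},\boldsymbol{\theta}_g)-\gamma_{k_g}(\boldsymbol{x},\boldsymbol{\theta}_g^{\ast}))(\boldsymbol{x}-M_{k_g}(\boldsymbol{\theta}_g))\bigr]$. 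Concatenating over $k_g\in[K_g]$ and bounding in $\ell_2$ reduces the theorem to controlling, uniformly over $\boldsymbol{\theta}_g\in\prod_{k_g}\mathbb{B}_2(\boldsymbol{\theta}_{k_g}^{\ast},a_g)$, the scalar expectations $\mathbb{E}_{\boldsymbol{x}}\!\bigl[|\gamma_{k_g}(\boldsymbol{x},\boldsymbol{\theta}_g)-\gamma_{k_g}(\boldsymbol{x},\boldsymbol{\theta}_g^{\ast})|\,\|\boldsymbol{x}-M_{k_g}(\boldsymbol{\theta}_g)\|_2\bigr]$ by a constant multiple of $\|\boldsymbol{\theta}_g-\boldsymbol{\theta}_g^{\ast}\|_2$.

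Next I would Lipschitz-bound the responsibilities. Differentiating $\gamma_{k_g}(\boldsymbol{x},\boldsymbol{\theta}_g)=\pi_{k_g}\phi(\boldsymbol{x}\mid\boldsymbol{\theta}_{k_g},I_d)\big/\sum_{j_g}\pi_{j_g}\phi(\boldsymbol{x}\mid\boldsymbol{\theta}_{j_g},I_d)$ gives $\nabla_{\boldsymbol{\theta}_{j_g}}\gamma_{k_g}=\gamma_{k_g}(\mathbbm{1}[j_g=k_g]-\gamma_{j_g})(\boldsymbol{x}-\boldsymbol{\theta}_{j_g})$, and the mean value theorem along the segment from $\boldsymbol{\theta}_g^{\ast}$ to $\boldsymbol{\theta}_g$ yields, at some interior $\widetilde{\boldsymbol{\theta}}_g$,
\[
|\gamma_{k_g}(\boldsymbol{x},\boldsymbol{\theta}_g)-\gamma_{k_g}(\boldsymbol{x},\boldsymbol{\theta}_g^{\ast})| \le \sum_{j_g=1}^{K_g} \gamma_{k_g}(\boldsymbol{x},\widetilde{\boldsymbol{\theta}}_g)\,\bigl|\mathbbm{1}[j_g=k_g]-\gamma_{j_g}(\boldsymbol{x},\widetilde{\boldsymbol{\theta}}_g)\bigr|\,\|\boldsymbol{x}-\widetilde{\boldsymbol{\theta}}_{j_g}\|_2\,\|\boldsymbol{\theta}_{j_g}-\boldsymbol{\theta}_{j_g}^{\ast}\|_2.
\]
The key point is that the coefficients $\gamma_{k_g}\gamma_{j_g}$ (for $j_g\neq k_g$) and $\gamma_{k_g}(1-\gamma_{k_g})$ are uniformly small under separation: two isotropic Gaussians whose means are at distance $\ge R_{\text{min}}-2a_g$ cannot both place large mass at a common $\boldsymbol{x}$ unless $\boldsymbol{x}$ lies in a thin interface region, which the mixture hits with exponentially small probability.

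I would make this quantitative through a region decomposition with Gaussian (equivalently $\chi^2$) concentration: partition $\mathbb{R}^d=\bigcup_{i\in[K]}\mathcal{R}_i\cup\mathcal{R}_\infty$ with $\mathcal{R}_i$ a ball of radius $\Theta(R_{\text{min}}/2)$ about $\boldsymbol{\theta}_i^{\ast}$, the hypothesis on $a_g$ being exactly what keeps the perturbed balls disjoint and wide enough to contain the concentration radii. On $\mathcal{R}_i$, any responsibility of a component whose perturbed mean is far from $\boldsymbol{\theta}_i^{\ast}$ is at most a $\kappa_g$-multiple of a density ratio of order $\exp\!\bigl(-\tfrac12(R_{\text{min}}/2-a_g)^2\bigr)$, which is where $\kappa_g$ and the exponent $(R_{\text{min}}/2-a_g)^2$ enter; on $\mathcal{R}_\infty$ the mixture mass is controlled by a $d$-dimensional Gaussian tail, and optimising the threshold radius produces the $\sqrt{\min\{d,K_g\}}$ factors. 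The multiplicands $\|\boldsymbol{x}-\widetilde{\boldsymbol{\theta}}_{j_g}\|_2$ and $\|\boldsymbol{x}-M_{k_g}(\boldsymbol{\theta}_g)\|_2$ are $O(R_{\text{max}}+\min\{d,K_g\})$ on the high-probability event (the $(2R_{\text{max}}+\min\{d,K_g\})^2$ prefactor) and grow only polynomially off it, where the exponential tail dominates. Summing over the $K_g^2$ pairs $(k_g,j_g)$ and $K_g$ blocks and applying Cauchy--Schwarz to pass back to the $\ell_2$ norm of the concatenated gradient difference gathers all constants into $\beta_g'$ and $\pi_g'$, the off-diagonal interactions producing the $(1+\pi_g')\beta_g'$ term and the diagonal interaction (coupled through the M-step denominator) producing the $\pi_g'\pi_{\text{max}_g}$ term.

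The main obstacle is the quantitative form of these last two steps. In the symmetric two-component case of \cite{balakrishnan2014} there is a single interface and a single scalar signal-to-noise ratio, whereas here one must bound the coupled expectations $\mathbb{E}_{\boldsymbol{x}}[\gamma_{k_g}(\boldsymbol{x},\widetilde{\boldsymbol{\theta}}_g)\gamma_{j_g}(\boldsymbol{x},\widetilde{\boldsymbol{\theta}}_g)\|\boldsymbol{x}-\widetilde{\boldsymbol{\theta}}_{j_g}\|_2\|\boldsymbol{x}-M_{k_g}(\boldsymbol{\theta}_g)\|_2]$ simultaneously over all component pairs and uniformly over the whole product ball of perturbations, tracking precisely how the polynomial prefactors in $R_{\text{max}}$, $d$, $K_g$ trade off against the exponential decay in $R_{\text{min}}$ so that $\beta_g$ can be pushed strictly below $\lambda_g=\pi_{\text{min}_g}$ (Assumption~\ref{assump:fos}) under the $\Tilde{\Omega}(\sqrt{\min\{d,K_g\}})$ separation. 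Making the region decomposition, the union bound over the $K$ clusters, and the concentration radius fit together so the bound is genuinely of the displayed form — rather than an order estimate — is where most of the work lies.
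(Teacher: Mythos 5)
There is a genuine gap here: what you have written is a plan whose quantitative core is left unexecuted, and the structural decomposition that actually produces the stated $\beta_g = (1+\pi_g^{\prime})\beta_g^{\prime} + \pi_g^{\prime}\pi_{\text{max}_g}$ is missing. The paper does not bound $\mathbb{E}_{\boldsymbol{x}}\bigl[|\gamma_{k_g}(\boldsymbol{x},\boldsymbol{\theta}_g)-\gamma_{k_g}(\boldsymbol{x},\boldsymbol{\theta}_g^{\ast})|\,\|\boldsymbol{x}-M_{k_g}(\boldsymbol{\theta}_g)\|_2\bigr]$ as you propose. Instead it writes $\boldsymbol{x}-M_{k_g}(\boldsymbol{\theta}_g) = (\boldsymbol{x}-\boldsymbol{\theta}_{k_g}) - (M_{k_g}(\boldsymbol{\theta}_g)-\boldsymbol{\theta}_{k_g})$ and splits by the triangle inequality into $A_1 = \|\mathbb{E}_{\boldsymbol{x}}[(\gamma_{k_g}(\boldsymbol{x},\boldsymbol{\theta}_g)-\gamma_{k_g}(\boldsymbol{x},\boldsymbol{\theta}_g^{\ast}))(\boldsymbol{x}-\boldsymbol{\theta}_{k_g})]\|_2$, which is exactly the quantity bounded by Theorem 4 of Yan et al.\ (2017) and supplies $\beta_g^{\prime}$ wholesale (the paper does not re-derive the separation-based responsibility bound you sketch), and a second term $A_2$ in which the factor $M_{k_g}(\boldsymbol{\theta}_g)-\boldsymbol{\theta}_{k_g}$ is deterministic, so the expectation factors into a scalar difference of means times a norm. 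That scalar is then written as a ratio $A_{2a}$ (bounded by the constant $\pi_g^{\prime}$ via an exact computation $\mathbb{E}_{\boldsymbol{x}}[\gamma_{k_g}(\boldsymbol{x},\boldsymbol{\theta}_g^{\ast})]=\pi_{k_g}$ together with matching upper and lower bounds on $\mathbb{E}_{\boldsymbol{x}}[\gamma_{k_g}(\boldsymbol{x},\boldsymbol{\theta}_g)]$ obtained from ball events and Gaussian tails) times $A_{2b}=\|\mathbb{E}_{\boldsymbol{x}}[\gamma_{k_g}(\boldsymbol{x},\boldsymbol{\theta}_g)(\boldsymbol{x}-\boldsymbol{\theta}_{k_g})]\|_2$, which is handled by one more application of the Yan bound plus the exact identity $\mathbb{E}_{\boldsymbol{x}}[\gamma_{k_g}(\boldsymbol{x},\boldsymbol{\theta}_g^{\ast})(\boldsymbol{x}-\boldsymbol{\theta}_{k_g})]=\pi_{k_g}(\boldsymbol{\theta}_{k_g}^{\ast}-\boldsymbol{\theta}_{k_g})$. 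It is this last identity, not a ``diagonal interaction coupled through the M-step denominator,'' that produces the $\pi_g^{\prime}\pi_{\text{max}_g}$ term, and it is $A_1 + \pi_g^{\prime}\cdot(\text{Yan bound})$ that produces $(1+\pi_g^{\prime})\beta_g^{\prime}$.

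Two concrete problems follow. First, by moving the absolute value and the norm inside the expectation you discard the cancellation in $\mathbb{E}_{\boldsymbol{x}}[\gamma_{k_g}(\boldsymbol{x},\boldsymbol{\theta}_g)]-\mathbb{E}_{\boldsymbol{x}}[\gamma_{k_g}(\boldsymbol{x},\boldsymbol{\theta}_g^{\ast})]$ and the exact Gaussian identities that make $A_2$ tractable, so your route cannot recover the displayed additive form of $\beta_g$ without substantial extra work that is not in the proposal. Second, the part you identify as ``where most of the work lies'' --- making the region decomposition and concentration radii yield the exact expressions for $\beta_g^{\prime}$ and $\pi_g^{\prime}$ rather than an order estimate --- is precisely the content of the theorem; the paper avoids re-deriving the hardest piece by citing Yan et al., and your plan neither carries out that derivation nor invokes the citation for the term where it is needed. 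As written, the proposal establishes at best that some FOS-type inequality should hold under separation, not the stated bound.
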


\begin{proof}
    We begin this proof by studying the FOS condition for each component $k_g$ separately. Firstly, note that
    \begin{equation*}
        \nabla_{\boldsymbol{\theta}_{k_g}} Q(\boldsymbol{\theta}_g|\boldsymbol{\theta}_g^{\prime}) = \mathbb{E}_{\boldsymbol{x}}[\gamma_{k_g}(\boldsymbol{x},\boldsymbol{\theta}_g)(\boldsymbol{x} - \boldsymbol{\theta}_{k_g}^{\prime})].
    \end{equation*}
    
     Therefore, we can plug in $\boldsymbol{\theta}_{k_g} = M_{k_g}(\boldsymbol{\theta}_g)$ to obtain the following.
    \begin{subequations}
    \begin{align}
        &||\nabla_{M_{k_g}(\boldsymbol{\theta}_g)} Q(M_g(\boldsymbol{\theta}_g)|\boldsymbol{\theta}_g) - \nabla_{M_{k_g}(\boldsymbol{\theta}_g)} Q(M_g(\boldsymbol{\theta}_g)|\boldsymbol{\theta}_g^{\ast})||_2\\ 
        &= ||\mathbb{E}_{\boldsymbol{x}}[(\gamma_{k_g}(\boldsymbol{x},\boldsymbol{\theta}_g) - \gamma_{k_g}(\boldsymbol{x},\boldsymbol{\theta}_g^{\ast}))(\boldsymbol{x} - M_{k_g}(\boldsymbol{\theta}_g))]||_2\\
        &= ||\mathbb{E}_{\boldsymbol{x}}[(\gamma_{k_g}(\boldsymbol{x},\boldsymbol{\theta}_g) - \gamma_{k_g}(\boldsymbol{x},\boldsymbol{\theta}_g^{\ast}))(\boldsymbol{x} - \boldsymbol{\theta}_{k_g} + \boldsymbol{\theta}_{k_g} - M_{k_g}(\boldsymbol{\theta}_g))]||_2\\
        &\nonumber = ||\mathbb{E}_{\boldsymbol{x}}[(\gamma_{k_g}(\boldsymbol{x},\boldsymbol{\theta}_g) - \gamma_{k_g}(\boldsymbol{x},\boldsymbol{\theta}_g^{\ast}))(\boldsymbol{x} - \boldsymbol{\theta}_{k_g})] -\\
        & \label{eq:proofa3} \qquad \qquad \qquad \qquad \qquad \qquad \mathbb{E}_{\boldsymbol{x}}[(\gamma_{k_g}(\boldsymbol{x},\boldsymbol{\theta}_g) - \gamma_{k_g}(\boldsymbol{x},\boldsymbol{\theta}_g^{\ast}))(M_{k_g}(\boldsymbol{\theta}_g) - \boldsymbol{\theta}_{k_g})]||_2\\
        &\nonumber \leq \underbrace{||\mathbb{E}_{\boldsymbol{x}}[(\gamma_{k_g}(\boldsymbol{x},\boldsymbol{\theta}_g) - \gamma_{k_g}(\boldsymbol{x},\boldsymbol{\theta}_g^{\ast}))(\boldsymbol{x} - \boldsymbol{\theta}_{k_g})]||_2}_{A_1} + \\
        &\label{eq:proofa4} \qquad \qquad \qquad \qquad \qquad \quad \underbrace{||\mathbb{E}_{\boldsymbol{x}}[(\gamma_{k_g}(\boldsymbol{x},\boldsymbol{\theta}_g) - \gamma_{k_g}(\boldsymbol{x},\boldsymbol{\theta}_g^{\ast}))(M_{k_g}(\boldsymbol{\theta}_g) - \boldsymbol{\theta}_{k_g})]||_2}_{A_2},
    \end{align}
    \end{subequations}
    where \eqref{eq:proofa3} follows by the linearity of the expectation operator, and \eqref{eq:proofa4} follows from the triangle inequality. Now, it is established in Theorem 4 in \citep{yan2017} that 
    \begin{equation}
    A_1 \leq \frac{\beta_g^{\prime}}{K_g} \sum_{k_g=1}^{K_g} ||\boldsymbol{\theta}_{k_g} - \boldsymbol{\theta}_{k_g}^{\ast}||_2,
    \end{equation}
    where $\beta_g^{\prime}$ is defined in our theorem statement. Therefore, it remains to obtain an upper bound for $A_2$. We begin this as follows.
    \begin{subequations}
    \begin{align}
        A_2 &\coloneqq ||\mathbb{E}_{\boldsymbol{x}}[(\gamma_{k_g}(\boldsymbol{x},\boldsymbol{\theta}_g) - \gamma_{k_g}(\boldsymbol{x},\boldsymbol{\theta}_g^{\ast}))(M_{k_g}(\boldsymbol{\theta}_g) - \boldsymbol{\theta}_{k_g})]||_2\\
        &\label{eq:proofb1} = ||\mathbb{E}_{\boldsymbol{x}}[(\gamma_{k_g}(\boldsymbol{x},\boldsymbol{\theta}_g) - \gamma_{k_g}(\boldsymbol{x},\boldsymbol{\theta}_g^{\ast}))](M_{k_g}(\boldsymbol{\theta}_g) - \boldsymbol{\theta}_{k_g})||_2\\
        &\label{eq:proofb2} = (\mathbb{E}_{\boldsymbol{x}}[(\gamma_{k_g}(\boldsymbol{x},\boldsymbol{\theta}_g)] - \mathbb{E}_{\boldsymbol{x}}[\gamma_{k_g}(\boldsymbol{x},\boldsymbol{\theta}_g^{\ast}))])||M_{k_g}(\boldsymbol{\theta}_g) - \boldsymbol{\theta}_{k_g}||_2\\
        &\label{eq:proofb3} = (\mathbb{E}_{\boldsymbol{x}}[(\gamma_{k_g}(\boldsymbol{x},\boldsymbol{\theta}_g)] - \mathbb{E}_{\boldsymbol{x}}[\gamma_{k_g}(\boldsymbol{x},\boldsymbol{\theta}_g^{\ast}))]) \left | \left |\frac{\mathbb{E}_{\boldsymbol{x}}[\gamma_{k_g}(\boldsymbol{x},\boldsymbol{\theta}_g)\boldsymbol{x}]}{\mathbb{E}_{\boldsymbol{x}}[\gamma_{k_g}(\boldsymbol{x},\boldsymbol{\theta}_g)]} - \boldsymbol{\theta}_{k_g}\right | \right |_2\\
        &\label{eq:proofb4} = \underbrace{\frac{\left |(\mathbb{E}_{\boldsymbol{x}}[(\gamma_{k_g}(\boldsymbol{x},\boldsymbol{\theta}_g)] - \mathbb{E}_{\boldsymbol{x}}[\gamma_{k_g}(\boldsymbol{x},\boldsymbol{\theta}_g^{\ast}))]) \right |}{\mathbb{E}_{\boldsymbol{x}}[\gamma_{k_g}(\boldsymbol{x},\boldsymbol{\theta}_g)]}}_{A_{2a}} \underbrace{|| \mathbb{E}_{\boldsymbol{x}}[\gamma_{k_g}(\boldsymbol{x},\boldsymbol{\theta}_g)(\boldsymbol{x} - \boldsymbol{\theta}_{k_g})] ||_2}_{A_{2b}}
    \end{align}
    \end{subequations}
    where \eqref{eq:proofb1} is obtained by realizing that $M_{k_g}(\boldsymbol{\theta}_g)$ is comprised of expectations in $\boldsymbol{x}$, and thus is no longer random, and \eqref{eq:proofb2} follows from the linearity of the expectation, and from the fact that the expectation terms become scalar quantities, and can therefore be taken outside the norm. Now, we must obtain upper bounds for the terms $A_{2a}$ and $A_{2b}$. In bounding $A_{2a}$, we proceed by obtaining an upper bound for the numerator and a lower bound for the denominator. To achieve this, firstly observe the following.
    \begin{subequations}
    \begin{align}
        \mathbb{E}_{\boldsymbol{x}}[\gamma_{k_g}(\boldsymbol{x},\boldsymbol{\theta}_g^{\ast}))] &= \int_{\boldsymbol{x}} \frac{\pi_{k_g}\phi(\boldsymbol{x}|\boldsymbol{\theta}_{k_g}^{\ast})}{\sum_{j_g=1}^{K_g} \pi_{j_g}\phi(\boldsymbol{x}|\boldsymbol{\theta}_{j_g}^{\ast})} \left ( \sum_{m_g=1}^{K_g} \pi_{m_g}\phi(\boldsymbol{x}|\boldsymbol{\theta}_{m_g}^{\ast}) \right ) d\boldsymbol{x}\\
        & = \int_{\boldsymbol{x}} \pi_{k_g}\phi(\boldsymbol{x}|\boldsymbol{\theta}_{k_g}^{\ast}) d\boldsymbol{x}\\
        & = \pi_{k_g}.
    \end{align}
    \end{subequations}

    Next, we examine the $\mathbb{E}_{\boldsymbol{x}}[\gamma_{k_g}(\boldsymbol{x},\boldsymbol{\theta}_g)]$ term as follows.
    \begin{subequations}
    \begin{align}
        \mathbb{E}_{\boldsymbol{x}}[\gamma_{k_g}(\boldsymbol{x},\boldsymbol{\theta}_g)] &= \sum_{l_g=1}^{K_g} \pi_{l_g} \mathbb{E}_{\boldsymbol{x}}[\gamma_{k_g}(\boldsymbol{x},\boldsymbol{\theta}_g)|\boldsymbol{x} \sim \mathcal{N}(\boldsymbol{\theta}_{l_g}^{\ast},I_d)]\\
        & = \sum_{l_g=1}^{K_g} \pi_{l_g}  \int_{\boldsymbol{x}} \gamma_{k_g}(\boldsymbol{x},\boldsymbol{\theta}_g) \phi(\boldsymbol{x}|\boldsymbol{\theta}_{l_g}^{\ast}) d\boldsymbol{x},
    \end{align}
    \end{subequations}
    which follows from the law of total expectation. Now, we analyze two cases as follows.
    \begin{itemize}
        \item \textbf{Case 1: $l_g = k_g$}
        
        In this case, we have that 
        \begin{equation*}
            \int_{\boldsymbol{x}} \underbrace{\gamma_{k_g}(\boldsymbol{x},\boldsymbol{\theta}_g)}_{\leq 1} \phi(\boldsymbol{x}|\boldsymbol{\theta}_{l_g}^{\ast})d\boldsymbol{x} \leq \int_{\boldsymbol{x}} \phi(\boldsymbol{x}|\boldsymbol{\theta}_{k_g}^{\ast})d\boldsymbol{x} = 1.
        \end{equation*}
        Thus, $\pi_{l_g}  \int_{\boldsymbol{x}} \gamma_{k_g}(\boldsymbol{x},\boldsymbol{\theta}_g) \phi(\boldsymbol{x}|\boldsymbol{\theta}_{l_g}^{\ast}) d\boldsymbol{x} = \pi_{k_g}$

        \item \textbf{Case 2: $l_g \neq k_g$}

        In this case, we have that
        \begin{subequations}
        \begin{align}
            \gamma_{k_g}(\boldsymbol{x},\boldsymbol{\theta}_g) &= \frac{\pi_{k_g} \exp (-\frac{1}{2}||\boldsymbol{x} - \boldsymbol{\theta}_{k_g}||_2^2)}{\sum_{j_g=1}^{K_g}\pi_{j_g} \exp (-\frac{1}{2}||\boldsymbol{x} - \boldsymbol{\theta}_{j_g}||_2^2)}\\
            &\leq \frac{\pi_{k_g} \exp (-\frac{1}{2}||\boldsymbol{x} - \boldsymbol{\theta}_{k_g}||_2^2)}{\pi_{l_g} \exp (-\frac{1}{2}||\boldsymbol{x} - \boldsymbol{\theta}_{l_g}||_2^2)}\\
            & = \frac{\pi_{k_g}}{\pi_{l_g}}\exp \left [-\frac{1}{2}(||\boldsymbol{x} -\boldsymbol{\theta}_{k_g}||_2^2 - ||\boldsymbol{x} - \boldsymbol{\theta}_{l_g}||_2^2) \right].
        \end{align}
        \end{subequations}
        Now, define event $\mathcal{A}_{k_g,r_g} = \{ \boldsymbol{x}\colon \boldsymbol{x} \sim \mathcal{N}(\boldsymbol{\theta}_{l_g}^{\ast},I_d),||\boldsymbol{x} - \boldsymbol{\theta}_{l_g}^{\ast}||_2 \leq r_g\}$, where $r_g \in \mathbb{R}$ is some constant, which we will obtain bounds for later. Since $l_g \neq k_g$ we can obtain a lower bound for the quantity $||\boldsymbol{x} - \boldsymbol{\theta}_{l_g}||_2$ for $\boldsymbol{x} \in \mathcal{A}_{k_g,r_g}$ via the triangle inequality as follows.
    \begin{align*}
        ||\boldsymbol{x} - \boldsymbol{\theta}_{k_g}||_2 & = ||\boldsymbol{x} - \boldsymbol{\theta}_{l_g}^{\ast} + \boldsymbol{\theta}_{l_g}^{\ast} - \boldsymbol{\theta}_{k_g}||_2\\
        &\geq ||\boldsymbol{\theta}_{l_g}^{\ast} - \boldsymbol{\theta}_{k_g}||_2 - ||\boldsymbol{\theta}_{l_g}^{\ast} - \boldsymbol{x}||_2\\
        & \geq ||
        \boldsymbol{\theta}_{l_g}^{\ast} - \boldsymbol{\theta}_{k_g}^{\ast} + \boldsymbol{\theta}_{k_g}^{\ast} - \boldsymbol{\theta}_{k_g}||_2 - r_g\\
        & \geq ||\boldsymbol{\theta}_{l_g}^{\ast} - \boldsymbol{\theta}_{k_g}^{\ast}||_2 - ||\boldsymbol{\theta}_{k_g} - \boldsymbol{\theta}_{k_g}^{\ast}||_2 - r_g\\
        &\geq R_{\text{min}} - a_g - r_g.
    \end{align*}
    Similarly, we can obtain an upper bound for the quantity $||\boldsymbol{x} - \boldsymbol{\theta}_{k_g}||_2$ for $\boldsymbol{x} \in \mathcal{A}_{k_g,r_g}$ as follows.
    \begin{align*}
        ||\boldsymbol{x} - \boldsymbol{\theta}_{l_g}||_2 & = ||\boldsymbol{x} - \boldsymbol{\theta}_{l_g}^{\ast} + \boldsymbol{\theta}_{l_g}^{\ast} - \boldsymbol{\theta}_{l_g}||_2\\
        & \leq ||\boldsymbol{x} - \boldsymbol{\theta}_{l_g}^{\ast}||_2 + ||\boldsymbol{\theta}_{l_g}^{\ast} - \boldsymbol{\theta}_{l_g}||_2\\
        & \leq r_g + a_g.
    \end{align*}
    Therefore, we have that
    \begin{equation*}
        \frac{\pi_{k_g}}{\pi_{l_g}}\exp \left [-\frac{1}{2}(||\boldsymbol{x} -\boldsymbol{\theta}_{k_g}||_2^2 - ||\boldsymbol{x} - \boldsymbol{\theta}_{l_g}||_2^2) \right] \leq  \frac{\pi_{k_g}}{\pi_{l_g}}\exp \left [-\frac{1}{2} R_{\text{min}}^2 - 2R_{\text{min}}(a_g + r_g) \right ],
    \end{equation*}
    with the requirement that $r_g < \frac{R_{\text{min}}}{2} - a_g$ to ensure the negativity of the term inside the exponent. This allows us to write the following for $l_g \neq k_g$.
    \begin{subequations}
    \begin{align}
        &\nonumber \int_{\boldsymbol{x}} \gamma_{k_g}(\boldsymbol{x},\boldsymbol{\theta}_g)\phi(\boldsymbol{x}|\boldsymbol{\theta}_{l_g}^{\ast})d\boldsymbol{x}\\ 
        &\nonumber \leq \int_{\boldsymbol{x} \in \mathcal{A}_{k_g,r_g}} \frac{\pi_{k_g}}{\pi_{l_g}} \exp \left [-\frac{1}{2} R_{\text{min}}^2 - 2R_{\text{min}}(a_g + r_g) \right ]\phi(\boldsymbol{x}|\boldsymbol{\theta}_{l_g}^{\ast})d\boldsymbol{x} +\\
        & \qquad \qquad \qquad \qquad \qquad \qquad \qquad \qquad \qquad \int_{\boldsymbol{x} \notin \mathcal{A}_{k_g,r_g}}\underbrace{\gamma_{k_g}(\boldsymbol{x},\boldsymbol{\theta}_g)}_{\leq 1}\phi(\boldsymbol{x}|\boldsymbol{\theta}_{l_g}^{\ast})d\boldsymbol{x}\\
        & \leq \frac{\pi_{k_g}}{\pi_{l_g}} \exp \left [-\frac{1}{2} R_{\text{min}}^2 - 2R_{\text{min}}(a_g + r_g) \right ] + \int_{\boldsymbol{x} \notin \mathcal{A}_{k_g,r_g}}\phi(\boldsymbol{x}|\boldsymbol{\theta}_{l_g}^{\ast})d\boldsymbol{x}\\
        & = \frac{\pi_{k_g}}{\pi_{l_g}} \exp \left [-\frac{1}{2} R_{\text{min}}^2 - 2R_{\text{min}}(a_g + r_g) \right ] + P(||\boldsymbol{x} - \boldsymbol{\theta}_{l_g}^{\ast}||_2 \geq r_g|\boldsymbol{x} \sim \mathcal{N}(\boldsymbol{\theta}_{l_g}^{\ast},I_d))\\
        &\label{eq:up_bd_exp1} \leq \frac{\pi_{k_g}}{\pi_{l_g}} \exp \left [-\frac{1}{2} R_{\text{min}}^2 - 2R_{\text{min}}(a_g + r_g) \right ] + \exp \left ( -\frac{r_g\sqrt{d}}{2}\right ),
    \end{align}
    \end{subequations}
    where \eqref{eq:up_bd_exp1} follows form standard tail analysis shown in Lemma 8 in \citep{yan2017} for $r_g \geq 2\sqrt{d}$.
    \end{itemize}

    Putting together the two cases analyzed previously, we obtain the following.
    \begin{align*}
        \mathbb{E}_{\boldsymbol{x}}[\gamma_{k_g}(\boldsymbol{x},\boldsymbol{\theta}_g)] &\leq \pi_{k_g} + \sum_{l_g \in [K_g],l_g \neq k_g} \pi_{l_g} \left [ \frac{\pi_{k_g}}{\pi_{l_g}} \exp \left [-\frac{1}{2} R_{\text{min}}^2 - 2R_{\text{min}}(a_g + r_g) \right ] + \exp \left ( -\frac{r_g\sqrt{d}}{2}\right )\right]\\
        & = \pi_{k_g} + \pi_{k_g}(K_g-1)\exp \left [-\frac{1}{2} R_{\text{min}}^2 - 2R_{\text{min}}(a_g + r_g) \right ] + (1 - \pi_{k_g}) \exp \left ( -\frac{r_g\sqrt{d}}{2}\right ).
    \end{align*}

    This allows us to directly observe that 
    \begin{equation*}
        \mathbb{E}_{\boldsymbol{x}}[\gamma_{k_g}(\boldsymbol{x},\boldsymbol{\theta}_g)] - \mathbb{E}_{\boldsymbol{x}}[\gamma_{k_g}(\boldsymbol{x},\boldsymbol{\theta}_g^{\ast})] \leq (K_g-1)\exp \left [-\frac{1}{2} R_{\text{min}}^2 - 2R_{\text{min}}(a_g + r_g) \right ] + (1 - \pi_{k_g}) \exp \left ( -\frac{r_g\sqrt{d}}{2}\right ).
    \end{equation*}

    Now, it remains to obtain a lower bound for $\mathbb{E}_{\boldsymbol{x}}[\gamma_{k_g}(\boldsymbol{x},\boldsymbol{\theta}_g)]$, which we do as follows. First, define an event $\mathcal{B}_{k_g,r_g} = \{ \boldsymbol{x} \colon \boldsymbol{x} \sim \mathcal{N}(\boldsymbol{\theta}_{k_g}^{\ast},I_d),||\boldsymbol{x} - \boldsymbol{\theta}_{k_g}^{\ast}||_2 \leq r_g) \}$, where $r_g \in \mathbb{R}$ is such that $2\sqrt{D} \leq r_g < \frac{R_{\text{min}}}{2} - a_g$ as we saw previously. Then we have that
    \begin{subequations}
    \begin{align}
        \mathbb{E}_{\boldsymbol{x}}[\gamma_{k_g}(\boldsymbol{x},\boldsymbol{\theta}_g)] &\label{eq:pop_conv_lb1} = P(\mathcal{B}_{k_g,r_g})\mathbb{E}_{\boldsymbol{x}}[\gamma_{k_g}(\boldsymbol{x},\boldsymbol{\theta}_g)|\mathcal{B}_{k_g,r_g}] + P(\mathcal{B}_{k_g,r_g}^c)\mathbb{E}_{\boldsymbol{x}}[\gamma_{k_g}(\boldsymbol{x},\boldsymbol{\theta}_g)|\mathcal{B}_{k_g,r_g}^c]\\
        &\label{eq:pop_conv_lb2} \geq P(\mathcal{B}_{k_g,r_g})\mathbb{E}_{\boldsymbol{x}}[\gamma_{k_g}(\boldsymbol{x},\boldsymbol{\theta}_g)|\mathcal{B}_{k_g,r_g}],
    \end{align}
    \end{subequations}
    where \eqref{eq:pop_conv_lb1} follows from the law of total expectation, and \eqref{eq:pop_conv_lb2} follows from the fact that $\gamma_{k_g}(\boldsymbol{x},\boldsymbol{\theta}_g)$ is uniformly lower bounded by $0$. Now, consider $\boldsymbol{x} \in \mathcal{B}_{k_g,r_g}$. For $j_g \in [K_g], \ j_g \neq k_g$, we can lower bound the quantity $||\boldsymbol{x} - \boldsymbol{\theta}_{j_g}||_2$ via the triangle inequality as follows.
    \begin{align*}
        ||\boldsymbol{x} - \boldsymbol{\theta}_{j_g}||_2 &= || \boldsymbol{x} - \boldsymbol{\theta}_{k_g}^{\ast} + \boldsymbol{\theta}_{k_g}^{\ast} - \boldsymbol{\theta}_{j_g} ||_2\\
        & \geq ||\boldsymbol{\theta}_{j_g} - \boldsymbol{\theta}_{k_g}^{\ast}||_2 - ||\boldsymbol{x} - \boldsymbol{\theta}_{k_g}^{\ast}||_2\\
        & \geq ||\boldsymbol{\theta}_{j_g} - \boldsymbol{\theta}_{j_g}^{\ast} + \boldsymbol{\theta}_{j_g}^{\ast} - \boldsymbol{\theta}_{k_g}^{\ast}||_2 - r_g\\
        & \geq ||\boldsymbol{\theta}_{j_g}^{\ast} - \boldsymbol{\theta}_{k_g}^{\ast}||_2 - ||\boldsymbol{\theta}_{j_g} - \boldsymbol{\theta}_{j_g}^{\ast}||_2 - r_g\\
        & \geq R_{\text{min}} - r_g - a_g.
    \end{align*}
    Similarly, for $j_g = k_g$ we can upper bound the quantity $||\boldsymbol{x} - \boldsymbol{\theta}_{j_g}||_2$ via the triangle inequality as follows.
    \begin{align*}
        ||\boldsymbol{x} - \boldsymbol{\theta}_{j_g}||_2 & = ||\boldsymbol{x} - \boldsymbol{\theta}_{k_g}^{\ast} + \boldsymbol{\theta}_{k_g}^{\ast} - \boldsymbol{\theta}_{k_g}||_2\\
        & \leq ||\boldsymbol{x} - \boldsymbol{\theta}_{k_g}^{\ast}||_2 + ||\boldsymbol{\theta}_{k_g}^{\ast} - \boldsymbol{\theta}_{k_g}||_2 \\
        & \leq r_g + a_g.
    \end{align*}

    Now, let us write $\gamma_{k_g}(\boldsymbol{x},\boldsymbol{\theta}_g)$ differently to simplify the analysis. To do that, let us define the following.
    \begin{align*}
        \widecheck{\gamma}_k(\boldsymbol{x},\boldsymbol{\theta}_g) &\coloneqq \frac{1}{\gamma_{k_g}(\boldsymbol{x},\boldsymbol{\theta}_g)}\\
        & = \frac{\sum_{j_g = 1}^{K_g} \pi_{j_g} \phi(\boldsymbol{x}|\boldsymbol{\theta}_{j_g})}{\pi_{k_g}\phi(\boldsymbol{x}|\boldsymbol{\theta}_{k_g})}\\
        & = \sum_{j_g=1}^{K_g} \frac{\pi_{j_g}}{\pi_{k_g}}\exp{\left[ -\frac{1}{2} (||\boldsymbol{x} - \boldsymbol{\theta}_{j_g}||_2^2 - ||\boldsymbol{x} - \boldsymbol{\theta}_{k_g}||_2^2)\right]}
    \end{align*}
    Therefore, given that $\boldsymbol{x} \in \mathcal{B}_{k_g,r_g}$, we can write
    \begin{align*}
        \gamma_{k_g}(\boldsymbol{x},\boldsymbol{\theta}_g) & = \frac{1}{\widecheck{\gamma}_k(\boldsymbol{x},\boldsymbol{\theta}_g)} \\
        & = \frac{1}{\sum_{j_g=1}^{K_g} \frac{\pi_{j_g}}{\pi_{k_g}}\exp{\left[ -\frac{1}{2} (||\boldsymbol{x} - \boldsymbol{\theta}_{j_g}||_2^2 - ||\boldsymbol{x} - \boldsymbol{\theta}_{k_g}||_2^2)\right]}}\\
        & \geq \frac{1}{1 + \sum_{j_g\in [K_g], j_g \neq k_g} \frac{\pi_{j_g}}{\pi_{k_g}}\exp{\left[ -\frac{1}{2} (R_{\text{min}}^2 - 2R_{\text{min}}(r_g+a_g))\right]}}\\
        & = \frac{1}{1 + \frac{1 - \pi_{k_g}}{\pi_{k_g}}\exp{\left[ -\frac{1}{2} (R_{\text{min}}^2 - 2R_{\text{min}}(r_g+a_g))\right]}}.
    \end{align*}
    This lower bound on $\gamma_{k_g}(\boldsymbol{x},\boldsymbol{\theta}_g)$ in the case where $\boldsymbol{x} \in \mathcal{B}_{k_g,r_g}$ allows us to write the following.
    \begin{equation*}
        \mathbb{E}_{\boldsymbol{x}}[\gamma_{k_g}(\boldsymbol{x},\boldsymbol{\theta}_g)|\mathcal{B}_{k_g,r_g}] \geq \frac{1}{1 + \frac{1 - \pi_{k_g}}{\pi_{k_g}}\exp{\left[ -\frac{1}{2} (R_{\text{min}}^2 - 2R_{\text{min}}(r_g+a_g))\right]}}
    \end{equation*}
    Then, we can bound $\mathbb{E}_{\boldsymbol{x}}[\gamma_{k_g}(\boldsymbol{x},\boldsymbol{\theta}_g))]$ as follows.
    \begin{align}
        \mathbb{E}_{\boldsymbol{x}}[\gamma_{k_g}(\boldsymbol{x},\boldsymbol{\theta}_g))] & \geq \frac{P(\mathcal{B}_{k_g,r_g})}{1 + \frac{1 - \pi_{k_g}}{\pi_{k_g}}\exp{\left[ -\frac{1}{2} (R_{\text{min}}^2 - 2R_{\text{min}}(r_g+a_g))\right]}}\\& = \frac{P(\boldsymbol{x} \sim \mathcal{N}(\boldsymbol{\theta}_{k_g}^{\ast},I_d))P(|| \boldsymbol{x} - \boldsymbol{\theta}_{k_g}^{\ast}||_2 \leq r_g|\boldsymbol{x} \sim \mathcal{N}(\boldsymbol{\theta}_{k_g}^{\ast},I_d))}{1 + \frac{1 - \pi_{k_g}}{\pi_{k_g}}\exp{\left[ -\frac{1}{2} (R_{\text{min}}^2 - 2R_{\text{min}}(r_g+a_g))\right]}}\\
        & = \frac{\pi_{k_g} P(|| \boldsymbol{x} - \boldsymbol{\theta}_{k_g}^{\ast}||_2 \leq r_g|\boldsymbol{x} \sim \mathcal{N}(\boldsymbol{\theta}_{k_g}^{\ast},I_d))}{1 + \frac{1 - \pi_{k_g}}{\pi_{k_g}}\exp{\left[ -\frac{1}{2} (R_{\text{min}}^2 - 2R_{\text{min}}(r_g+a_g))\right]}}\\
         &= \frac{\pi_{k_g} (1 - P(|| \boldsymbol{x} - \boldsymbol{\theta}_{k_g}^{\ast}||_2 \geq r_g|\boldsymbol{x} \sim \mathcal{N}(\boldsymbol{\theta}_{k_g}^{\ast},I_d)))}{1 + \frac{1 - \pi_{k_g}}{\pi_{k_g}}\exp{\left[ -\frac{1}{2} (R_{\text{min}}^2 - 2R_{\text{min}}(r_g+a_g))\right]}}\\
         & \geq \frac{\pi_{k_g} \left [1 - \exp \left ( -\frac{r_g\sqrt{d}}{2} \right ) \right ]}{1 + \frac{1 - \pi_{k_g}}{\pi_{k_g}}\exp{\left[ -\frac{1}{2} (R_{\text{min}}^2 - 2R_{\text{min}}(r_g+a_g))\right]}}\\
    \end{align}

    As a result, our previous analysis allows us to write the following uniform bound for the term $A_{2a}$ for all $k_g \in [K_g]$.
    \begin{align*}
        &\frac{(\mathbb{E}_{\boldsymbol{x}}[(\gamma_{k_g}(\boldsymbol{x},\boldsymbol{\theta}_g)] - \mathbb{E}_{\boldsymbol{x}}[\gamma_{k_g}(\boldsymbol{x},\boldsymbol{\theta}_g^{\ast}))])}{\mathbb{E}_{\boldsymbol{x}}[\gamma_{k_g}(\boldsymbol{x},\boldsymbol{\theta}_g)]} \\
        & \leq\max \Bigg \{ \frac{ (K_g-1)\exp \left [-\frac{1}{2} R_{\text{min}}^2 - 2R_{\text{min}}(a_g + r_g) \right ] + (1 - \pi_{\text{min}_g}) \exp \left ( -\frac{r_g\sqrt{d}}{2}\right )}{\frac{\pi_{\text{min}_g} \left [1 - \exp \left ( -\frac{r_g\sqrt{d}}{2} \right ) \right ]}{1 + \frac{1 - \pi_{\text{min}_g}}{\pi_{\text{max}_g}}\exp{\left[ -\frac{1}{2} (R_{\text{min}}^2 - 2R_{\text{min}}(r_g+a_g))\right]}}},\\
        & \qquad \qquad \qquad \qquad \qquad \qquad \qquad  \frac{\pi_{\text{max}_g} \left (1 - \frac{ \left [1 - \exp \left ( -\frac{r_g\sqrt{d}}{2} \right ) \right ]}{1 + \frac{1 - \pi_{\text{max}_g}}{\pi_{\text{min}_g}}\exp{\left[ -\frac{1}{2} (R_{\text{min}}^2 - 2R_{\text{min}}(r_g+a_g))\right]}} \right )}{\frac{\pi_{\text{min}_g} \left [1 - \exp \left ( -\frac{r_g\sqrt{d}}{2} \right ) \right ]}{1 + \frac{1 - \pi_{\text{min}_g}}{\pi_{\text{max}_g}}\exp{\left[ -\frac{1}{2} (R_{\text{min}}^2 - 2R_{\text{min}}(r_g+a_g))\right]}}}\Bigg \}\\
        &\coloneqq \pi_g^{\prime}
    \end{align*}

    Thus, it remains to obtain an upper bound for $A_{2b}$. We achieve this as follows.
    \begin{subequations}
    \begin{align}
        A_{2b} & \coloneqq || \mathbb{E}_{\boldsymbol{x}} [\gamma_{k_g}(\boldsymbol{x},\boldsymbol{\theta}_g)(\boldsymbol{x} - \boldsymbol{\theta}_{k_g})] ||_2\\
        &\label{eq:proofd1} = || \mathbb{E}_{\boldsymbol{x}} [\gamma_{k_g} (\boldsymbol{x},\boldsymbol{\theta}_g)(\boldsymbol{x} - \boldsymbol{\theta}_{k_g})] - \mathbb{E}_{\boldsymbol{x}} [\gamma_{k_g} (\boldsymbol{x},\boldsymbol{\theta}_g^{\ast})(\boldsymbol{x} - \boldsymbol{\theta}_{k_g})] + \mathbb{E}_{\boldsymbol{x}} [\gamma_{k_g} (\boldsymbol{x},\boldsymbol{\theta}_g^{\ast})(\boldsymbol{x} - \boldsymbol{\theta}_{k_g})]||_2\\
        &\label{eq:proofd2} \leq ||\mathbb{E}_{\boldsymbol{x}} [(\gamma_{k_g} (\boldsymbol{x},\boldsymbol{\theta}_g) - \gamma_{k_g} (\boldsymbol{x},\boldsymbol{\theta}_g^{\ast})(\boldsymbol{x} - \boldsymbol{\theta}_{k_g})]||_2 + ||\mathbb{E}_{\boldsymbol{x}}  [\gamma_{k_g} (\boldsymbol{x},\boldsymbol{\theta}_g^{\ast})(\boldsymbol{x} - \boldsymbol{\theta}_{k_g})]||_2\\
        &\label{eq:proofd3} \leq \frac{\beta_g^{\prime}}{K_g} \sum_{j_g=1}^{K_g} ||\boldsymbol{\theta}_{j_g} - \boldsymbol{\theta}_{j_g}^{\ast}||_2 + ||\mathbb{E}_{\boldsymbol{x}} [\gamma_{k_g} (\boldsymbol{x},\boldsymbol{\theta}_g^{\ast})\boldsymbol{x}] - \mathbb{E}_{\boldsymbol{x}} [\gamma_{k_g} (\boldsymbol{x},\boldsymbol{\theta}_g^{\ast})\boldsymbol{\theta}_{k_g}]||_2\\
        &\nonumber = \frac{\beta_g^{\prime}}{K_g} \sum_{j_g=1}^{K_g} ||\boldsymbol{\theta}_{j_g} - \boldsymbol{\theta}_{j_g}^{\ast}||_2  + \Bigg | \Bigg | \int_{\boldsymbol{x}} \frac{\pi_{k_g} \phi(\boldsymbol{x}|\boldsymbol{\theta}_{k_g}^{\ast})}{\sum_{j_g=1}^{K_g} \pi_{j_g} \phi(\boldsymbol{x}|\boldsymbol{\theta}_{j_g}^{\ast})} \boldsymbol{x} \left  (\sum_{m_g=1}^{K_g} \pi_{m_g} \phi(\boldsymbol{x}|\boldsymbol{\theta}_{m_g}^{\ast}) \right ) d\boldsymbol{x} -\\
        &\label{eq:proofd4} \qquad \qquad \qquad \qquad \qquad \qquad \qquad  \int_{\boldsymbol{x}} \frac{\pi_{k_g} \phi(\boldsymbol{x}|\boldsymbol{\theta}_{k_g}^{\ast})}{\sum_{j_g=1}^{K_g} \pi_{j_g} \phi(\boldsymbol{x}|\boldsymbol{\theta}_{j_g}^{\ast})} \boldsymbol{\theta}_{k_g} \left  (\sum_{m_g=1}^{K_g} \pi_{m_g} \phi(\boldsymbol{x}|\boldsymbol{\theta}_{m_g}^{\ast}) \right ) d\boldsymbol{x} \Bigg | \Bigg |_2\\
        &\label{eq:proofd6} = \frac{\beta_g^{\prime}}{K_g} \sum_{j_g=1}^{K_g} ||\boldsymbol{\theta}_{j_g} - \boldsymbol{\theta}_{j_g}^{\ast}||_2 + \pi_{k_g} ||\boldsymbol{\theta}_{k_g} - \boldsymbol{\theta}_{k_g}^{\ast}||_2\\
        &\label{eq:proofd7} \leq \frac{\beta_g^{\prime}}{K_g} \sum_{j_g=1}^{K_g} ||\boldsymbol{\theta}_{j_g} - \boldsymbol{\theta}_{j_g}^{\ast}||_2 + \pi_{\text{max}_g} ||\boldsymbol{\theta}_{k_g} - \boldsymbol{\theta}_{k_g}^{\ast}||_2,
    \end{align}
    \end{subequations}
    where \eqref{eq:proofd2} follows from the linearity of the expectation and the triangle inequality, and \eqref{eq:proofd3} follows by leveraging the upper bound derived in Theorem 4 in \citep{yan2017} as discussed earlier. Therefore, we can summarize the results we have obtained thus far as follows.
    \begin{align*}
        &||\nabla_{M_{k_g}(\boldsymbol{\theta}_g)} Q(M_g(\boldsymbol{\theta}_g)|\boldsymbol{\theta}_g) - \nabla_{M_{k_g}(\boldsymbol{\theta}_g)} Q(M_g(\boldsymbol{\theta}_g)|\boldsymbol{\theta}_g^{\ast})||_2\\
        &\leq \frac{\beta_g^{\prime}}{K_g} \sum_{j_g=1}^{K_g} ||\boldsymbol{\theta}_{j_g} - \boldsymbol{\theta}_{j_g}^{\ast}||_2 + \pi_g^{\prime} \left [ \frac{\beta_g^{\prime}}{K_g} \sum_{j_g=1}^{K_g} ||\boldsymbol{\theta}_{j_g} - \boldsymbol{\theta}_{j_g}^{\ast}||_2 + \pi_{\text{max}_g} ||\boldsymbol{\theta}_{k_g} - \boldsymbol{\theta}_{k_g}^{\ast}||_2 \right ]\\
        & = \frac{(1 + \pi_g^{\prime})\beta_g^{\prime}}{ K_g} \sum_{j_g=1}^{K_g} ||\boldsymbol{\theta}_{j_g} - \boldsymbol{\theta}_{j_g}^{\ast}||_2 + \pi_g^{\prime} \pi_{\text{max}_g}|| \boldsymbol{\theta}_{k_g} - \boldsymbol{\theta}_{k_g}^{\ast}||_2.
    \end{align*}

    Now, observe that
    \begin{align*}
        &||\nabla Q(M_g(\boldsymbol{\theta}_g)|\boldsymbol{\theta}_g) - \nabla Q(M_g(\boldsymbol{\theta}_g)|\boldsymbol{\theta}_g^{\ast})||_2^2\\
        &= \sum_{k_g=1}^{K_g} ||\nabla_{M_{k_g}(\boldsymbol{\theta}_g)} Q(M_g(\boldsymbol{\theta}_g)|\boldsymbol{\theta}_g) - \nabla_{M_{k_g}(\boldsymbol{\theta}_g)} Q(M_g(\boldsymbol{\theta}_g)|\boldsymbol{\theta}_g^{\ast})||_2^2\\
        &\leq \sum_{k_g=1}^{K_g} \left [ \frac{(1 + \pi_g^{\prime})\beta_g^{\prime}}{ K_g} \sum_{j_g=1}^{K_g} ||\boldsymbol{\theta}_{j_g} - \boldsymbol{\theta}_{j_g}^{\ast}||_2 + \pi_g^{\prime} \pi_{\text{max}_g}|| \boldsymbol{\theta}_{k_g} - \boldsymbol{\theta}_{k_g}^{\ast}||_2 \right]^2
    \end{align*}
    Expanding each term inside the square root results in
    \begin{subequations}
    \begin{align}
        & ||\nabla Q(M_g(\boldsymbol{\theta}_g)|\boldsymbol{\theta}_g) - \nabla Q(M_g(\boldsymbol{\theta}_g)|\boldsymbol{\theta}_g^{\ast})||_2^2 \\
        &\nonumber \leq \sum_{k_g=1}^{K_g} \Bigg [\frac{(1 + \pi_g^{\prime})^2\beta_g^{\prime^2}}{ K_g^2} \left (\sum_{j_g=1}^{K_g} ||\boldsymbol{\theta}_{j_g} - \boldsymbol{\theta}_{j_g}^{\ast}||_2 \right )^2 + \\
        & \qquad \qquad 2 \frac{(1 + \pi_g^{\prime})\beta_g^{\prime}}{ K_g} \pi_g^{\prime} \pi_{\text{max}_g}|| \boldsymbol{\theta}_{k_g} - \boldsymbol{\theta}_{k_g}^{\ast}||_2 \sum_{j_g=1}^{K_g} ||\boldsymbol{\theta}_{j_g} - \boldsymbol{\theta}_{j_g}^{\ast}||_2 + \pi^{\prime^2} \pi_{\text{max}_g}^2|| \boldsymbol{\theta}_{k_g} - \boldsymbol{\theta}_{k_g}^{\ast}||_2^2 \Bigg ]\\
        &\nonumber = \frac{(1 + \pi_g^{\prime})^2\beta_g^{\prime^2}}{ K_g} \left (\sum_{j_g=1}^{K_g} ||\boldsymbol{\theta}_{j_g} - \boldsymbol{\theta}_{j_g}^{\ast}||_2 \right )^2 + \\
        & \qquad 2 \frac{(1 + \pi_g^{\prime})\beta_g^{\prime}}{ K_g} \pi_g^{\prime} \pi_{\text{max}_g} \sum_{k_g=1}^{K_g}|| \boldsymbol{\theta}_{k_g} - \boldsymbol{\theta}_{k_g}^{\ast}||_2 \sum_{j_g=1}^{K_g} ||\boldsymbol{\theta}_{j_g} - \boldsymbol{\theta}_{j_g}^{\ast}||_2 + \pi^{\prime^2} \pi_{\text{max}_g}^2 \sum_{k_g=1}^{K_g} || \boldsymbol{\theta}_{k_g} - \boldsymbol{\theta}_{k_g}^{\ast}||_2^2\\
        & \nonumber = \frac{(1 + \pi_g^{\prime})^2\beta_g^{\prime^2}}{ K_g} \left (\sum_{j_g=1}^{K_g} ||\boldsymbol{\theta}_{j_g} - \boldsymbol{\theta}_{j_g}^{\ast}||_2 \right )^2 + \\
        & \label{eq:fos_last} \qquad \qquad \qquad \qquad 2 \frac{(1 + \pi_g^{\prime})\beta_g^{\prime}}{ K_g} \pi_g^{\prime} \pi_{\text{max}_g} \left (\sum_{j_g=1}^{K_g} ||\boldsymbol{\theta}_{j_g} - \boldsymbol{\theta}_{j_g}^{\ast}||_2 \right )^2 + \pi^{\prime^2} \pi_{\text{max}_g}^2 || \boldsymbol{\theta}_g - \boldsymbol{\theta}_g^{\ast}||_2^2\\
        & \leq \left [ (1 + \pi_g^{\prime})^2\beta_g^{\prime^2} + 2 (1 + \pi_g^{\prime})\beta_g^{\prime} \pi_g^{\prime} \pi_{\text{max}_g} +\pi^{\prime^2} \pi_{\text{max}_g}^2 \right]|| \boldsymbol{\theta}_g - \boldsymbol{\theta}_g^{\ast}||_2^2.
    \end{align}
\end{subequations}

    To see how we obtain \eqref{eq:fos_last}, consider a vector $u$ of length $K_g$, all of whose entries are $1$, and a vector $v$ of length $K_g$, with $k_g^{th}$ entry $||\boldsymbol{\theta}_{k_g} - \boldsymbol{\theta}_{k_g}^{\ast}||_2$. Now we can rely on the Cauchy-Schwarz inequality to write:
    \begin{align*}
        u^{\top}v & = \sum_{j_g=1}^{K_g} ||\boldsymbol{\theta}_{j_g} - \boldsymbol{\theta}_{j_g}^{\ast}||_2\\
        & \leq ||u||_2 ||v||_2\\
        & = \sqrt{K_g} \sqrt{\sum_{k_g=1}^{K_g} ||\boldsymbol{\theta}_{k_g} - \boldsymbol{\theta}_{k_g}^{\ast}||_2^2}\\
        & = \sqrt{K_g} ||\boldsymbol{\theta}_g-\boldsymbol{\theta}_g^{\ast}||_2
    \end{align*}
    Thus, this allows us to see that
    \begin{equation*}
        \left (\sum_{j_g=1}^{K_g} ||\boldsymbol{\theta}_{j_g} - \boldsymbol{\theta}_{j_g}^{\ast}||_2 \right )^2 \leq K_g||\boldsymbol{\theta}_g - \boldsymbol{\theta}_g^{\ast}||_2^2.
    \end{equation*}

    Therefore, we obtain our final result by taking the square root of both sides, resulting in the following:
    \begin{equation*}
        ||\nabla Q(M_g(\boldsymbol{\theta}_g)|\boldsymbol{\theta}_g) - \nabla Q(M_g(\boldsymbol{\theta}_g)|\boldsymbol{\theta}_g^{\ast})||_2 \leq \underbrace{((1 + \pi_g^{\prime})\beta_g^{\prime} + \pi_g^{\prime}\pi_{\text{max}_g})}_{\beta_g}||\boldsymbol{\theta}_g-\boldsymbol{\theta}_g^{\ast}||_2
    \end{equation*}
    
\end{proof}

\begin{theorem} [GMM M-Step Contraction Region] \label{thm:contr_reg}
    Suppose all the conditions of Theorem \ref{thm:fos} hold, and the radius $a_g$ of the contraction region at client $g$ is such that
    \begin{equation*}
        a \leq \frac{R_{\text{min}}}{2} - \sqrt{\min\{d,K_g\}} \mathcal{O} \left ( \sqrt{\log \left ( \max \left \{ \frac{K_g^2 \kappa_g}{\frac{\pi_{\text{min}_g} - \pi_g^{\prime}\pi_{\text{max}_g}}{1+\pi_g^{\prime}}},R_{\text{max}},\min \{ d,K_g\} \right \} \right )} \right ).
    \end{equation*}
    Then, the contraction parameter $\frac{\beta_g}{\lambda_g}$ associated with the population M-step of the GMM described in this section is less than $1$.
    
\end{theorem}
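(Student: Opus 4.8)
The plan is to turn the requirement $\beta_g/\lambda_g<1$ into an explicit upper bound on the contraction radius $a_g$. Two ingredients are already in hand: Section~\ref{sec:gmm} records that the strong concavity parameter of $Q_g(\boldsymbol{\theta}_g|\boldsymbol{\theta}_g^{\prime})$ evaluated at $\boldsymbol{\theta}_g^{\prime}=\boldsymbol{\theta}_g^{\ast}$ equals $\lambda_g=\pi_{\text{min}_g}$, and Theorem~\ref{thm:fos} gives $\beta_g=(1+\pi_g^{\prime})\beta_g^{\prime}+\pi_g^{\prime}\pi_{\text{max}_g}$ with $\beta_g^{\prime}=K_g^{2}(2\kappa+4)(2R_{\text{max}}+\min\{d,K_g\})^{2}\exp\!\bigl(-(\tfrac{R_{\text{min}}}{2}-a_g)^{2}\sqrt{\min\{d,K_g\}}/8\bigr)$, where $\kappa=\kappa_g=\pi_{\text{max}_g}/\pi_{\text{min}_g}$. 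Hence $\beta_g/\lambda_g<1$ is equivalent to $(1+\pi_g^{\prime})\beta_g^{\prime}+\pi_g^{\prime}\pi_{\text{max}_g}<\pi_{\text{min}_g}$, i.e.\ to
\begin{equation*}
\beta_g^{\prime} \;<\; \frac{\pi_{\text{min}_g}-\pi_g^{\prime}\pi_{\text{max}_g}}{1+\pi_g^{\prime}} \;=:\; S_g .
\end{equation*}

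Before solving for $a_g$ I would first secure $S_g>0$, i.e.\ $\pi_g^{\prime}\pi_{\text{max}_g}<\pi_{\text{min}_g}$. From the proof of Theorem~\ref{thm:fos}, $\pi_g^{\prime}$ is a ratio whose numerator is a sum of exponentials of the shape $\exp(-r_g\sqrt{d}/2)$ and $\exp\!\bigl(-\tfrac12(R_{\text{min}}^{2}-2R_{\text{min}}(r_g+a_g))\bigr)$ and whose denominator tends to $\pi_{\text{min}_g}$; consequently $\pi_g^{\prime}\to0$ as the tail parameter $r_g$ grows. Under the separation hypothesis $R_{\text{min}}=\tilde{\Omega}(\sqrt{\min\{d,K_g\}})$ inherited from Theorem~\ref{thm:fos}, the admissible window $2\sqrt{d}\le r_g<\tfrac{R_{\text{min}}}{2}-a_g$ is nonempty and wide enough to pick $r_g$ so that $\pi_g^{\prime}<\pi_{\text{min}_g}/\pi_{\text{max}_g}$ (and in particular $\pi_g^{\prime}\le1$); decreasing $a_g$ only shrinks $\pi_g^{\prime}$ further, so no conflict arises with the radius bound derived below.

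With $S_g>0$, taking logarithms in $\beta_g^{\prime}<S_g$ and rearranging yields the equivalent inequality
\begin{equation*}
\Bigl(\tfrac{R_{\text{min}}}{2}-a_g\Bigr)^{2}\sqrt{\min\{d,K_g\}} \;>\; 8\Bigl[\,2\log K_g + \log(2\kappa+4) + 2\log\bigl(2R_{\text{max}}+\min\{d,K_g\}\bigr) + \log\tfrac{1}{S_g}\,\Bigr].
\end{equation*}
Using $\log(a+b)\le\log 2+\log\max\{a,b\}$, $2\kappa+4\le6\kappa$, and $K_g\ge2,\ \kappa\ge1,\ S_g\le1$ (so the bracket is positive and every argument below exceeds $1$), the bracket is $\mathcal{O}(\log M_g)$ with $M_g=\max\{K_g^{2}\kappa_g/S_g,\ R_{\text{max}},\ \min\{d,K_g\}\}$. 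Dividing by $\sqrt{\min\{d,K_g\}}$ and taking square roots, the displayed inequality holds whenever $\tfrac{R_{\text{min}}}{2}-a_g\ge(\min\{d,K_g\})^{-1/4}\mathcal{O}(\sqrt{\log M_g})$ (with a suitably large absolute constant in $\mathcal{O}(\cdot)$); and since $(\min\{d,K_g\})^{-1/4}\le\sqrt{\min\{d,K_g\}}$ whenever $\min\{d,K_g\}\ge1$, it is in turn implied by
\begin{equation*}
\tfrac{R_{\text{min}}}{2}-a_g \;\ge\; \sqrt{\min\{d,K_g\}}\cdot\mathcal{O}\!\left(\sqrt{\log\Bigl(\max\Bigl\{\tfrac{K_g^{2}\kappa_g}{(\pi_{\text{min}_g}-\pi_g^{\prime}\pi_{\text{max}_g})/(1+\pi_g^{\prime})},\,R_{\text{max}},\,\min\{d,K_g\}\Bigr\}\Bigr)}\,\right),
\end{equation*}
which is exactly the hypothesis on $a_g$; therefore $\beta_g/\lambda_g<1$.

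The main obstacle I anticipate is the bookkeeping around $\pi_g^{\prime}$: it is defined only implicitly in the proof of Theorem~\ref{thm:fos} in terms of $a_g$, $r_g$, $R_{\text{min}}$, $d$, $\pi_{\text{min}_g}$, $\pi_{\text{max}_g}$, and it appears simultaneously in the slack $S_g$ being lower-bounded and inside the logarithm $\log M_g$. One must check that the constraints $2\sqrt{d}\le r_g<\tfrac{R_{\text{min}}}{2}-a_g$, together with the new radius bound, are jointly satisfiable under $R_{\text{min}}=\tilde{\Omega}(\sqrt{\min\{d,K_g\}})$, and that driving $S_g$ toward $0$ does not inflate $\log M_g$ faster than the factor $(\tfrac{R_{\text{min}}}{2}-a_g)^{2}$ can absorb; once this is settled, the remaining steps are routine logarithm and square‑root manipulations.
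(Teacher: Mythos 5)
Your proposal is correct and follows essentially the same route as the paper's proof: both reduce the claim to $\beta_g^{\prime} < \frac{\pi_{\text{min}_g}-\pi_g^{\prime}\pi_{\text{max}_g}}{1+\pi_g^{\prime}}$, invert the exponential in $\beta_g^{\prime}$ to obtain the intermediate requirement $\frac{R_{\text{min}}}{2}-a_g \geq (\min\{d,K_g\})^{-1/4}\,\mathcal{O}\bigl(\sqrt{\log(\cdot)}\bigr)$, and then coarsen $(\min\{d,K_g\})^{-1/4}$ to $\sqrt{\min\{d,K_g\}}$ while absorbing the logarithmic terms into $\mathcal{O}\bigl(\sqrt{\log\max\{\cdots\}}\bigr)$. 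The only cosmetic difference is that the paper additionally folds the radius condition inherited from Theorem \ref{thm:fos} into the same maximum, whereas you (legitimately) treat it as separately assumed; your extra remarks on securing $S_g>0$ via the choice of $r_g$ address a point the paper leaves implicit.
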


\begin{proof}
In order to guarantee that the population M-step is contractive for the GMM described, we must show that 
    \begin{equation}\label{eq:proof_conv1}
        \beta_g^{\prime} < \frac{\pi{\text{min}_g} - \pi_g^{\prime}\pi{\text{max}_g}}{1+\pi_g^{\prime}}.
    \end{equation}
    We can plug $\beta_g^{\prime}$ from the statement of Theorem \ref{thm:fos} into inequality \eqref{eq:proof_conv1} and rearrange terms to obtain the following.
    \begin{equation*}
        a_g \leq \frac{R_{\text{min}}}{2}- \frac{2\sqrt{2}}{\sqrt[4]{\min\{d,K_g\}}}\sqrt{\log{\left (\frac{K_g^2(2\kappa_g+4)(2R_{\text{max}}+ \min \{ d,K_g\})^2}{\frac{\pi{\text{min}_g} - \pi_g^{\prime}\pi{\text{max}_g}}{1+\pi_g^{\prime}}} \right )}}.
    \end{equation*}
    Subsequently, we can combine this upper bound on $a$ with the one presented in the statement of Theorem \ref{thm:fos} to obtain the following.
    \begin{align*}
        a_g \leq \frac{R_{\text{min}}}{2}- \max \Bigg \{& \underbrace{\frac{2\sqrt{2}}{\sqrt[4]{\min\{d,K_g\}}}\sqrt{\log{\left (\frac{K_g^2(2\kappa_g+4)(2R_{\text{max}}+ \min \{ d,K_g\})^2}{\frac{\pi{\text{min}_g} - \pi_g^{\prime}\pi{\text{max}_g}}{1+\pi_g^{\prime}}} \right )}}}_{A_1},\\
        &\underbrace{\sqrt{\min\{d,K_g\}} \max \{4\sqrt{2[\log(R_{\text{min}}/4)]_+},8\sqrt{3}\}}_{A_2} \Bigg \}
    \end{align*}
    
    \noindent
    Now, we derive an upper bound to the maximization term. In doing so, we begin by obtaining upper bounds for $A_1$ and $A_2$ as follows, considering $A_1$ first.

    \begin{subequations}
    \begin{align}
        \nonumber A_1 &\coloneqq \frac{2\sqrt{2}}{\sqrt[4]{\min\{d,K_g\}}}\sqrt{\log{\left (\frac{K_g^2(2\kappa_g+4)(2R_{\text{max}}+ \min \{ d,K_g\})^2}{\frac{\pi{\text{min}_g} - \pi_g^{\prime}\pi{\text{max}_g}}{1+\pi_g^{\prime}}} \right )}}\\
        &\label{eq:proof_conva1}\leq c\sqrt{\log{\left (\frac{K_g^2(2\kappa_g+4)(2R_{\text{max}}+ \min \{ d,K_g\})^2}{\frac{\pi{\text{min}_g} - \pi_g^{\prime}\pi{\text{max}_g}}{1+\pi_g^{\prime}}} \right )}}\\
        &\label{eq:proof_conva2}\leq c\sqrt{\log{\left (\frac{c_1K_g^2\kappa_g(2R_{\text{max}}+ \min \{ d,K_g\})^2}{\frac{\pi{\text{min}_g} - \pi_g^{\prime}\pi{\text{max}_g}}{1+\pi_g^{\prime}}} \right )}}\\
        &\label{eq:proof_conva3} = c\sqrt{\log{\left (\frac{c_1K_g^2\kappa_g}{\frac{\pi{\text{min}_g} - \pi_g^{\prime}\pi{\text{max}_g}}{1+\pi_g^{\prime}}} \right )} + 2 \log{(2R_{\text{max}}+ \min \{ d,K_g\})}}\\
        &\label{eq:proof_conva4} \leq c \sqrt{\min\{d,K_g\}} \sqrt{\log{\left (\frac{c_1K_g^2\kappa_g}{\frac{\pi{\text{min}_g} - \pi_g^{\prime}\pi{\text{max}_g}}{1+\pi_g^{\prime}}} \right )} + c_2 \log{(c_3R_{\text{max}}+e+ \min \{ d,K_g\})}},
    \end{align}
    \end{subequations}
    where \eqref{eq:proof_conva1} follows by plugging in a constant $c$, and recalling that $\min\{d,K_g\} > 1$, \eqref{eq:proof_conva2} is obtained by noting that $\kappa_g \geq 1$ and choosing $c_1 \geq 6$, and \eqref{eq:proof_conva4} again uses the fact that $\min\{d,K_g\} > 1$, as well as the monotonicity of the $\log$ function, and plugging in constants $c_2,c_3>1$.

    \noindent
    Now, we derive an upper bound to $A_2$ as follows.
    \begin{subequations}
    \begin{align}
         \nonumber A_2 &\coloneqq \sqrt{\min\{d,K_g\}} \max \{4\sqrt{2[\log(R_{\text{min}}/4)]_+},8\sqrt{3}\} \\
        &\label{eq:proof_convb1}= \sqrt{\min\{d,K_g\}} \max \{c_1\sqrt{[\log(R_{\text{min}}/4)]_+},c_2\} \\
        &\label{eq:proof_convb2} \leq \sqrt{\min\{d,K_g\}} \max \{c_1\sqrt{\log(R_{\text{max}}+e)},c_2\} \\
        &\label{eq:proof_convb3} \leq \sqrt{\min\{d,K_g\}} \max \{c_1\sqrt{\log(R_{\text{max}}+e)},c_2\sqrt{\log(R_{\text{max}}+e)}\} \\
        &\label{eq:proof_convb4} \leq \sqrt{\min\{d,K_g\}} (c_1\sqrt{\log(R_{\text{max}}+e)}+c_2\sqrt{\log(R_{\text{max}}+e)})\\
        &\label{eq:proof_convb5} = c^{\prime} \sqrt{\min\{d,K_g\}} \sqrt{\log(R_{\text{max}}+e)},
    \end{align}
    \end{subequations}
    where we obtain \eqref{eq:proof_convb1} by rewriting the constants as $c_1,c_2 > 1$, \eqref{eq:proof_convb2} is obtained by noting that $R_{\text{max}} \geq R_{\text{min}} > \frac{R_{\text{min}}}{4}$, and adding an $e$ term inside the log to ensure that it is greater than or equal to $1$. This allows us to obtain \eqref{eq:proof_convb3}, which is then upper bounded in \eqref{eq:proof_convb4} by noting that both of the terms inside the maximization are greater than $0$. Finally, we obtain the final result by combining the constants $c_1,c_2$ into a new constant $c^{\prime}$.

    \noindent
    Given the bounds derived for $A_1$ and $A_2$, we can write the following
    \begin{subequations}
    \begin{align}
        &\nonumber\max\{A_1,A_2\} \\
        &\nonumber \leq \sqrt{\min\{d,K_g\}} \max \Bigg \{ c \sqrt{\log{\left (\frac{c_1K_g^2\kappa_g}{\frac{\pi{\text{min}_g} - \pi_g^{\prime}\pi{\text{max}_g}}{1+\pi_g^{\prime}}} \right )} + c_2 \log{(c_3R_{\text{max}}+e+ \min \{ d,K_g\})}},\\
        & \qquad \qquad \qquad \qquad \qquad \qquad \qquad \qquad \qquad \qquad \qquad \qquad \qquad \qquad \qquad  c^{\prime} \sqrt{\log(R_{\text{max}}+e)} \Bigg \}\\
        &\label{eq:proof_convc1} \leq \sqrt{\min\{d,K_g\}} (c+c^{\prime}) \sqrt{\log{\left (\frac{c_1K_g^2\kappa_g}{\frac{\pi{\text{min}_g} - \pi_g^{\prime}\pi{\text{max}_g}}{1+\pi_g^{\prime}}} \right )} + c_2 \log{(c_3R_{\text{max}}+e+ \min \{ d,K_g\})}}\\
        & \label{eq:proof_convc2}\leq \sqrt{\min\{d,K_g\}} (c+c^{\prime}) \sqrt{\log{\left (\frac{c_1K_g^2\kappa_g}{\frac{\pi{\text{min}_g} - \pi_g^{\prime}\pi{\text{max}_g}}{1+\pi_g^{\prime}}} \right )} + c_2 \log{(c_3R_{\text{max}}+e) + c_2 \log{(\min \{ d,K_g\})}}}\\
        &\label{eq:proof_convc3} \leq \sqrt{\min\{d,K_g\}} (c+c^{\prime}) \sqrt{c_2\log{\left ( 3\max \left \{\frac{c_1K_g^2\kappa_g}{\frac{\pi{\text{min}_g} - \pi_g^{\prime}\pi{\text{max}_g}}{1+\pi_g^{\prime}}},c_3R_{\text{max}}+e, \min \{ d,K_g\} \right \} \right )}}\\
        &\label{eq:proof_convc4} \leq \sqrt{\min\{d,K_g\}} \mathcal{O} \left ( \sqrt{\log{\left ( \max \left \{\frac{K_g^2\kappa_g}{\frac{\pi{\text{min}_g} - \pi_g^{\prime}\pi{\text{max}_g}}{1+\pi_g^{\prime}}},R_{\text{max}}, \min \{ d,K_g\} \right \} \right )}} \right ),
    \end{align}
    \end{subequations}
    where \eqref{eq:proof_convc1} follows by absorbing $A_2$ into $A_1$, \eqref{eq:proof_convc2} is obtained by noting that each of the three terms within $\log$ functions are greater than $1$, and therefore the log of their sum is upper bounded by the sum of their logs. Finally, \eqref{eq:proof_convc4} is obtained by eliminating all constants. Therefore, we obtain the following condition on $a$.
    \begin{equation*}
        a \leq \frac{R_{\text{min}}}{2} - \sqrt{\min\{d,K_g\}} \mathcal{O} \left ( \sqrt{\log \left ( \max \left \{ \frac{K_g^2 \kappa_g}{\frac{\pi{\text{min}_g} - \pi_g^{\prime}\pi{\text{max}_g}}{1+\pi_g^{\prime}}},R_{\text{max}},\min \{ d,K_g\} \right \} \right )} \right )
    \end{equation*}
\end{proof}

\begin{theorem} [GMM Finite-Sample and Population M-Step Distance] \label{thm:fin_samp_dist}
    Suppose all the conditions and definitions of Theorems \ref{thm:fos} and \ref{thm:contr_reg} hold. Let $\widehat{\omega}(N_g) = \Tilde{\mathcal{O}}\left ( \max \left \{ N_g^{-\frac{1}{2}}K_g^3(1 + R_{\text{max}})^3 \sqrt{d} \max \{1,\log (\kappa_g) \},(1+R_{\text{max}}) \frac{d}{\sqrt{N_g}} \right \}\right )$. Moreover, let $M_{k_g}(\boldsymbol{\theta}_g)$ and $\widehat{M}_{k_g}(\boldsymbol{\theta}_g)$ denote the population and finite-sample M-step maps associated with the GMM described in this section, respectively. Then, we have that 
    \begin{equation*}
    \sup_{\boldsymbol{\theta}_g \in \mathbb{A}_g} \left | \left | M_{k_g}(\boldsymbol{\theta}_g) - \widehat{M}_{k_g}(\boldsymbol{\theta}_g)\right | \right |_2 \leq \frac{1}{\widehat{\tau}_{N_g}} \left ( \widehat{\omega}(N_g) + 2a_g \sqrt{\frac{1}{2N_g}\log \left ( \frac{2}{\eta_g}\right )}\right ),
    \end{equation*}
    with probability at least $\left (1 - \exp{(-cd \log N_g)} \right ) \left ( 1 - \eta_g \right )$, where $c$ is some positive constant, $\mathbb{A}_g = \prod_{k_g=1}^{K_g} \mathbb{B}_2(\boldsymbol{\theta}_{k_g}^{\ast};a_g)$, and 
    \begin{equation*}
       \widehat{\tau}_{N_g} \geq \frac{\pi_{\text{min}_g} \left [1 - \exp \left ( -\frac{r_g\sqrt{d}}{2} \right ) \right ]}{1 + \frac{1 - \pi_{\text{min}_g}}{\pi_{\text{max}_g}}\exp{\left[ -\frac{1}{2} (R_{\text{min}}^2 - 2R_{\text{min}}(r_g+a_g))\right]}} - \sqrt{\frac{1}{2N_g} \log \left ( \frac{2}{\eta_g} \right )}.
    \end{equation*}
    
\end{theorem}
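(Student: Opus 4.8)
The plan is to exploit that both M-step maps in \eqref{eq:gmm_m_step} are ratios sharing the same combinatorial structure, and to reduce the claim to (i) a uniform lower bound on the empirical normalizing constant and (ii) a uniform concentration bound for the empirical gradient of $Q_g$. Fix a component $k_g$ and a conditioning parameter $\boldsymbol{\theta}_g \in \mathbb{A}_g$; write $d_{k_g}(\boldsymbol{\theta}_g) = \mathbb{E}_{\boldsymbol{x}}[\gamma_{k_g}(\boldsymbol{x},\boldsymbol{\theta}_g)]$ and $\widehat{d}_{k_g}(\boldsymbol{\theta}_g) = \frac{1}{N_g}\sum_{n_g=1}^{N_g}\gamma_{k_g}(\widehat{\boldsymbol{x}}_{n_g},\boldsymbol{\theta}_g)$ for the population and finite-sample soft counts, and $A_{k_g}(\boldsymbol{\theta}_g) = \mathbb{E}_{\boldsymbol{x}}[\gamma_{k_g}(\boldsymbol{x},\boldsymbol{\theta}_g)(\boldsymbol{x}-\boldsymbol{\theta}_{k_g})]$, $\widehat{A}_{k_g}(\boldsymbol{\theta}_g) = \frac{1}{N_g}\sum_{n_g=1}^{N_g}\gamma_{k_g}(\widehat{\boldsymbol{x}}_{n_g},\boldsymbol{\theta}_g)(\widehat{\boldsymbol{x}}_{n_g}-\boldsymbol{\theta}_{k_g})$ for the corresponding population and finite-sample per-component gradients of $Q_g(\cdot|\boldsymbol{\theta}_g)$ at $\boldsymbol{\theta}_{k_g}$. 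Since $M_{k_g}(\boldsymbol{\theta}_g) = \boldsymbol{\theta}_{k_g} + A_{k_g}(\boldsymbol{\theta}_g)/d_{k_g}(\boldsymbol{\theta}_g)$ and $\widehat{M}_{k_g}(\boldsymbol{\theta}_g) = \boldsymbol{\theta}_{k_g} + \widehat{A}_{k_g}(\boldsymbol{\theta}_g)/\widehat{d}_{k_g}(\boldsymbol{\theta}_g)$, an elementary manipulation of the two ratios gives
\[
\widehat{M}_{k_g}(\boldsymbol{\theta}_g) - M_{k_g}(\boldsymbol{\theta}_g) = \frac{1}{\widehat{d}_{k_g}(\boldsymbol{\theta}_g)}\Big[\big(\widehat{A}_{k_g}(\boldsymbol{\theta}_g) - A_{k_g}(\boldsymbol{\theta}_g)\big) - \big(M_{k_g}(\boldsymbol{\theta}_g) - \boldsymbol{\theta}_{k_g}\big)\big(\widehat{d}_{k_g}(\boldsymbol{\theta}_g) - d_{k_g}(\boldsymbol{\theta}_g)\big)\Big].
\]
Taking norms and then the supremum over $\boldsymbol{\theta}_g \in \mathbb{A}_g$ reduces the theorem to three uniform estimates: an upper bound on $1/\widehat{d}_{k_g}$, an upper bound on $\|\widehat{A}_{k_g} - A_{k_g}\|_2$, and an upper bound on $\|M_{k_g} - \boldsymbol{\theta}_{k_g}\|_2\,|\widehat{d}_{k_g} - d_{k_g}|$.

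For the first and third factors I would reuse work already done. The proof of Theorem \ref{thm:fos} established, for every $\boldsymbol{\theta}_g \in \mathbb{A}_g$, exactly the population lower bound on $d_{k_g}(\boldsymbol{\theta}_g)$ that appears as the leading term in the definition of $\widehat{\tau}_{N_g}$. Since $\gamma_{k_g}(\cdot,\cdot) \in [0,1]$, replacing a single sample changes $\widehat{d}_{k_g}(\boldsymbol{\theta}_g)$ by at most $1/N_g$, so a bounded-differences (McDiarmid/Hoeffding) argument yields $\sup_{\boldsymbol{\theta}_g \in \mathbb{A}_g}\big(d_{k_g}(\boldsymbol{\theta}_g) - \widehat{d}_{k_g}(\boldsymbol{\theta}_g)\big) \le \sqrt{\tfrac{1}{2N_g}\log(2/\eta_g)}$ and $\sup_{\boldsymbol{\theta}_g \in \mathbb{A}_g}|\widehat{d}_{k_g}(\boldsymbol{\theta}_g) - d_{k_g}(\boldsymbol{\theta}_g)| \le \sqrt{\tfrac{1}{2N_g}\log(2/\eta_g)}$, both with probability at least $1-\eta_g$; subtracting the latter from the population bound gives $\widehat{d}_{k_g}(\boldsymbol{\theta}_g) \ge \widehat{\tau}_{N_g}$ uniformly (implicitly assuming $N_g$ is large enough that $\widehat{\tau}_{N_g} > 0$). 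For $\|M_{k_g}(\boldsymbol{\theta}_g) - \boldsymbol{\theta}_{k_g}\|_2$, Theorem \ref{thm:contr_reg} guarantees $\beta_g/\lambda_g < 1$, so the population contraction \eqref{eq:m_pop_conv} gives $\|M_{k_g}(\boldsymbol{\theta}_g) - \boldsymbol{\theta}_{k_g}^{\ast}\|_2 \le (\beta_g/\lambda_g)\,a_g \le a_g$ for $\boldsymbol{\theta}_g \in \mathbb{A}_g$, and since $\|\boldsymbol{\theta}_{k_g} - \boldsymbol{\theta}_{k_g}^{\ast}\|_2 \le a_g$ on $\mathbb{A}_g$, the triangle inequality gives $\|M_{k_g}(\boldsymbol{\theta}_g) - \boldsymbol{\theta}_{k_g}\|_2 \le 2a_g$. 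Multiplying, the third term is at most $2a_g\sqrt{\tfrac{1}{2N_g}\log(2/\eta_g)}$ on the $\eta_g$-event.

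The remaining and \emph{main obstacle} is the uniform gradient-deviation bound $\sup_{\boldsymbol{\theta}_g \in \mathbb{A}_g}\|\widehat{A}_{k_g}(\boldsymbol{\theta}_g) - A_{k_g}(\boldsymbol{\theta}_g)\|_2 \le \widehat{\omega}(N_g)$ with probability at least $1 - \exp(-cd\log N_g)$. This is a genuine empirical-process statement: $\widehat{A}_{k_g}(\boldsymbol{\theta}_g)$ is an average of i.i.d.\ vectors $\gamma_{k_g}(\widehat{\boldsymbol{x}}_{n_g},\boldsymbol{\theta}_g)(\widehat{\boldsymbol{x}}_{n_g} - \boldsymbol{\theta}_{k_g})$ whose norms are sub-Gaussian (the mixture is a convex combination of isotropic unit-covariance Gaussians with centers at pairwise distance at most $R_{\text{max}}$), and $\boldsymbol{\theta}_g \mapsto \gamma_{k_g}(\boldsymbol{x},\boldsymbol{\theta}_g)$ is Lipschitz with modulus governed by $\kappa_g$. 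The plan is the standard recipe: truncate the tails of $\widehat{\boldsymbol{x}}_{n_g}$ at radius $\mathcal{O}(\sqrt{d}+\sqrt{\log N_g})$, cover $\mathbb{A}_g$ (a product of $K_g$ Euclidean balls in $\mathbb{R}^d$, hence of metric entropy $\mathcal{O}(dK_g\log N_g)$) at a resolution polynomial in $N_g^{-1}$, apply a vector Bernstein inequality at each net point, take a union bound over the net, and absorb the stochastic-equicontinuity correction from interpolating between net points as a lower-order term; tracking the variance proxies (which scale polynomially in $K_g$ and $1+R_{\text{max}}$) and the truncation bias produces exactly $\widehat{\omega}(N_g) = \widetilde{\mathcal{O}}\big(\max\{N_g^{-1/2}K_g^3(1+R_{\text{max}})^3\sqrt{d}\max\{1,\log\kappa_g\},\ (1+R_{\text{max}})d/\sqrt{N_g}\}\big)$ together with the $1 - \exp(-cd\log N_g)$ guarantee. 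Since $\widehat{A}_{k_g}$ is precisely the per-component EM gradient analyzed for isotropic GMMs, I would import this step essentially verbatim from the uniform-deviation arguments of \citep{yan2017} and \citep{balakrishnan2014} rather than redevelop it. Intersecting the covering event with the Hoeffding event (a union bound, hence probability at least $(1-\exp(-cd\log N_g))(1-\eta_g)$) and dividing the displayed decomposition by $\widehat{\tau}_{N_g}$ then yields the stated bound, and in particular identifies $\epsilon_g^{\text{unif}}(N_g,\delta_g)$ of Assumption \ref{assump:sample_bd} with $\widehat{\tau}_{N_g}^{-1}\big(\widehat{\omega}(N_g) + 2a_g\sqrt{\tfrac{1}{2N_g}\log(2/\eta_g)}\big)$ for the isotropic GMM.
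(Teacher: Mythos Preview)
Your proposal is correct and reaches exactly the paper's bound, but the opening step differs. You exploit the explicit ratio form of the isotropic M-step and decompose $\widehat{M}_{k_g}-M_{k_g}$ algebraically; the paper instead invokes strong concavity of $\widehat{Q}_g(\cdot\,|\,\boldsymbol{\theta}_g)$ (whose parameter is precisely $\widehat{d}_{k_g}(\boldsymbol{\theta}_g)$), uses optimality of $\widehat{M}_{k_g}$ and Cauchy--Schwarz to obtain $\widehat{\tau}_{N_g}\|M_{k_g}-\widehat{M}_{k_g}\|_2 \le \|\nabla_{\boldsymbol{\theta}_{k_g}}\widehat{Q}_g(M_{k_g}|\boldsymbol{\theta}_g)-\nabla_{\boldsymbol{\theta}_{k_g}}Q_g(M_{k_g}|\boldsymbol{\theta}_g)\|_2$, and then Taylor-expands (exactly, since the gradient is affine in $\boldsymbol{\theta}_{k_g}$) to land on the same two-term decomposition $\|\widehat{A}_{k_g}-A_{k_g}\|_2 + |\widehat{d}_{k_g}-d_{k_g}|\cdot\|M_{k_g}-\boldsymbol{\theta}_{k_g}\|_2$. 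From that point the proofs coincide: the $2a_g$ bound on $\|M_{k_g}-\boldsymbol{\theta}_{k_g}\|_2$, Hoeffding for $|\widehat{d}_{k_g}-d_{k_g}|$, the lower bound on $\widehat{d}_{k_g}$ via the population estimate from the proof of Theorem~\ref{thm:fos} minus the Hoeffding fluctuation, and a direct appeal to Theorem~5 of \citep{yan2017} for the uniform gradient deviation $\widehat{\omega}(N_g)$. Your route is more elementary and hinges on the closed-form ratio specific to the isotropic GMM; the paper's strong-concavity route is marginally more abstract and would transfer to $Q$-functions without a closed-form M-step.
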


\begin{proof}
    Recall that the finite-sample expected complete-data log-likelihood function associated with out GMM model can be expressed as follows.
    \begin{equation*}
        \widehat{Q}_g(\boldsymbol{\theta}_g|\boldsymbol{\theta}_g^{\prime})) = \frac{1}{N_g}\sum_{n_g=1}^{N_g} \sum_{k_g=1}^{K_g} \gamma_{k_g}(\widehat{\boldsymbol{x}}_{n_g},\boldsymbol{\theta}_g^{\prime})(\log \pi_{k_g} + \log \phi(\widehat{\boldsymbol{x}}_{n_g}|\boldsymbol{\theta}_{k_g})),
    \end{equation*}
    Moreover, its gradient and Hessian can be written as follows.
    \begin{subequations}
    \begin{equation}
        \nabla_{\boldsymbol{\theta}_{k_g}} \widehat{Q}_g(\boldsymbol{\theta}_g|\boldsymbol{\theta}_g^{\prime})) = \frac{1}{N_g}\sum_{n_g=1}^{N_g} \gamma_{k_g}(\widehat{\boldsymbol{x}}_{n_g},\boldsymbol{\theta}_g^{\prime})(\widehat{\boldsymbol{x}}_{n_g} - \boldsymbol{\theta}_{k_g}).
    \end{equation}
    \begin{equation}
    \nabla^2_{\boldsymbol{\theta}_{k_g}}\widehat{Q}_g(\boldsymbol{\theta}_g|\boldsymbol{\theta}_g^{\prime})) = -\frac{1}{N_g} \sum_{n_g=1}^{N_g} \gamma_{k_g}(\widehat{\boldsymbol{x}}_{n_g},\boldsymbol{\theta}_g^{\prime}) I_d.
    \end{equation}
    \end{subequations}
    Now, observe that the Hessian is a diagonal matrix whose eigenvalues are all equal to the empirical expectation of the responsibility function $\gamma_{k_g}(\widehat{\boldsymbol{x}}_{n_g},\boldsymbol{\theta}_g^{\prime})$. Therefore, the function is strongly concave, with strong concavity parameter $\widehat{\tau}_{N_g}$, which we define explicitly later.
    
    Thus, by the strong concavity of the $\widehat{Q}_g(\boldsymbol{\theta}_g|\boldsymbol{\theta}_g^{\prime}))$ function we can write the following.
    \begin{subequations}
    \begin{align}
        &\nonumber \frac{\widehat{\tau}_{N_g}}{2} \left | \left |M_{k_g}(\boldsymbol{\theta}_g) -\widehat{M}_{k_g}(\boldsymbol{\theta}_g) \right | \right |_2^2\\
        &\leq \widehat{Q}_g(\widehat{M}_{k_g}(\boldsymbol{\theta}_g)|\boldsymbol{\theta}_g) - \widehat{Q}_g(M_{k_g}(\boldsymbol{\theta}_g)|\boldsymbol{\theta}_g) + \nabla_{\boldsymbol{\theta}_{k_g}} \widehat{Q}_g(\widehat{M}_{k_g}(\boldsymbol{\theta}_g)|\boldsymbol{\theta}_g)^{\top}(M_{k_g}(\boldsymbol{\theta}_g) -\widehat{M}_{k_g}(\boldsymbol{\theta}_g))\\
        &\label{eq:samp1a} = \widehat{Q}_g(\widehat{M}_{k_g}(\boldsymbol{\theta}_g)|\boldsymbol{\theta}_g) - \widehat{Q}_g(M_{k_g}(\boldsymbol{\theta}_g)|\boldsymbol{\theta}_g).
    \end{align}
    \end{subequations}

    \noindent
    Similarly, we can use the strong concavity to write the following.
    \begin{align}
        &\nonumber \frac{\widehat{\tau}_{N_g}}{2} \left | \left |M_{k_g}(\boldsymbol{\theta}_g) -\widehat{M}_{k_g}(\boldsymbol{\theta}_g) \right | \right |_2^2 \\
        &\label{eq:samp1b} \leq \widehat{Q}_g(M_{k_g}(\boldsymbol{\theta}_g)|\boldsymbol{\theta}_g) - \widehat{Q}_g(\widehat{M}_{k_g}(\boldsymbol{\theta}_g)|\boldsymbol{\theta}_g) + \nabla_{\boldsymbol{\theta}_{k_g}} \widehat{Q}_g(M_{k_g}(\boldsymbol{\theta}_g)|\boldsymbol{\theta}_g)^{\top}(\widehat{M}_{k_g}(\boldsymbol{\theta}_g) -M_{k_g}(\boldsymbol{\theta}_g))
    \end{align}

    \noindent
    Summing up inequalities \eqref{eq:samp1a} and \eqref{eq:samp1b}, we can obtain the following.
    \begin{subequations}
    \begin{align}
        \widehat{\tau}_{N_g} \left | \left |M_{k_g}(\boldsymbol{\theta}_g) -\widehat{M}_{k_g}(\boldsymbol{\theta}_g) \right | \right |_2^2 &\leq \nabla_{\boldsymbol{\theta}_{k_g}} \widehat{Q}_g(M_{k_g}(\boldsymbol{\theta}_g)|\boldsymbol{\theta}_g)^{\top}(\widehat{M}_{k_g}(\boldsymbol{\theta}_g) -M_{k_g}(\boldsymbol{\theta}_g))\\
        &\label{eq:samp2a} \leq \left | \left | \nabla_{\boldsymbol{\theta}_{k_g}} \widehat{Q}_g(M_{k_g}(\boldsymbol{\theta}_g)|\boldsymbol{\theta}_g) \right | \right |_2 \left | \left |M_{k_g}(\boldsymbol{\theta}_g) -\widehat{M}_{k_g}(\boldsymbol{\theta}_g)\right | \right |_2\\
        &\nonumber = \left | \left | \nabla_{\boldsymbol{\theta}_{k_g}} \widehat{Q}_g(M_{k_g}(\boldsymbol{\theta}_g)|\boldsymbol{\theta}_g) - \nabla_{\boldsymbol{\theta}_{k_g}} Q_g(M_{k_g}(\boldsymbol{\theta}_g)|\boldsymbol{\theta}_g) \right | \right |_2\\
        &\label{eq:samp2b}\qquad \qquad \qquad \qquad \qquad \qquad \cdot \left | \left |M_{k_g}(\boldsymbol{\theta}_g) -\widehat{M}_{k_g}(\boldsymbol{\theta}_g)\right | \right |_2,
    \end{align}
    \end{subequations}
    where \eqref{eq:samp2a} is obtained via the Cauchy-Schwarz inequality, and \eqref{eq:samp2b} follows from the fact that $\nabla_{\boldsymbol{\theta}_{k_g}}Q_g(M_{k_g}(\boldsymbol{\theta}_g)|\boldsymbol{\theta}_g) = 0$ as it is evaluated at its maximizer. 
    
    \noindent
    Now note that if $M^{(k)}(\boldsymbol{\theta}_g^{\prime}) = M_N^{(k)}(\boldsymbol{\theta}_g^{\prime}) $, then the finite sample EM algorithm converges trivially by the convergence of the population EM algorithm. Thus, we focus on the cases where $M^{(k)}(\boldsymbol{\theta}_g^{\prime}) \neq M_N^{(k)}(\boldsymbol{\theta}_g^{\prime}) $. In this case, we can write the inequality in \eqref{eq:samp2b} as follows.

    \begin{subequations}
    \begin{align}
         &\nonumber\widehat{\tau}_{N_g} \left | \left |M_{k_g}(\boldsymbol{\theta}_g) -\widehat{M}_{k_g}(\boldsymbol{\theta}_g) \right | \right |_2\\
         &\leq  \left | \left | \nabla_{\boldsymbol{\theta}_{k_g}} \widehat{Q}_g(M_{k_g}(\boldsymbol{\theta}_g)|\boldsymbol{\theta}_g) - \nabla_{\boldsymbol{\theta}_{k_g}} Q_g(M_{k_g}(\boldsymbol{\theta}_g)|\boldsymbol{\theta}_g) \right | \right |_2\\
         & \nonumber = \Big | \Big | \nabla_{\boldsymbol{\theta}_{k_g}} \widehat{Q}_g(\boldsymbol{\theta}_g|\boldsymbol{\theta}_g) + \nabla_{\boldsymbol{\theta}_{k_g}}^2 \widehat{Q}_g(\boldsymbol{\theta}_g|\boldsymbol{\theta}_g)(M_{k_g}(\boldsymbol{\theta}) -\boldsymbol{\theta}_{k_g}) \\
         &\label{eq:samp3a} \qquad \qquad \qquad  - \nabla_{\boldsymbol{\theta}_{k_g}} Q_g(\boldsymbol{\theta}_g|\boldsymbol{\theta}_g) - \nabla_{\boldsymbol{\theta}_{k_g}}^2 Q_g(\boldsymbol{\theta}_g|\boldsymbol{\theta}_g)(M_{k_g}(\boldsymbol{\theta}) - \boldsymbol{\theta}_{k_g})\Big | \Big |_2\\
         & \nonumber \leq \left | \left | \nabla_{\boldsymbol{\theta}_{k_g}}\widehat{Q}_g(\boldsymbol{\theta}_g|\boldsymbol{\theta}_g) - \nabla_{\boldsymbol{\theta}_{k_g}}Q_g(\boldsymbol{\theta}_g|\boldsymbol{\theta}_g) \right | \right |_2\\
         &\label{eq:samp3b} \qquad \qquad \qquad + \left | \left | \nabla_{\boldsymbol{\theta}_{k_g}}^2 Q_g(\boldsymbol{\theta}_g|\boldsymbol{\theta}_g)(M_{k_g}(\boldsymbol{\theta}) - \boldsymbol{\theta}_{k_g}) - \nabla_{\boldsymbol{\theta}_{k_g}}^2Q_N(\theta|\theta)(M_{k_g}(\boldsymbol{\theta})-\boldsymbol{\theta}_{k_g}) \right | \right |_2\\
         & \nonumber = \left | \left | \nabla_{\boldsymbol{\theta}_{k_g}}\widehat{Q}_g(\boldsymbol{\theta}_g|\boldsymbol{\theta}_g) - \nabla_{\boldsymbol{\theta}_{k_g}}Q_g(\boldsymbol{\theta}_g|\boldsymbol{\theta}_g) \right | \right |_2\\
         &\label{eq:samp3c} \qquad \qquad \qquad + \left |\frac{1}{N_g} \sum_{n_g=1}^{N_g} \gamma_{k_g}(\widehat{\boldsymbol{x}}_{n_g}, \boldsymbol{\theta}_g)) - \mathbb{E}_{\boldsymbol{x}} [\gamma_{k_g}(\boldsymbol{x}, \boldsymbol{\theta}_g))] \right | \left | \left |M_{k_g}(\boldsymbol{\theta}_g) - \boldsymbol{\theta}_{k_g} \right | \right |_2\\
         &\label{eq:samp3d} \leq \left | \left | \nabla_{\boldsymbol{\theta}_{k_g}}\widehat{Q}_g(\boldsymbol{\theta}_g|\boldsymbol{\theta}_g) - \nabla_{\boldsymbol{\theta}_{k_g}}Q_g(\boldsymbol{\theta}_g|\boldsymbol{\theta}_g) \right | \right |_2 + 2\left |\frac{1}{N_g} \sum_{n_g=1}^{N_g} \gamma_{k_g}(\widehat{\boldsymbol{x}}_{n_g}, \boldsymbol{\theta}_g)) - \mathbb{E}_{\boldsymbol{x}} [\gamma_{k_g}(\boldsymbol{x}, \boldsymbol{\theta}_g))] \right |a_g,
    \end{align}
    \end{subequations}
    where \eqref{eq:samp3a} follows by writing the Taylor series expansion of the function $f \colon \mathbb{R}^d \rightarrow \mathbb{R}^d$, $f(\boldsymbol{\theta}_g) = \nabla_{\boldsymbol{\theta}_{k_g}}Q_g(\boldsymbol{\theta}_g|\boldsymbol{\theta}_g)$, \eqref{eq:samp3b} follows from the triangle inequality, and \eqref{eq:samp3c} uses the fact that the Hessian of both the population and finite sample functions is diagonal with identical entries. Finally, \eqref{eq:samp3d} leverages the fact that $M_{k_g}(\boldsymbol{\theta}),\boldsymbol{\theta}_{k_g} \in \mathbb{B}_2(\boldsymbol{\theta}_{k_g}^{\ast},a_g)$. Now, suppose we have a training set $\widehat{\boldsymbol{X}}$ comprised of $N_g$ IID samples. Then we can write the following.
    \begin{align*}
        &P \left ( 
        \left | \sum_{n_g=1}^{N_g}  \gamma_{k_g}(\widehat{\boldsymbol{x}}_{n_g}, \boldsymbol{\theta}_g)) - \mathbb{E}_{\boldsymbol{x}} \left [\sum_{n_g=1}^{N_g}  \gamma_{k_g}(\widehat{\boldsymbol{x}}_{n_g}, \boldsymbol{\theta}_g)) \right ]\right | \leq N_g\zeta_g \right )\\
        &= P \left ( 
        \left | \sum_{n_g=1}^{N_g}  \gamma_{k_g}(\widehat{\boldsymbol{x}}_{n_g}, \boldsymbol{\theta}_g)) - \sum_{n_g=1}^{N_g}\mathbb{E}_{\boldsymbol{x}} \left [  \gamma_{k_g}(\boldsymbol{x}, \boldsymbol{\theta}_g)) \right ]\right | \leq N_g\zeta_g \right )\\
        & = P \left ( 
        \left | \sum_{n_g=1}^{N_g}  \gamma_{k_g}(\widehat{\boldsymbol{x}}_{n_g}, \boldsymbol{\theta}_g)) - N_g\mathbb{E}_{\boldsymbol{x}} \left [  \gamma_{k_g}(\boldsymbol{x}, \boldsymbol{\theta}_g)) \right ]\right | \leq N_g\zeta_g \right )\\
        & = P \left ( 
        N_g\left | \frac{1}{N_g} \sum_{n_g=1}^{N_g}  \gamma_{k_g}(\widehat{\boldsymbol{x}}_{n_g}, \boldsymbol{\theta}_g)) - \mathbb{E}_{\boldsymbol{x}} \left [  \gamma_{k_g}(\boldsymbol{x}, \boldsymbol{\theta}_g)) \right ]\right | \leq N_g\zeta_g \right )\\
        & = P \left ( 
        \left | \frac{1}{N_g} \sum_{n_g=1}^{N_g}  \gamma_{k_g}(\widehat{\boldsymbol{x}}_{n_g}, \boldsymbol{\theta}_g)) - \mathbb{E}_{\boldsymbol{x}} \left [  \gamma_{k_g}(\boldsymbol{x}, \boldsymbol{\theta}_g)) \right ]\right | \leq \zeta_g \right ),
    \end{align*}
    for some small $\zeta_g \in \mathbb{R^+}$. Moreover, by Hoeffding's inequality we have that
    \begin{equation*}
        P \left ( 
        \left | \sum_{n_g=1}^{N_g}  \gamma_{k_g}(\widehat{\boldsymbol{x}}_{n_g}, \boldsymbol{\theta}_g)) - \mathbb{E}_{\boldsymbol{x}} \left [\sum_{n_g=1}^{N_g}  \gamma_{k_g}(\widehat{\boldsymbol{x}}_{n_g}, \boldsymbol{\theta}_g)) \right ]\right | \leq N_g\zeta_g \right ) \geq 1 - 2\exp \left ( -2N_g \zeta_g^2 \right ).
    \end{equation*}

    \noindent
    Therefore, this allows us to write the following.
    \begin{equation*}
        P \left ( 
        \left | \frac{1}{N_g} \sum_{n_g=1}^{N_g}  \gamma_{k_g}(\widehat{\boldsymbol{x}}_{n_g}, \boldsymbol{\theta}_g)) - \mathbb{E}_{\boldsymbol{x}} \left [  \gamma_{k_g}(\boldsymbol{x}, \boldsymbol{\theta}_g)) \right ]\right | \leq \zeta_g \right ) \geq 1 - 2\exp \left ( -2N_g \zeta_g^2 \right ).
    \end{equation*}

    \noindent
    Now suppose we set $\eta_g = 2\exp \left ( -2N_g \zeta_g^2 \right )$, then we can obtain that with probability at least $1 - \eta_g$
    \begin{equation}\label{eq:samp_mid}
        \left | \frac{1}{N_g} \sum_{n_g=1}^{N_g}  \gamma_{k_g}(\widehat{\boldsymbol{x}}_{n_g}, \boldsymbol{\theta}_g)) - \mathbb{E}_{\boldsymbol{x}} \left [  \gamma_{k_g}(\boldsymbol{x}, \boldsymbol{\theta}_g)) \right ]\right | \leq \sqrt{\frac{1}{2N_g} \log \left ( \frac{2}{\eta_g} \right )}
    \end{equation}

    \noindent
    Now, let us define $\mathbb{A}_g$ as the contraction region $\prod_{k_g=1}^{K_g} \mathbb{B}_2(\boldsymbol{\theta}_{k_g}^{\ast},a_g)$. Armed with the previous probabilistic result, we can say that with probability at least $\left ( 1 - \exp{(-cd \log N_g)} \right ) \left ( 1 - \eta_g \right )$.
    \begin{equation}\label{eq:samp4a}
        \sup_{\theta \in \mathbb{A}} \left | \left |M_{k_g}(\boldsymbol{\theta}_g) -\widehat{M}_{k_g}(\boldsymbol{\theta}_g) \right | \right |_2 \leq \frac{1}{\widehat{\tau}_{N_g}} \left ( \widehat{\omega}^{unif}(N_g) + 2a_g\sqrt{\frac{1}{2N_g} \log \left ( \frac{2}{\eta_g} \right )} \right ),
    \end{equation}
    where $\widehat{\omega}^{unif}(N_g)$ is defined in the theorem statement, and \eqref{eq:samp4a} follows directly from Theorem 5 in \citep{yan2017} which states that for our problem setting, $\sup_{\theta \in \mathbb{A}}  \left | \left | \nabla_{\boldsymbol{\theta}_{k_g}}\widehat{Q}_g(\boldsymbol{\theta}_g|\boldsymbol{\theta}_g) - \nabla_{\boldsymbol{\theta}_{k_g}}Q_g(\boldsymbol{\theta}_g|\boldsymbol{\theta}_g) \right | \right |_2 \leq \widehat{\omega}^{unif}(N_g)$ with probability at least $1 - \exp{(-cd \log N_g)}$, where $c$ is some positive constant. Finally, note that under inequality \eqref{eq:samp_mid}, we can say that with probability $1 - \eta_g$.
    \begin{align*}
        &\frac{1}{N_g} \sum_{n_g=1}^{N_g}  \gamma_{k_g}(\widehat{\boldsymbol{x}}_{n_g}, \boldsymbol{\theta}_g)) \geq \mathbb{E}_{\boldsymbol{x}} \left [  \gamma_{k_g}(\boldsymbol{x}, \boldsymbol{\theta}_g)) \right ] - \sqrt{\frac{1}{2N_g} \log \left ( \frac{2}{\eta_g} \right )}\\
        &\Rightarrow \widehat{\tau}_{N_g} \geq \frac{\pi_{\text{min}_g} \left [1 - \exp \left ( -\frac{r_g\sqrt{d}}{2} \right ) \right ]}{1 + \frac{1 - \pi_{\text{min}_g}}{\pi_{\text{max}_g}}\exp{\left[ -\frac{1}{2} (R_{\text{min}}^2 - 2R_{\text{min}}(r_g+a_g))\right]}} - \sqrt{\frac{1}{2N_g} \log \left ( \frac{2}{\eta_g} \right )},
    \end{align*}
    which utilizes the lower bound we obtain on $\mathbb{E}_{\boldsymbol{x}} \left [  \gamma_{k_g}(\boldsymbol{x}, \boldsymbol{\theta}_g)) \right ]$ in the proof of Theorem \ref{thm:fos}. This concludes the proof.

\end{proof}

\subsection{Preliminary Differential Privacy Discussion for Isotropic GMMs} \label{app:diff_priv}
We study a potential method for the clients to enhance the privacy of their data. More specifically, we consider the use of differential privacy (DP). In doing so, client $g$ adds Gaussian noise to the estimated centroid $\boldsymbol{\theta}_{k_g}^{(t)}$ of local cluster $k_g$ at global iteration $t$. Next, we define various fundamental concepts in DP, first introduced in \cite{dwork2014}.

\begin{definition} [Differential Privacy]
    A randomized algorithm $\mathcal{C}$ is said to be $(\rho,\mu)$-differentially private if for all $\mathcal{S} \subseteq \text{Range}(\mathcal{C})$ and any two neighboring datasets $\boldsymbol{X}$ and $\boldsymbol{X}^{\prime}$ of the same size but differing only in one sample we have that $
        P \left (\mathcal{C} (\boldsymbol{X}) \in \mathcal{S} \right ) \leq \exp(\rho) P \left (\mathcal{C} (\boldsymbol{X}^{\prime}) \in \mathcal{S} \right ) + \mu.$
\end{definition}

\begin{definition} [$\ell_2$-Sensitivity]
    The $\ell_2$-sensitivity $\Delta_2(f)$ of a function $f:\boldsymbol{\mathcal{X}} \rightarrow \mathbb{R}^{D}$ is defined as $\Delta_2(f) \coloneqq \max_{\boldsymbol{X},\boldsymbol{X}^{\prime}} || f(\boldsymbol{X}) - f(\boldsymbol{X}^{\prime})||_2$, where $\boldsymbol{X}$ and $\boldsymbol{X}^{\prime}$ are datasets of the same size, differing only in one sample.
\end{definition}

\begin{definition} [DP via Gaussian Noise]
    Given a function $f: \boldsymbol{\mathcal{X}} \rightarrow \mathbb{R}^D$ with $\ell_2$-sensitivity $\Delta_2(f)$, then $\mathcal{C}(\boldsymbol{X}) = f(\boldsymbol{X}) + \mathcal{N}(0,\sigma^2 I_{D})$ is said to be $(\rho,\mu)-$differentially private if $\sigma \geq \frac{\Delta_2(f)\sqrt{2 \log (\frac{1.25}{\mu})}}{\rho}$.
\end{definition}

Now, consider the map $\widehat{M}_g(\boldsymbol{\theta}_g^{\prime}) : \mathbb{R}^{dK_g} \rightarrow \mathbb{R}^{dK_g}$, which maps the current vectorized centroid estimates $\boldsymbol{\theta}_g^{\prime}$ at client $g$ to the vectorized maximizer associated with all clusters. In order to guarantee differential privacy, each client $g$ independently perturbs its local $\widehat{M}_g(\boldsymbol{\theta}_g^{\prime})$ before sharing it with the server. In other words, client $g$ shares a perturbed $\widetilde{M}_g(\boldsymbol{\theta}_g^{\prime}) = \widehat{M}_g(\boldsymbol{\theta}_g^{\prime}) + \mathcal{N}(0,\sigma^2I_{dK_g})$. Next, we present an assumption on the support of the data, followed by Theorem \ref{thm:dp} where we establish the required standard deviation of the Gaussian noise to ensure that this map is differentially private.

\begin{assumption} [Bounded Support] \label{assump:bounded_support}
    Any sample $\widehat{\boldsymbol{x}}_{n_g}$ at client $g \in [G]$ is such that $||\widehat{\boldsymbol{x}}_{n_g}||_2 \leq B_{\boldsymbol{x}} \in \mathbb{R}$.
\end{assumption}
\begin{remark}
    Note that Assumption \ref{assump:bounded_support} is not restrictive in practice. This is because data is often collected via sensors or other methods with known ranges, and can therefore be readily normalized.
\end{remark}

\begin{theorem} [Client-to-Server Communication DP] \label{thm:dp} 
The perturbed estimate $\widetilde{M}_g(\boldsymbol{\theta}_g^{\prime}) = \widehat{M}_g(\boldsymbol{\theta}_g^{\prime}) + \mathcal{N}(0,\sigma^2I_{dK_g})$ sent by client $g$ to the server at global iteration $t$ of the algorithm is guaranteed to be $(\rho,\mu)$-differentially private if the standard deviation of the noise satisfies
\begin{equation*}
    \sigma \geq \frac{ \sqrt{K_g} \left [\frac{3B_{\boldsymbol{x}}}{B_{\gamma_g}} + \frac{2B_{\boldsymbol{x}}}{B_{\gamma_g}^2} \right ]\sqrt{2 \log (\frac{1.25}{\mu})}}{\rho},
\end{equation*}
where $B_{\boldsymbol{x}}$ is defined in Assumption \ref{assump:bounded_support}, and $B_{\gamma_g}$ is a constant depending on client-level parameters such as $a_g$, $\pi_{\text{min}_g}$, and $\pi_{\text{max}_g}$, and is explained in the proof.
\end{theorem}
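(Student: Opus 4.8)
The plan is to recognize the statement as an instance of the Gaussian mechanism and reduce it to an $\ell_2$-sensitivity computation for the client's finite-sample M-step map. By the ``DP via Gaussian Noise'' definition stated above, it suffices to exhibit a bound $\Delta_2(\widehat{M}_g) \le \sqrt{K_g}\left(\frac{3B_{\boldsymbol{x}}}{B_{\gamma_g}} + \frac{2B_{\boldsymbol{x}}}{B_{\gamma_g}^2}\right)$ on the $\ell_2$-sensitivity of the function $\boldsymbol{X} \mapsto \widehat{M}_g(\boldsymbol{\theta}_g^{\prime};\boldsymbol{X})$ (with $\boldsymbol{\theta}_g^{\prime}$ held fixed), uniformly over all admissible conditioning points $\boldsymbol{\theta}_g^{\prime}$ — i.e.\ over the contraction region $\mathbb{A}_g = \prod_{k_g} \mathbb{B}_2(\boldsymbol{\theta}_{k_g}^{\ast};a_g)$ in which client iterates live. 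The required $\sigma$ is then exactly the threshold $\Delta_2(\widehat{M}_g)\sqrt{2\log(1.25/\mu)}/\rho$ from that definition, so no further probabilistic argument is needed once the sensitivity is controlled.

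First I would bound the sensitivity one component at a time. Fix $k_g$ and two neighboring datasets differing only in the $j_g$-th sample, $\widehat{\boldsymbol{x}}_{j_g} \to \widehat{\boldsymbol{x}}_{j_g}^{\prime}$. Using the closed form \eqref{eq:gmm_m_step}, write $\widehat{M}_{k_g} = \boldsymbol{V}_{k_g}/S_{k_g}$ with $\boldsymbol{V}_{k_g} = \sum_{n_g}\gamma_{k_g}(\widehat{\boldsymbol{x}}_{n_g},\boldsymbol{\theta}_g^{\prime})\widehat{\boldsymbol{x}}_{n_g}$ and $S_{k_g} = \sum_{n_g}\gamma_{k_g}(\widehat{\boldsymbol{x}}_{n_g},\boldsymbol{\theta}_g^{\prime})$; since each responsibility depends only on its own sample and on $\boldsymbol{\theta}_g^{\prime}$, only the $j_g$-th summand of $\boldsymbol{V}_{k_g}$ and of $S_{k_g}$ changes. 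Expanding the difference of quotients as $\widehat{M}_{k_g}^{\prime} - \widehat{M}_{k_g} = \frac{(\boldsymbol{V}_{k_g}^{\prime}-\boldsymbol{V}_{k_g})S_{k_g} - \boldsymbol{V}_{k_g}(S_{k_g}^{\prime}-S_{k_g})}{S_{k_g}^{\prime}S_{k_g}}$, I would use: $\gamma_{k_g}\in[0,1]$ and Assumption \ref{assump:bounded_support} to get $||\boldsymbol{V}_{k_g}^{\prime}-\boldsymbol{V}_{k_g}||_2 \le 3B_{\boldsymbol{x}}$ (splitting $\gamma^{\prime}\widehat{\boldsymbol{x}}^{\prime}-\gamma\widehat{\boldsymbol{x}} = (\gamma^{\prime}-\gamma)\widehat{\boldsymbol{x}}^{\prime} + \gamma(\widehat{\boldsymbol{x}}^{\prime}-\widehat{\boldsymbol{x}})$) and $|S_{k_g}^{\prime}-S_{k_g}|\le 1$; the convex-combination bound $||\widehat{M}_{k_g}||_2 \le B_{\boldsymbol{x}}$, hence $||\boldsymbol{V}_{k_g}||_2 \le B_{\boldsymbol{x}}S_{k_g}$; and, crucially, a uniform lower bound $S_{k_g}, S_{k_g}^{\prime} \ge B_{\gamma_g}$. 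Carrying these estimates through the numerator and the (un-cancellable) pair of $1/S$ factors in the denominator yields the per-component bound $||\widehat{M}_{k_g}^{\prime}-\widehat{M}_{k_g}||_2 \le \frac{3B_{\boldsymbol{x}}}{B_{\gamma_g}} + \frac{2B_{\boldsymbol{x}}}{B_{\gamma_g}^2}$.

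The lower bound $B_{\gamma_g}$ is where the well-separation hypotheses re-enter: I would obtain it by transferring the population lower bound on $\mathbb{E}_{\boldsymbol{x}}[\gamma_{k_g}(\boldsymbol{x},\boldsymbol{\theta}_g)]$ derived inside the proof of Theorem \ref{thm:fos} — namely $\frac{\pi_{\text{min}_g}[1-\exp(-r_g\sqrt{d}/2)]}{1 + \frac{1-\pi_{\text{min}_g}}{\pi_{\text{max}_g}}\exp[-\frac12(R_{\text{min}}^2-2R_{\text{min}}(r_g+a_g))]}$, which depends only on $a_g,\pi_{\text{min}_g},\pi_{\text{max}_g},R_{\text{min}},d$ — to the empirical average $\frac1{N_g}\sum_{n_g}\gamma_{k_g}(\widehat{\boldsymbol{x}}_{n_g},\boldsymbol{\theta}_g^{\prime})$ via the same Hoeffding concentration step used in Theorem \ref{thm:fin_samp_dist}; indeed that empirical average is exactly the strong-concavity parameter $\widehat{\tau}_{N_g}$ already lower-bounded there. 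Finally, since $\widehat{M}_g$ stacks the $K_g$ component maps and each obeys the same per-component bound $C_g := \frac{3B_{\boldsymbol{x}}}{B_{\gamma_g}} + \frac{2B_{\boldsymbol{x}}}{B_{\gamma_g}^2}$, we get $\Delta_2(\widehat{M}_g)^2 = \sup \sum_{k_g=1}^{K_g}||\widehat{M}_{k_g}^{\prime}-\widehat{M}_{k_g}||_2^2 \le K_g C_g^2$, i.e.\ $\Delta_2(\widehat{M}_g) \le \sqrt{K_g}\,C_g$; plugging this into the Gaussian-mechanism condition gives precisely the stated threshold on $\sigma$.

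The main obstacle is establishing $B_{\gamma_g}$ as a genuinely uniform, essentially data-independent lower bound on the responsibility masses $S_{k_g}$ over the whole iterate region: if some component's total responsibility can be driven to zero — which without cluster separation it can, e.g.\ when no sample lies near $\boldsymbol{\theta}_{k_g}^{\ast}$ — the M-step quotient, and hence the sensitivity, blows up and the guarantee fails. Making the transfer from $\mathbb{E}_{\boldsymbol{x}}[\gamma_{k_g}]$ to $\frac1{N_g}\sum\gamma_{k_g}$ rigorous, and pinning down exactly which event of probability $1-\eta_g$ (or which mild genericity condition on the data) underpins it, is the delicate part; the remaining difference-of-quotients estimates are routine.
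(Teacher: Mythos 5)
Your proposal matches the paper's proof essentially step for step: reduce to the $\ell_2$-sensitivity of the stacked M-step map via the Gaussian mechanism, bound each component's sensitivity through a difference-of-quotients estimate using $\gamma\in[0,1]$, the bounded support $B_{\boldsymbol{x}}$, and a uniform lower bound $B_{\gamma_g}$ on the total responsibility mass (which the paper takes from the finite-sample strong-concavity parameter $\widehat{\tau}_{N_g}$ of Theorem~\ref{thm:fin_samp_dist}), then combine over the $K_g$ components with a $\sqrt{K_g}$ factor. The only differences are cosmetic — you expand the quotient difference as $\bigl((\Delta \boldsymbol{V})S - \boldsymbol{V}(\Delta S)\bigr)/(S'S)$ whereas the paper cross-multiplies directly (so your algebra would naturally land on a bound like $4B_{\boldsymbol{x}}/B_{\gamma_g}$ rather than the stated $3B_{\boldsymbol{x}}/B_{\gamma_g}+2B_{\boldsymbol{x}}/B_{\gamma_g}^2$, both of which are valid sensitivity bounds) — and the delicate point you flag about $B_{\gamma_g}$ being only a probabilistic (Hoeffding-based) lower bound is one the paper itself asserts rather than resolves.
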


\begin{proof}
    In order to establish the differential privacy of the map $\widehat{M}_g(\boldsymbol{\theta}_g^{\prime})$ via Gaussian noise, it suffices to derive its $\ell_2$-sensitivity $\Delta_2(\widehat{M}_g(\boldsymbol{\theta}_g^{\prime}))$. We begin by considering the maximizer $\widehat{M}_{k_g}(\boldsymbol{\theta}_g^{\prime})$ associated with cluster $k_g$. Now, consider the two datasets $\boldsymbol{\mathcal{X}}_g$ and $\boldsymbol{\mathcal{X}}_g^{\prime}$, both of which have $N_g$ samples. Without loss of generality, suppose the datasets differ only in their last sample. That is, the first $N_g - 1$ samples are identical across the two datasets, whereas the $N_g^{th}$ samples in $\boldsymbol{\mathcal{X}}_g$ and $\boldsymbol{\mathcal{X}}_g^{\prime}$ are $\widehat{\boldsymbol{x}}^{\ast}$ and $\widehat{\boldsymbol{x}}^{\prime}$, respectively, with $\widehat{\boldsymbol{x}}^{\ast} \neq \widehat{\boldsymbol{x}}^{\prime}$. To simplify notation and to highlight the datasets as the input of interest in the mapping function, let us define the following.
    \begin{equation*}
        \boldsymbol{s}_{k_g} \coloneqq \sum_{n_g = 1}^{N_g - 1} \gamma_{k_g}(\widehat{\boldsymbol{x}}_{n_g},\boldsymbol{\theta}_g^{\prime})\widehat{\boldsymbol{x}}_{n_g}, \qquad \qquad w_{k_g} \coloneqq \sum_{n_g = 1}^{N_g - 1} \gamma_{k_g}(\widehat{\boldsymbol{x}}_{n_g},\boldsymbol{\theta}_g^{\prime}),
    \end{equation*}
    \begin{equation}
        \gamma^{\ast} \coloneqq \gamma_{k_g}(\widehat{\boldsymbol{x}}^{\ast},\boldsymbol{\theta}_g^{\prime}), \qquad \qquad \qquad \qquad \gamma^{\prime} \coloneqq \gamma_{k_g}(\widehat{\boldsymbol{x}}^{\prime},\boldsymbol{\theta}_g^{\prime}),
    \end{equation}
    \begin{equation*}
        f_g(\boldsymbol{\mathcal{X}}) \coloneqq \widehat{M}_g(\boldsymbol{\theta}_g^{\prime}), \qquad \qquad \quad \qquad f_{k_g}(\boldsymbol{\mathcal{X}}) \coloneqq \widehat{M}_{k_g}(\boldsymbol{\theta}_g^{\prime}).
    \end{equation*}
    Moreover, assume without loss of generality that $\gamma^{\prime} \geq \gamma^{\ast}$. Now, we analyze the $\ell_2$-sensitivity of $f_{k_g}(\boldsymbol{\mathcal{X}})$ as follows.
    \begin{align}
        & \nonumber||f_{k_g}(\boldsymbol{\mathcal{X}}) - f_{k_g}(\boldsymbol{\mathcal{X}}^{\prime})||_2 \\
        &\label{eq:dp1} = \left | \left | \frac{\boldsymbol{s}_{k_g} + \gamma^{\ast}\widehat{\boldsymbol{x}}^{\ast}}{w_{k_g} + \gamma^{\ast}} - \frac{\boldsymbol{s}_{k_g} + \gamma^{\prime}\widehat{\boldsymbol{x}}^{\prime}}{w_{k_g} + \gamma^{\prime}} \right | \right |_2\\
        &\label{eq:dp2} = \left | \left | \frac{(w_{k_g} + \gamma^{\prime})(\boldsymbol{s}_{k_g} + \gamma^{\ast}\widehat{\boldsymbol{x}}^{\ast}) - (w_{k_g} + \gamma^{\ast})(\boldsymbol{s}_{k_g} + \gamma^{\prime}\widehat{\boldsymbol{x}}^{\prime})}{(w_{k_g} + \gamma^{\ast})(w_{k_g} + \gamma^{\prime})} \right | \right |_2\\
        &\label{eq:dp3} =  \frac{1}{(w_{k_g} + \gamma^{\ast})(w_{k_g} + \gamma^{\prime})} \left | \left | (w_{k_g} + \gamma^{\prime})(\boldsymbol{s}_{k_g} + \gamma^{\ast}\widehat{\boldsymbol{x}}^{\ast}) - (w_{k_g} + \gamma^{\ast})(\boldsymbol{s}_{k_g} + \gamma^{\prime}\widehat{\boldsymbol{x}}^{\prime}) \right | \right |_2\\
        &\label{eq:dp4} \leq \frac{1}{(w_{k_g} + \gamma^{\ast})^2} \left | \left | (\gamma^{\prime} - \gamma^{\ast}) \boldsymbol{s}_{k_g} + w_{k_g}\gamma^{\ast} \widehat{\boldsymbol{x}}^{\ast} - w_{k_g}\gamma^{\prime} \widehat{\boldsymbol{x}}^{\prime} + \gamma^{\prime} \gamma^{\ast} (\widehat{\boldsymbol{x}}^{\ast} - \widehat{\boldsymbol{x}}^{\prime}) \right | \right |_2\\
        &\label{eq:dp5} \leq \frac{1}{(w_{k_g} + \gamma^{\ast})^2} \left [ || \boldsymbol{s}_{k_g}||_2 + w_{k_g} || \widehat{\boldsymbol{x}}^{\ast}||_2 +   w_{k_g} || \widehat{\boldsymbol{x}}^{\prime}||_2 + || \widehat{\boldsymbol{x}}^{\ast} - \widehat{\boldsymbol{x}}^{\prime} ||_2\right ]\\
        &\label{eq:dp6} \leq \frac{1}{(w_{k_g} + \gamma^{\ast})^2} \left [ \sum_{n_g = 1}^{N_g - 1} \gamma_{k_g}(\widehat{\boldsymbol{x}}_{n_g},\boldsymbol{\theta}_g^{\prime})||\widehat{\boldsymbol{x}}_{n_g}||_2 +w_{k_g} || \widehat{\boldsymbol{x}}^{\ast}||_2 +   w_{k_g} || \widehat{\boldsymbol{x}}^{\prime}||_2 + || \widehat{\boldsymbol{x}}^{\ast} - \widehat{\boldsymbol{x}}^{\prime} ||_2\right ]\\
        &\label{eq:dp7} \leq \frac{(3(w_{k_g} + \gamma^{\ast})+2)B_{\boldsymbol{x}}}{(w_{k_g} + \gamma^{\ast})^2}\\
        & \leq \frac{3B_{\boldsymbol{x}}}{B_{\gamma_g}} + \frac{2B_{\boldsymbol{x}}}{B_{\gamma_g}^2},
    \end{align}
    where \ref{eq:dp1} follows from the definition of $\widehat{M}_{k_g}(\boldsymbol{\theta}_g^{\prime})$, \ref{eq:dp4} follows from the assumption that $\gamma^{\prime} \geq \gamma^{\ast}$, \ref{eq:dp5} follows from the fact that $0 \leq \gamma^{\ast} \leq 1$ and $ 0 \leq \gamma^{\prime} \leq 1$, \ref{eq:dp6} follows from the fact that the norm of a sum is upper bounded by the sum of the norms of the individual terms, and finally, \ref{eq:dp7} follows from Assumption \ref{assump:bounded_support}, and the fact that $\gamma^{\ast}$ is positive. Note that we denote the lower bound of $(w_{k_g} + \gamma^{\ast})$ by $B_{\gamma_g}$, which depends only on parameters specific to client $g$ but not to cluster $k_g$. This lower bound can be taken as $\widehat{\tau}_{N_g}$, which is derived in the proof of Theorem \ref{thm:fin_samp_dist}. Now that we have the $\ell_2$-sensitivity of the maximizer of a single cluster, we obtain the overall $\ell_2$-sensitivity of the map over all clusters as follows.
    \begin{align*}
        ||f_g(\boldsymbol{\mathcal{X}}) - f_g(\boldsymbol{\mathcal{X}}^{\prime})||_2 & = \sqrt{\sum_{k_g}^{K_g} ||f_{k_g}(\boldsymbol{\mathcal{X}}) - f_{k_g}(\boldsymbol{\mathcal{X}}^{\prime})||_2^2} \\
        & \leq \sqrt{K_g \left [ \frac{3B_{\boldsymbol{x}}}{B_{\gamma_g}} + \frac{2B_{\boldsymbol{x}}}{B_{\gamma_g}^2}\right ]^2}\\
        & = \sqrt{K_g} \left [\frac{3B_{\boldsymbol{x}}}{B_{\gamma_g}} + \frac{2B_{\boldsymbol{x}}}{B_{\gamma_g}^2} \right ].
    \end{align*} 
    This concludes the proof.
\end{proof}

\begin{remark}
    Note that we can readily obtain an upper bound on the distance between the population $M_{k_g}(\boldsymbol{\theta_g}^{(t)})$ and the perturbed, finite-sample $\widetilde{M}_{k_g}(\boldsymbol{\theta_g}^{(t)})$. Therefore, by the same argument in Theorem \ref{thm:conv_samp_gem} we can argue that iterates $\widetilde{M}_{k_g}(\boldsymbol{\theta_g}^{(t)})$ converge to the neighborhood of the ground truth parameters $\boldsymbol{\theta}_{k_g}^{\ast}$. However, the iterates do not necessarily converge to a single point within the neighborhood due to the randomness of the added noise. Further work should explore this convergence behavior more carefully, as well as the implications of DP on the privacy and convergence of the uncertainty set radius computations.
\end{remark}

\subsection{Computational Efficiency and Communication Costs} \label{app:eff}
\subsubsection{Improving the Efficiency of Server Computations}
While the pairwise server computations described in the work are intuitive, they can be inefficient for large-scale problems. To see this, consider a worst case where each client $g$ has a local number of clusters $K_g = \mathcal{O}(K)$. In this setting, the server would need to perform roughly $\mathcal{O}(G^2K^2)$ operations, which can be very expensive for very large $G$ or $K$. To improve efficiency, the server may leverage a $d$-dimensional binary search tree (commonly known as KD tree) \citep{bentley75}. More specifically, the server would store the estimated cluster centroids $\widehat{M}_{k_g}(\boldsymbol{\theta}_g^{(t-1)})$ for all clusters $k_g \in [K_g]$ shared by clients $g \in [G]$ in the tree. Subsequently, the server would iterate over each centroid $\widehat{M}_{k_g}(\boldsymbol{\theta}_g^{(t-1)})$, obtain its $M$ nearest neighbors (by slight abuse of notation), then check for uncertainty set overlaps and perform aggregation as described in Section \ref{sec:alg}. The construction of the tree incurs a cost of $\mathcal{O}(GK \log GK)$ \citep{friedman77}, whereas a single nearest neighbor search incurs an expected cost close to $\mathcal{O}(\log GK)$ \citep{friedman77} in practice. Therefore, assuming that the number of overlaps between uncertainty sets is significantly smaller than the total number of available uncertainty sets (that is $M << GK$), then we have that the total cost of constructing and using the binary search tree would be close to $\mathcal{O}(2GK \log GK) = \mathcal{O}(GK \log GK)$ in practice. This approach can improve the efficiency of server computations without impacting any other aspects of the algorithm.

\subsubsection{A Note on Communication Costs}
During each communication round of our algorithm, each client $g$ sends $K_g$ arrays of size $d$ and $K_g$ scalars to the central server, and receives $K_g$ arrays of size $d$. This results in a per-round total communication cost of approximately $2dG\tilde{K_g} + G\tilde{K_g} \leq 3dG\tilde{K_g}$, where $\tilde{K_g}$ is the mean number of clusters per client. We compare this to the communication cost of AFCL \citep{Zhang2025}. Due to its asynchronous nature, we assume that only $10\%$ of the clients participate in each communication round (a favorable condition for AFCL). In AFCL, each active client sends $N_g$ arrays of size $d$ to the central server, and receives $\widehat{K}$ arrays of size $d$, where $\widehat{K}$ is the estimated number of clusters. Under the assumption of roughly balanced client sample sizes, it is clear that the total per-round communication cost is approximately $0.1dG\tilde{N_g} + 0.1dG\widehat{K} > 0.1dG\tilde{N_g}$, where $\tilde{N_g}$ is the mean number of samples per client. Thus, our algorithm enjoys a lower per-round communication cost, since $\tilde{N}_g >> 30 \tilde{K}_g$ in most practical applications. 

Furthermore, we theoretically prove in Theorems \ref{thm:conv_samp_gem} and \ref{thm:conv_pop_gem} that our algorithm achieves a linear convergence rate for all clusters at all clients. In contrast, there is no theoretical convergence rate for AFCL. However, empirical findings in \citep{Zhang2025} suggest a near-linear convergence rate at best. This suggests that our algorithm enjoys a lower total communication cost under the setting studied.
\section{Proofs} \label{app:proofs}
\subsection{Proof of Proposition \ref{prop:sol_ex}} \label{app:prop1_proof}
\begin{proof}
Recall that Assumption \ref{assump:strong_conc} requires each term in the finite sample $\widehat{Q}_g(\boldsymbol{\theta}_g|\boldsymbol{\theta}_g^{\prime})$ at client $g$ to be strongly concave. Now, let us define $\widehat{Q}_{k_g}(\boldsymbol{\theta}_{k_g}|\boldsymbol{\theta}_g^{\prime})$ as follows:
\begin{equation*}
    \widehat{Q}_{k_g}(\boldsymbol{\theta}_{k_g}|\boldsymbol{\theta}_g^{(t-1)}) \coloneqq \sum_{n_g=1}^{N_g} \gamma_{k_g}(\widehat{\boldsymbol{x}}_{n_g},\boldsymbol{\theta}_g^{(t-1)}) \log (\pi_{k_g} p_{k_g} (\widehat{\boldsymbol{x}}_{n_g}|\boldsymbol{\theta}_{k_g}).
\end{equation*}
Since $\widehat{M}_{k_g}(\boldsymbol{\theta}_g^{(t-1)})$ is a maximizer of $\widehat{Q}_{k_g}(\boldsymbol{\theta}_{k_g}|\boldsymbol{\theta}_g^{(t-1)})$, then by strong concavity it must be unique. Therefore, we have that $\nabla_{\boldsymbol{\theta}_{k_g}} \widehat{Q}_{k_g}(\boldsymbol{\theta}_{k_g}|\boldsymbol{\theta}_g^{(t-1)}) = 0$ if and only if $\boldsymbol{\theta}_{k_g} = \widehat{M}_{k_g}(\boldsymbol{\theta}_g^{(t-1)})$. As a result, we must always be able to obtain a unique $\varepsilon_{k_g}^{(t)} \geq 0$ such that 
\begin{multline*}
    \widehat{Q}_{k_g}(\boldsymbol{\theta}_{k_g}^{(t-1)}|\boldsymbol{\theta}_g^{(t-1)}) \leq \widehat{Q}_{k_g}(\widehat{m}(\boldsymbol{\theta}_g^{(t-1)})|\boldsymbol{\theta}_g^{(t-1)}) \leq \widehat{Q}_{k_g}(\widehat{M}(\boldsymbol{\theta}_g^{(t-1)})|\boldsymbol{\theta}_g^{(t-1)}), \\ \forall \widehat{m}(\boldsymbol{\theta}_g^{(t-1)}) \in \mathbb{B}_2(\widehat{M}(\boldsymbol{\theta}_g^{(t-1)}),\sqrt{\varepsilon_{k_g}^{(t)}}).
\end{multline*}
\end{proof}
\subsection{Proof of Theorem \ref{thm:fin_samp_single_point}} \label{app:thm2_proof}
\begin{proof}
    Note that Theorem 2 in \citep{balakrishnan2014} only guarantees convergence of the finite sample M-step $\widehat{M}_{k_g}(\boldsymbol{\theta}_g)$ to the neighborhood of the true cluster parameters $\boldsymbol{\theta}_{k_g}^{\ast}$, but does not examine the behavior within the neighborhood. However, we note that the finite sample EM is still a GEM algorithm, albeit characterized via the finite sample expected complete-data log-likelihood function $\widehat{Q}_g(\boldsymbol{\theta}_g|\boldsymbol{\theta}_g^{\prime}))$. Recall that we assume that the function $\widehat{Q}_g(\boldsymbol{\theta}_g|\boldsymbol{\theta}_g^{\prime})$ is both strongly concave and continuous in both its conditioning and input arguments. Moreover, we assume that the finite sample true log-likelihood is bounded from above. Therefore, by Theorem 1 in \citep{Wu1983}, the finite sample EM iterates must converge to a stationary point of the finite sample true log-likelihood. This suggests that $\left [ \widehat{Q}_g(\widehat{M}_g(\boldsymbol{\theta}_g^{(t-1)})|\boldsymbol{\theta}_g^{(t-1)}) - \widehat{Q}_g(\boldsymbol{\theta}_g^{(t-1)}|\boldsymbol{\theta}_g^{(t-1)} \right ] \rightarrow 0$ as $t \rightarrow \infty$. Now, by the strong concavity of $\widehat{Q}_g(\boldsymbol{\theta}_g|\boldsymbol{\theta}_g^{\prime}))$ we have that
    \begin{equation*}
        \widehat{Q}_g(\widehat{M}_g(\boldsymbol{\theta}_g^{(t-1)})|\boldsymbol{\theta}_g^{(t-1)}) - \widehat{Q}_g(\boldsymbol{\theta}_g^{(t-1)}|\boldsymbol{\theta}_g^{(t-1)} \geq \frac{\widehat{\tau}_g}{2}|| \widehat{M}_g(\boldsymbol{\theta}_g^{(t-1)}) -\boldsymbol{\theta}_g^{(t-1)}||_2^2,
    \end{equation*}
    where $\widehat{\tau}_g$ is the strong concavity parameter. This implies that the algorithm must converge to a single point.
\end{proof}
\subsection{Proof of Theorem \ref{thm:conv_samp_gem}} \label{app:thm3_proof}
\begin{proof}
Observe that we can write the following with probability $(1-\delta_g)$, where $0 \leq \delta_g \leq 1$.
    \begin{subequations}
    \begin{align}
        ||\widehat{m}_{k_g}(\boldsymbol{\theta}_g^{(t-1)}) - \boldsymbol{\theta}_{k_g}^{\ast}||_2 & = \left | \left | \widehat{m}_{k_g}(\boldsymbol{\theta}_g^{(t-1)}) - \widehat{M}_{k_g}(\boldsymbol{\theta}_g^{(t-1)}) + \widehat{M}_{k_g}(\boldsymbol{\theta}_g^{(t-1)}) -\boldsymbol{\theta}_{k_g}^{\ast} \right | \right |_2\\
        & \leq \left | \left | \widehat{m}_{k_g}(\boldsymbol{\theta}_g^{(t-1)}) - \widehat{M}_{k_g}(\boldsymbol{\theta}_g^{(t-1)}) \right | \right |_2 + \left | \left | \widehat{M}_{k_g}(\boldsymbol{\theta}_g^{(t-1)}) - \boldsymbol{\theta}_{k_g}^{\ast} \right | \right |_2\\
        & \leq \left | \left | \boldsymbol{\theta}_{k_g}^{(t-1)} - \widehat{M}_{k_g}(\boldsymbol{\theta}_g^{(t-1)}) \right | \right |_2 + \left | \left | \widehat{M}_{k_g}(\boldsymbol{\theta}_g^{(t-1)}) - \boldsymbol{\theta}_{k_g}^{\ast} \right | \right |_2\\
        &\label{eq:gem_fin_samp} \leq  \frac{\beta_g}{\lambda_g} ||\boldsymbol{\theta}_{k_g}^{(t-1)} - \boldsymbol{\theta}_{k_g}^{\ast}||_2 + \frac{1}{1 - \frac{\beta_g}{\lambda_g}} \epsilon_g^{\text{unif}}(N_g,\delta_g) + \left | \left | \boldsymbol{\theta}_{k_g}^{(t-1)} - \widehat{M}_{k_g}(\boldsymbol{\theta}_g^{(t-1)}) \right | \right |_2,
    \end{align}
    \end{subequations}
    where \eqref{eq:gem_fin_samp} follows from the convergence of the finite-sample EM algorithm. Now, note that by the same argument used in the proof of Theorem \ref{thm:conv_pop_gem} , we can argue that the term $ \left | \left | \boldsymbol{\theta}_{k_g}^{(t-1)} - \widehat{M}_{k_g}(\boldsymbol{\theta}_g^{(t-1)}) \right | \right |_2$ goes to $0$ as $t \rightarrow \infty$. This concludes the proof.
\end{proof}
\subsection{Proof of Theorem \ref{thm:num_clus}} \label{app:thm4_proof}
\begin{proof}
Firstly, note that if $||\widehat{M}_{k_g} (\boldsymbol{\theta}_g^{(t-1)}) - \boldsymbol{\theta}_{k_g}^{(t-1)}||_2$ 
 for component $k_g \in [K_g]$ at client $g \in [G]$ diminishes to $0$ at a sufficiently fast rate (such as a geometric rate for example), then the local iterates $\widehat{m}_{k_g}(\boldsymbol{\theta}_g^{(t-1)})$ of our proposed algorithm for the component converge to a sphere of radius $\frac{1}{(1 - \frac{\beta_g}{\lambda_g})} \epsilon_g^{\text{unif}}(N_g,\delta_g)$ centered at the true centroid $\boldsymbol{\theta}_{k_g}^{\ast}$ with probability of at least $(1-\delta_g)$. Now, in the worst case, the iterates for a specific component $k \in [K]$ from all the clients containing this component will converge to some point on the surface of the local sphere for each client $g$. Therefore, if the final aggregation radius $\varepsilon_{k_g}^{\text{final}}$ for all such clients is set according to match the radius of the local neighborhood of the true parameters, then all the aggregation uncertainty sets will also contain $\boldsymbol{\theta}_{k_g}^{ast}$. Therefore, our algorithm recognizes that all these estimates belong to one component and aggregates them. Moreover, the assumption that $\varepsilon_{k_g}^{\text{final}} \leq \frac{R_{\text{min}}}{2}$ at all components $k_g \in [K_g]$ at clients $g \in [G]$ guarantees that upon convergence, the parameter estimates for different global components $k,k^{\prime} \in [K]$ from all clients remain distant enough such that they are not aggregated together. This, however, relies on the iterates for all components at all clients converging to the neighborhood of their true parameters. This is why our proposed algorithm infers the correct number of global clusters with the probability provided in the Theorem statement.
\end{proof}
\subsection{Proof of Theorem \ref{thm:rad_prob_gmm}} \label{app:thm5_proof}
\begin{proof}
We analyze the local client $g$ problem for each component $k_g$ as follows.
\begin{align}
\allowdisplaybreaks
    & \nonumber J_{k_g}(\boldsymbol{\theta}_g^{\prime})  \\
    &\coloneqq\left \{ \begin{aligned}
        &\max_{\varepsilon_{k_g}} && \varepsilon_{k_g}\\
        & \subjectto && \sum_{n_g=1}^{N_g} \gamma_{k_g} (\widehat{\boldsymbol{x}}_{n_g},\boldsymbol{\theta}_g^{\prime}) \log (\pi_{k_g}p_{k_g}(\widehat{\boldsymbol{x}}_{n_g}|\widehat{m}_{k_g}(\boldsymbol{\theta}_g^{\prime}))) \geq \\
        &&& \qquad \sum_{n_g=1}^{N_g} \gamma_{k_g} (\widehat{\boldsymbol{x}}_{n_g},\boldsymbol{\theta}_g^{\prime}) \log (\pi_{k_g}p_{k_g}(\widehat{\boldsymbol{x}}_{n_g}|\boldsymbol{\theta}_{k_g}^{\prime})) \quad \forall \widehat{m}_{k_g}(\boldsymbol{\theta}_g^{\prime}) \in \mathbb{B}_2(\widehat{M}_{k_g}(\boldsymbol{\theta}_g^{\prime});\sqrt{\varepsilon_{k_g}})
    \end{aligned} \right .\\
    &\label{eq:opt1a} = \left \{ \begin{aligned}
        &\max_{\varepsilon_{k_g}} && \varepsilon_{k_g}\\
        & \subjectto && \underbrace{\min_{\boldsymbol{\theta}_{k_g} \in \mathbb{B}_2(\widehat{M}_{k_g}(\boldsymbol{\theta}_g^{\prime});\sqrt{\varepsilon_{k_g}})} -\sum_{n_g}^{N_g} \gamma_{k_g}(\widehat{\boldsymbol{x}}_{n_g} - \boldsymbol{\theta}_{k_g})}_{M_{k_g}(\boldsymbol{\theta}_g^{\prime},\varepsilon_{k_g})} \geq \underbrace{-\sum_{n_g=1}^{N_g}\gamma_{k_g}(\widehat{\boldsymbol{x}}_{n_g} - \boldsymbol{\theta}_{k_g}^{\prime})}_{\text{constant}},
    \end{aligned}\right .
\end{align}
where \eqref{eq:opt1a} follows by ignoring terms in $\widehat{Q}_g(\boldsymbol{\theta}_g|\boldsymbol{\theta}_g^{\prime})$ that do not depend on $\boldsymbol{\theta}_{k_g}$, and from the fact that $\varepsilon_{k_g} \geq 0 \ \forall k_g \in [K_g]$. Now, let us consider the optimization problem $M_{k_g}(\boldsymbol{\theta}_g^{\prime},\varepsilon_{k_g})$ in more detail as follows.
\begin{align}
\allowdisplaybreaks
M_{k_g}(\boldsymbol{\theta}_g^{\prime},\varepsilon_{k_g}) & = \left \{\begin{aligned}
    &\min_{\boldsymbol{\theta}_{k_g}} &&  -\sum_{n_g=1}^{N_g} \gamma_{k_g}(\widehat{\boldsymbol{x}}_{n_g},\boldsymbol{\theta}_g^{\prime})||\boldsymbol{x}_{n_g} - \boldsymbol{\theta}_{k_g}||_2^2 \\
    & \subjectto && \boldsymbol{\theta}_{k_g} \in \mathbb{B}_2 (\widehat{M}_{k_g}(\boldsymbol{\theta}_g^{\prime}),\sqrt{\varepsilon_{k_g}})
\end{aligned} \right.\\
& = \left \{ \begin{aligned}
    &\min_{\boldsymbol{\theta}_{k_g}} &&-\sum_{n_g=1}^{N_g}\gamma_{k_g}(\widehat{\boldsymbol{x}}_{n_g},\boldsymbol{\theta}_g^{\prime}) (\boldsymbol{x}_{n_g}^{\top}\boldsymbol{x}_{n_g} - 2\boldsymbol{x}_{n_g}^{\top}\boldsymbol{\theta}_{k_g} + \boldsymbol{\theta}_{k_g}^{\top}\boldsymbol{\theta}_{k_g}) \\
    & \subjectto && ||\widehat{M}_{k_g}(\boldsymbol{\theta}_g^{\prime}) - \boldsymbol{\theta}_{k_g} ||_2^2 \leq \varepsilon_{k_g}
\end{aligned}\right.\\
& = \left \{ \begin{aligned}
    & \min_{\boldsymbol{\theta}_{k_g}} &&\boldsymbol{\theta}_{k_g}^{\top} \underbrace{\left ( -\sum_{n_g=1}^{N_g} \gamma_{k_g}(\widehat{\boldsymbol{x}}_{n_g},\boldsymbol{\theta}_g^{\prime}) I \right )}_{\boldsymbol{A}_0} \boldsymbol{\theta}_{k_g}^{\top} + 2 \underbrace{\left ( \sum_{n_g=1}^{N_g} \gamma_{k_g}(\widehat{\boldsymbol{x}}_{n_g},\boldsymbol{\theta}_g^{\prime}) \boldsymbol{x}_{n_g} \right )^{\top}}_{\boldsymbol{b}_0} \boldsymbol{\theta}_{k_g} +\\
    &&& \qquad \qquad \qquad \qquad \qquad \qquad \qquad \quad  \underbrace{\boldsymbol{x}_{n_g}^{\top} \left ( -\sum_{n_g=1}^{N_g} \gamma_{k_g}(\widehat{\boldsymbol{x}}_{n_g},\boldsymbol{\theta}_g^{\prime}) I \right ) \boldsymbol{x}_{n_g}^{\top}}_{c_0}\\
    &\subjectto && \boldsymbol{\theta}_{k_g}^{\top} \underbrace{I}_{\boldsymbol{A}_1} \boldsymbol{\theta}_{k_g} + 2 \underbrace{\left ( -\widehat{M}_{k_g}(\boldsymbol{\theta}_g^{\prime}) \right )^{\top}}_{\boldsymbol{b}_1} \boldsymbol{\theta}_{k_g} + \underbrace{\widehat{M}_{k_g}(\boldsymbol{\theta}_g^{\prime})^{\top}\widehat{M}_{k_g}(\boldsymbol{\theta}_g^{\prime}) - \varepsilon_{k_g}}_{c_1} \leq 0,
\end{aligned}\right.
\end{align}
where $\boldsymbol{A}_0,\boldsymbol{A}_1 \in \mathbb{R}^{d\times d}$, $\boldsymbol{b}_0,\boldsymbol{b}_1 \in \mathbb{R}^d$, and $c_0,c_1 \in \mathbb{R}$. Now, note that the above problem is nonconvex, since $\boldsymbol{A}_0$ is not PSD. However, note that for all $\varepsilon_{k_g} > 0$, the above problem is strictly feasible. Therefore, the problem obeys Slater's condition and admits a strong Lagrange dual. As shown by \cite{boyd2004}, this Lagrange dual can be formulated as the SDP shown next.
\begin{equation}
M_{k_g}^{\prime}(\boldsymbol{\theta}_g^{\prime},\varepsilon_{k_g}) \coloneqq \left \{
\begin{aligned}
    &\max_{\nu_{k_g},\alpha_{k_g}} &&\nu_{k_g}\\
    &\subjectto && \alpha_{k_g} \geq 0\\
    &&& \left [\begin{array}{cc}
        \boldsymbol{A}_0 + \alpha_{k_g} \boldsymbol{A}_1 & \boldsymbol{b}_0 + \alpha_{k_g} \boldsymbol{b}_1 \\
        (\boldsymbol{b}_0 + \alpha_{k_g} \boldsymbol{b}_1)^{\top} & c_0 + \alpha_{k_g} c_1 - \nu_{k_g}
    \end{array}\right ] \succeq 0,
\end{aligned} \right.
\end{equation}
where $\alpha_{k_g},\nu_{k_g} \in \mathbb{R}$ are dual variables. We can reformulate this problem as follows.
\begin{align}
\allowdisplaybreaks
& \nonumber M_{k_g}^{\prime}(\boldsymbol{\theta}_g^{\prime},\varepsilon_{k_g}) \\
&= \left \{
\begin{aligned}
    &\max_{\nu_{k_g},\alpha_{k_g}} &&\nu_{k_g}\\
    &\subjectto && \alpha_{k_g} \geq 0\\
    &&& \left [\begin{array}{cc}
        \boldsymbol{A}_0 + \alpha_{k_g} \boldsymbol{A}_1 & \boldsymbol{b}_0 + \alpha_{k_g} \boldsymbol{b}_1 \\
        (\boldsymbol{b}_0 + \alpha_{k_g} \boldsymbol{b}_1)^{\top} & c_0 + \alpha_{k_g} c_1 - \nu_{k_g}
    \end{array}\right ] \succeq 0,
\end{aligned} \right.\\
&\label{eq:opt2a} = \left \{ \begin{aligned}
    &\max_{\nu_{k_g},\alpha_{k_g}} && \nu_{k_g}\\
    & \subjectto && c_0 + \alpha_{k_g}c_1 - \nu_{k_g} -\\
    &&& \qquad \qquad (\boldsymbol{b}_0 + \alpha_{k_g} \boldsymbol{b}_1)^{\top} \left ( \left (\alpha_{k_g} - \sum_{n_g=1}^{N_g} \gamma_{k_g}(\widehat{\boldsymbol{x}}_{n_g},\boldsymbol{\theta}_g^{\prime}) \right )I\right )^{-1}(\boldsymbol{b}_0 + \alpha_{k_g} \boldsymbol{b}_1) \geq 0\\
    &&& \alpha_{k_g} - \sum_{n_g=1}^{N_g} \gamma_{k_g}(\widehat{\boldsymbol{x}}_{n_g},\boldsymbol{\theta}_g^{\prime}) \geq 0
\end{aligned} \right.\\
&\label{eq:opt2b} = \left \{ \begin{aligned}
    &\max_{\nu_{k_g},\alpha_{k_g}} && \nu_{k_g}\\
    & \subjectto && c_0 + \alpha_{k_g}c_1 - \nu_{k_g} - \frac{1}{\alpha_{k_g} - \gamma_{k_g}(\widehat{\boldsymbol{x}}_{n_g},\boldsymbol{\theta}_g^{\prime})} ||\boldsymbol{b}_0 + \alpha_{k_g} \boldsymbol{b}_1||_2^2 \geq 0\\
    &&& \alpha_{k_g} - \sum_{n_g=1}^{N_g} \gamma_{k_g}(\widehat{\boldsymbol{x}}_{n_g},\boldsymbol{\theta}_g^{\prime}) \geq 0
\end{aligned} \right.\\
&\label{eq:opt2c} = \left \{ \begin{aligned}
    &\max_{\nu_{k_g},\alpha_{k_g}} && \nu_{k_g}\\
    & \subjectto && (c_0 + \alpha_{k_g} c_1 - \nu_{k_g})(\alpha_{k_g} - \sum_{n_g=1}^{N_g} \gamma_{k_g}(\widehat{\boldsymbol{x}}_{n_g},\boldsymbol{\theta}_g^{\prime}) \geq ||\boldsymbol{b}_0 + \alpha_{k_g} \boldsymbol{b}_1||_2^2\\
    &&& \alpha_{k_g} \geq \sum_{n_g=1}^{N_g} \gamma_{k_g}(\widehat{\boldsymbol{x}}_{n_g},\boldsymbol{\theta}_g^{\prime}),
\end{aligned} \right.
\end{align}
where \eqref{eq:opt2a} is obtained via the Schur complement. Now, observe that the first constraint in \eqref{eq:opt2c} is monotonic in $\nu_{k_g}$. Moreover, note that plugging the problem in \eqref{eq:opt2c} into the constraint in problem \eqref{eq:opt1a} can be interpreted as requiring that the maximum value of $\nu_{k_g}$ satisfying the constraint must be greater than or equal to $-\sum_{n_g=1}^{N_g} \gamma_{k_g}(\widehat{\boldsymbol{x}}_{n_g},\boldsymbol{\theta}_g^{\prime})||\boldsymbol{x}_{n_g} - \boldsymbol{\theta}_{k_g}^{\prime}||_2^2$. Thus, it suffices to require that $\nu_{k_g} = -\sum_{n_g=1}^{N_g} \gamma_{k_g}(\widehat{\boldsymbol{x}}_{n_g},\boldsymbol{\theta}_g^{\prime})||\boldsymbol{x}_{n_g} - \boldsymbol{\theta}_{k_g}^{\prime}||_2^2$ satisfies the constraint in problem \eqref{eq:opt2c}. Therefore, we can use this result to rewrite the problem in \eqref{eq:opt1a} as follows.
\begin{align}
\allowdisplaybreaks
&\nonumber J_{k_g}(\boldsymbol{\theta}_g^{\prime}) \\
&= \left \{
\begin{aligned}
    &\max_{\varepsilon_{k_g},\alpha_{k_g}} && \varepsilon_{k_g}\\
    & \subjectto && \left ( c_0 + \alpha_{k_g} c_1 + \sum_{n_g=1}^{N_g} \gamma_{k_g}(\widehat{\boldsymbol{x}}_{n_g},\boldsymbol{\theta}_g^{\prime})||\boldsymbol{x}_{n_g} - \boldsymbol{\theta}_{k_g}^{\prime}||_2^2 \right ) \left ( \alpha_{k_g} - \sum_{n_g=1}^{N_g} \gamma_{k_g}(\widehat{\boldsymbol{x}}_{n_g},\boldsymbol{\theta}_g^{\prime}) \right ) \geq\\
    &&& \qquad \qquad \qquad \qquad \qquad \qquad \qquad \qquad \qquad \qquad \qquad \qquad \qquad \quad ||\boldsymbol{b}_0 + \alpha_{k_g} \boldsymbol{b}_1||_2^2\\
    &&& \alpha_{k_g} \geq \sum_{n_g=1}^{N_g} \gamma_{k_g}(\widehat{\boldsymbol{x}}_{n_g},\boldsymbol{\theta}_g^{\prime})
\end{aligned} \right.\\
& = \left \{ \begin{aligned}
&\max_{\varepsilon_{k_g},\alpha_{k_g}} && \varepsilon_{k_g}\\
& \subjectto && \varepsilon_{k_g} \alpha_{k_g}^2 + \Bigg [ \sum_{n_g=1}^{N_g}\gamma_{k_g}(\widehat{\boldsymbol{x}}_{n_g},\boldsymbol{\theta}_g^{\prime}) \\
&&& \left (- \widehat{M}_{k_g}(\boldsymbol{\theta}_g^{\prime})^{\top}\boldsymbol{x}_{n_g} + \boldsymbol{x}_{n_g}^{\top}\boldsymbol{x}_{n_g} + \widehat{M}_{k_g}(\boldsymbol{\theta}_g^{\prime})^{\top} \widehat{M}_{k_g}(\boldsymbol{\theta}_g^{\prime}) - \varepsilon_{k_g} - ||\boldsymbol{x}_{n_g} - \boldsymbol{\theta}_{k_g}^{\prime}||_2^2 \right ) \Bigg ] \alpha_{k_g} +\\
&&& \qquad \qquad \qquad \qquad \qquad \left ( \sum_{n_g=1}^{N_g} \gamma_{k_g}(\widehat{\boldsymbol{x}}_{n_g},\boldsymbol{\theta}_g^{\prime})\right )\sum_{n_g=1}^{N_g}\gamma_{k_g}(\widehat{\boldsymbol{x}}_{n_g},\boldsymbol{\theta}_g^{\prime})||\boldsymbol{x}_{n_g} -
\boldsymbol{\theta}_{k_g}^{\prime}||_2^2 \leq 0\\
&&&  \alpha_{k_g} \geq \sum_{n_g=1}^{N_g} \gamma_{k_g}(\widehat{\boldsymbol{x}}_{n_g},\boldsymbol{\theta}_g^{\prime})
\end{aligned} \right.\\
& = \left \{ \begin{aligned}
&\max_{\varepsilon_{k_g},\alpha_{k_g}} && \varepsilon_{k_g}\\
& \subjectto && \varepsilon_{k_g} \alpha_{k_g}^2 + \left [ \sum_{n_g=1}^{N_g}\gamma_{k_g}(\widehat{\boldsymbol{x}}_{n_g},\boldsymbol{\theta}_g^{\prime}) \left (||\boldsymbol{x}_{n_g} - \widehat{M}_{k_g}(\boldsymbol{\theta}_g^{\prime})||_2^2 - ||\boldsymbol{x}_{n_g} - \boldsymbol{\theta}_{k_g}^{\prime}||_2^2 - \varepsilon_{k_g} \right ) \right ] \alpha_{k_g} +\\
&&& \qquad \qquad \qquad \qquad \qquad \left ( \sum_{n_g=1}^{N_g} \gamma_{k_g}(\widehat{\boldsymbol{x}}_{n_g},\boldsymbol{\theta}_g^{\prime})\right )\sum_{n_g=1}^{N_g}\gamma_{k_g}(\widehat{\boldsymbol{x}}_{n_g},\boldsymbol{\theta}_g^{\prime})||\boldsymbol{x}_{n_g} -
\boldsymbol{\theta}_{k_g}^{\prime}||_2^2 \leq 0\\
&&&  \alpha_{k_g} \geq \sum_{n_g=1}^{N_g} \gamma_{k_g}(\widehat{\boldsymbol{x}}_{n_g},\boldsymbol{\theta}_g^{\prime})
\end{aligned} \right.
\end{align}
\end{proof}
\section{Supplementary Experimental Details and Results}\label{app:more_exp}

In this section we provide all the details of all the experiments presented in this paper, as well as supplementary results for both the Benchmarking and Sensitivity Studies. Please note that the all the code and instructions associated with all the experiments is provided separately in the supplementary materials.
\subsection{Software and Hardware Details}
All the experiments presented in this work were executed on Intel Xeon Gold 6226 CPUs @ 2.7 GHz (using 10 cores) with 120 Gb of DDR4-2993 MHz DRAM. Table \ref{tab:software} provides more detail on all the software used in the paper.

\begin{table}[ht]
\centering
\caption{Details on All the Software Used in the Numerical Experiments.}
\label{tab:software}
\begin{tabular}{lll}
\toprule
Software           & Version & License             \\
\midrule
Gurobi       & 10.0.1  & Academic license    \\
MATLAB       & R2021B  & Academic license    \\
Python     & 3.10.9  & Open source license \\
Scikit-Learn & 1.2.1   & Open source license \\
Numpy      & 1.23.5  & Open source license \\
Scipy       & 1.10.0  & Open source license \\
UCIMLRepo     & 0.0.3   & Open source license\\
TensorFlow   & 2.12.0 & Open source license \\
\bottomrule
\end{tabular}
\end{table}

\subsection{Datasets Utilized}
\subsubsection{Benchmarking Study}
In the benchmarking study we utilize various popular real-world datasets for evaluation. We provide more detail on the datasets in Table \ref{tab:data_real}.

\begin{table}[ht]
\centering
\caption{Details on Datasets Utilized for UCI Experiments.}
\label{tab:data_real}
\tiny
\begin{tabular}{lccccc}
\toprule
Dataset                              & Abbreviation & License & $N$ & $K$ & $d$   \\
\midrule
\makecell[l]{MNIST \citep{mnist} \\(embeddings: \citep{mnist_emb})  }          & MNIST     & \makecell[c]{CC BY 4.0\\(embeddings: MIT License)} & 70,000 & 10 & 10 \\
Fashion MNIST \citep{fmnist} & FMNIST & CC BY 4.0 & 70,000 & 10 & 64\\
Extended MNIST (Balanced) \citep{emnist} & EMNIST & CC BY 4.0 & 131,600 & 47 & 16 \\
\makecell[l]{CIFAR-10 \citep{cifar10} \\(embeddings: \citep{cifar10-emb}} & CIFAR-10 & \makecell[c]{CC BY 4.0\\(embeddings: MIT License)} & 60,000 & 10 & 64\\
Abalone \citep{abalone_1} & Abalone & CC BY 4.0 & 4177 & 7 & 8\\
Anuran Calls (MFCCs) \citep{anuran} & FrogA          & CC BY 4.0 & 7195 & 10 & 21 \\
Anuran Calls (MFCCs) \citep{anuran}         & FrogB          & CC BY 4.0 & 7195 & 8 & 21 \\
Waveform Database Generator (Version 1) \citep{wf}                 & Waveform  & CC BY 4.0 & 5000 & 3 & 21 \\
\bottomrule
\end{tabular}
\end{table}

\textbf{Preprocessing - Image Datasets.} Rather than directly clustering the images in the MNIST, FMNIST, EMNIST, and CIFAR-10 datasets, we utilize embeddings extracted from them to reduce the computational expense of the experiments. These embeddings are extracted via variational autoencoders (VAEs). More specifically, for the MNIST dataset we utilize the vanilla VAE embeddings available at \citep{mnist_emb}, which have dimension $10$. For the FMNIST and EMNIST datasets, we implement VAEs with latent dimensions $64$ and $16$, respectively. Subsequently, we utilize the encoded mean vectors of the samples as the data utilized for clustering. Finally, for the CIFAR-10 dataset we utilize the "Barlow" embeddings available at \citep{cifar10-emb}. However, we further encode these embeddings via a VAE with latent dimension $64$ as we do for FMNIST and EMNIST. All code utilized for feature extraction is provided in the supplementary materials available with the submission.

\textbf{Preprocessing - Abalone.}
Since the Abalone dataset has very small clusters (some of which contain only 1 sample), we combined various clusters together. This makes sense physically, as the target label in the dataset is an integer the age of the abalone. Therefore, combining various labels into bins enforces a more categorical structure on the age. More specifically, we combined labels 0 through 5 into one cluster, kept labels 6 through 10 as separate clusters, combined labels 11 and 12 into one cluster, and combined labels 13 through 28 into one cluster.

\subsubsection{Sensitivity Study}
The data utilized in this experiment is generated using the \texttt{make\_blobs} module of the \texttt{scikit-learn} Python package. This module generates isotropic Gaussian clusters, making it ideal for our problem setting. The data is generated so that the centroids of the clusters have a preset minimum distance of $R_{\text{min}}$ between them. Moreover, data generation is designed so that at least two of the generated clusters have centroids that are exactly $R_{\text{min}}$ apart. It is worth noting that for the sensitivity study, the dataset generated during each repetition is tested $3$ times for each model. During each of those times, the model starts with a random initialization via \texttt{k-means++}. The results we report are the maximum performance obtained over the $3$ initializations.

\subsection{Hyperparameter Details and Performance Evaluation} \label{app:hyp}
In this section, we provide a detailed practical discussion on the tuning our algorithm's final aggregation radii. As mentioned in Section \ref{sec:exp}, we use $\varepsilon_{k_g}^{\text{final}} = \frac{\upsilon_g \widehat{R}_{\text{min}_g}}{\pi_{k_g}\sqrt{N_g}}$ as a practical heuristic, where $\upsilon_g$ is the hyperparameter we directly tune. For simplicity, we set $\upsilon_g$ equivalently for all clients $g \in [G]$. This heuristic allows the aggregation radii to scale appropriately with the scale of the feature space and the number of samples available at each client. Additionally, it allows the final aggregation radii to adapt to each cluster at each client while requiring the tuning of only one hyperparameter.

We utilize cross-validation to tune $\upsilon_g$, using SS \cite{ROUSSEEUW198753} as a performance metric. We also provide a practical guide to evaluate the estimated $\widehat{K}$ without knowledge of the true $K$. To that end, we ensure that $\widehat{K}$ does not significantly exceed $\sqrt{\sum_{g=1}^G K_g}$. This guide is inspired by a rough estimate that for any client $g \in [G]$, $K_g \sim \mathcal{O}(K)$, which suggests that $\sum_{g=1}^G K_g \sim \mathcal{O}(KG)$. Since in most practical cases we have $G > K$, then we can see that $\sqrt{\sum_{g=1}^{G}K_g} \sim \mathcal{O}(\sqrt{KG}) > \mathcal{O}(K)$.

All hyperparameters for all benchmark models are set as prescribed in their respective works. Additionally, note that the estimated number of clusters provided to the DP-GMM and AFCL models is $\sum_{g=1}^G K_g$, as this constitutes an upper bound for $K$. Note that this initial estimate is significantly closer to the true number of clusters for all datasets than the initial value of $\frac{\sum_{g=1}^{G}N_g}{32}$ suggested for AFCL in \citep{Zhang2025}. The reported estimated number of clusters for both algorithm is the total number of clusters to which test samples were assigned. Moreover, we run all iterative algorithms for $T=20$ iterations, and we run our algorithm for $T=10$ iterations. Furthermore, we utilize $S=1$ local steps for our model in all setting, as well as $I=10$ iterations for Algorithm \ref{alg:client_sub}.

It should be noted that since our model is personalized, the reported performance for our model is a weighted average of the clients' individual performance metrics.

\begin{table}[ht]
\centering
\caption{Hyperparameter $\upsilon_g$ tuning values for all datasets used in the Benchmarking Study.}
\label{tab:hyp}
\begin{tabular}{lc}
\toprule Dataset  & Hyperparameter Value(s) \\ \midrule
MNIST     & $2e0$                   \\
FMNIST    & $3e2$                   \\
EMNIST    & $2e0$                   \\
CIFAR-10  & $2e1$                   \\
Abalone   & $\{5e3,7e3,9e3\}$       \\
Frog A    & $\{1e4,1e5,1e6\}$       \\
Frog B    & $\{1e4,1e5,1e6\}$       \\
Waveform  & $\{1e0,5e0,1e1\}$       \\
Synthetic & $\{5e-1,1e0,5e0,1e1,5e1 \}$ \\
\bottomrule 
\end{tabular}
\end{table}

\subsection{Supplementary Benchmarking Study Results} \label{app:ss_res}
We provide additional results for our Benchmarking Study using the SS evaluation metric in Table \ref{tab:bench_sil}. We immediately observe that our proposed method continues to attain the highest performance out of the federated methods with unknown $K$ for most datasets. Furthermore, we observe that on datasets such as Abalone and Waveform, our method outperforms even the top performing method with known $K$. Since the results using these evaluation metrics are similar to those using the ARI, we reach a strong conclusion that our proposed model has a significant practical impact. Namely, it can achieve similar performance to, or even outperform some clustering methods that assume prior knowledge of $K$, and it often outperforms method without prior knowledge of $K$. It achieves this while being federated (i.e. not requiring any data movement), and without prior knowledge of $K$.
\begin{table}[ht]
\centering
\caption{SS attained by all methods on tested datasets.}
\scriptsize
\label{tab:bench_sil}
\begin{tabular}{lccccccccc}
\toprule
 Model     & Known $K$?     & MNIST                & FMNIST                & EMNIST        & CIFAR-10        & Abalone                    & Frog A                     & Frog B    & Waveform                   \\ \midrule
GMM (central)         &Yes       & \makecell{$.029$ \\ $ \pm .020$}          & \makecell{$.093 $ \\ $ \pm .008$}          & \makecell{$.046$ \\ $ \pm .006$} &    \makecell{\underline{$.191$} \\ $ \pm .013$}      & \makecell{$.397 $ \\ $ \pm .014$}                & \makecell{$.232 $ \\ $ \pm .042$}                                           & \makecell{$.198 $ \\ $ \pm .089$}        & \makecell{$\mathbf{.255} $ \\ $ \pm .026$}          \\ \addlinespace[3pt]
k-FED    &Yes        & \makecell{\underline{$.076 $} \\ $ \pm .010$}          & \makecell{\underline{$.106 $} \\ $ \pm .022$}          & \makecell{\underline{$.075 $} \\ $ \pm .010$}  &    \makecell{$.177$ \\ $ \pm .012$}     & \makecell{$\mathbf{.406} $ \\ $ \pm .029$}               & \makecell{\underline{$.285 $} \\ $ \pm .066$}                                            & \makecell{$.278 $ \\ $ \pm .076$} & \makecell{$.245 $ \\ $ \pm .028$} \\ \addlinespace[3pt]
FFCM-avg1   &Yes         & \makecell{$-.045 $ \\ $ \pm .030$}          & \makecell{$.009 $ \\ $ \pm .0023$}          & \makecell{$-.113 $ \\ $ \pm .026$}  & \makecell{$-.014$ \\ $ \pm .045$}         & \makecell{$.400 $ \\ $ \pm .011$}                 & \makecell{$.229 $ \\ $ \pm .051$}                                            & \makecell{$.259 $ \\ $ \pm .061$}                & \makecell{$.247 $ \\ $ \pm .008$}          \\ \addlinespace[3pt]
FFCM-avg2    &Yes       & \makecell{$.036 $ \\ $ \pm .011$}          & \makecell{$.052 $ \\ $ \pm .018$}          & \makecell{$-.077 $ \\ $ \pm .023$}   & \makecell{$.087$ \\ $ \pm .021$}       & \makecell{\underline{$.404 $} \\ $ \pm .016$}                 & \makecell{$.272 $ \\ $ \pm .071$}                                       & \makecell{$\mathbf{.324} $ \\ $ \pm .056$}    & \makecell{$.231 $ \\ $ \pm .042$}          \\ \addlinespace[3pt]
FedKmeans   &Yes      & \makecell{$\mathbf{.105} $ \\ $ \pm .005$} & \makecell{$\mathbf{.127} $ \\ $ \pm .006$} & \makecell{$\mathbf{.091} $ \\ $ \pm .003$} & \makecell{$\mathbf{.203}$ \\ $ \pm .006$} &  \makecell{\underline{$.404 $} \\ $ \pm .010$}                 & \makecell{$\mathbf{.302} $ \\ $ \pm .054$}                               & \makecell{\underline{$.289 $} \\ $ \pm .074$}               & \makecell{\underline{$.252 $} \\ $ \pm .003$}          \\ \midrule
DP-GMM (central) &No    & \makecell{$-.082$ \\ $ \pm .014$}          & \makecell{$-.058$ \\ $ \pm .014$}          & \makecell{$-.063 $ \\ $ \pm .007$}   &\makecell{$\mathbf{.117}$ \\ $ \pm .005$}       & \makecell{$\mathbf{.324} $ \\ $ \pm .023$}          & \makecell{\underline{$.171 $} \\ $ \pm .040$}                      & \makecell{$.144 $ \\ $ \pm .025$}               & \makecell{\underline{$.115 $} \\ $ \pm .013$}          \\ \addlinespace[3pt]
AFCL    &No      & \makecell{\underline{$.015 $} \\ $ \pm .002$ }         & \makecell{\underline{$.017 $} \\ $ \pm .003$   }       & \makecell{\underline{$.028 $} \\ $ \pm .002$ }   & \makecell{\underline{$.106$} \\ $ \pm .005$}      & \makecell{$.192 $ \\ $ \pm .028$ }         & \makecell{$.144 $ \\ $ \pm .048$ }                             & \makecell{\underline{$.156 $} \\ $ \pm .045$ }                 & \makecell{$.018 $ \\ $ \pm .010$ }         \\ \addlinespace[3pt]
FedGEM (\textbf{ours})  &No    & \makecell{$\mathbf{.095} $ \\ $ \pm .012$} & \makecell{$\mathbf{.069} $ \\ $ \pm .018$ }& \makecell{$\mathbf{.063} $ \\ $ \pm .009$} & \makecell{$.094$ \\ $ \pm .015$} & \makecell{\underline{$.307 $} \\ $ \pm .086$} & \makecell{$\mathbf{.324} $ \\ $ \pm .082$}                        & \makecell{$\mathbf{.284} $ \\ $ \pm .092$}     & \makecell{$\mathbf{.271} $ \\ $ \pm .011$ }\\
\bottomrule
\end{tabular}
\end{table}

\subsection{Supplementary Sensitivity Study Results}
We present the results of the Sensitivity Study utilizing the SS to compare model performance in Figure \ref{fig:res2}. Firstly, we again observe that performance of both models improves as $R_{\text{min}}$ increases. Surprisingly, however, we see that our proposed model outperforms GMM in all setting, which does not match the ARI result. This could be explained by SS's sensitivity to the number of clusters. Indeed, a problem with a smaller number of clusters is likely to exhibit higher SS than an identical one with a larger number of clusters. Since each client only has a subset of the clusters locally, this can cause the local SS to be over-inflated. However, as seen in the ARI result, we can conclude that our proposed model offers very close performance to that of a centralized one with known $K$, which is a powerful result.

\begin{figure}[ht]
\centering
\begin{subfigure}{\textwidth}
    \includegraphics[width=\textwidth]{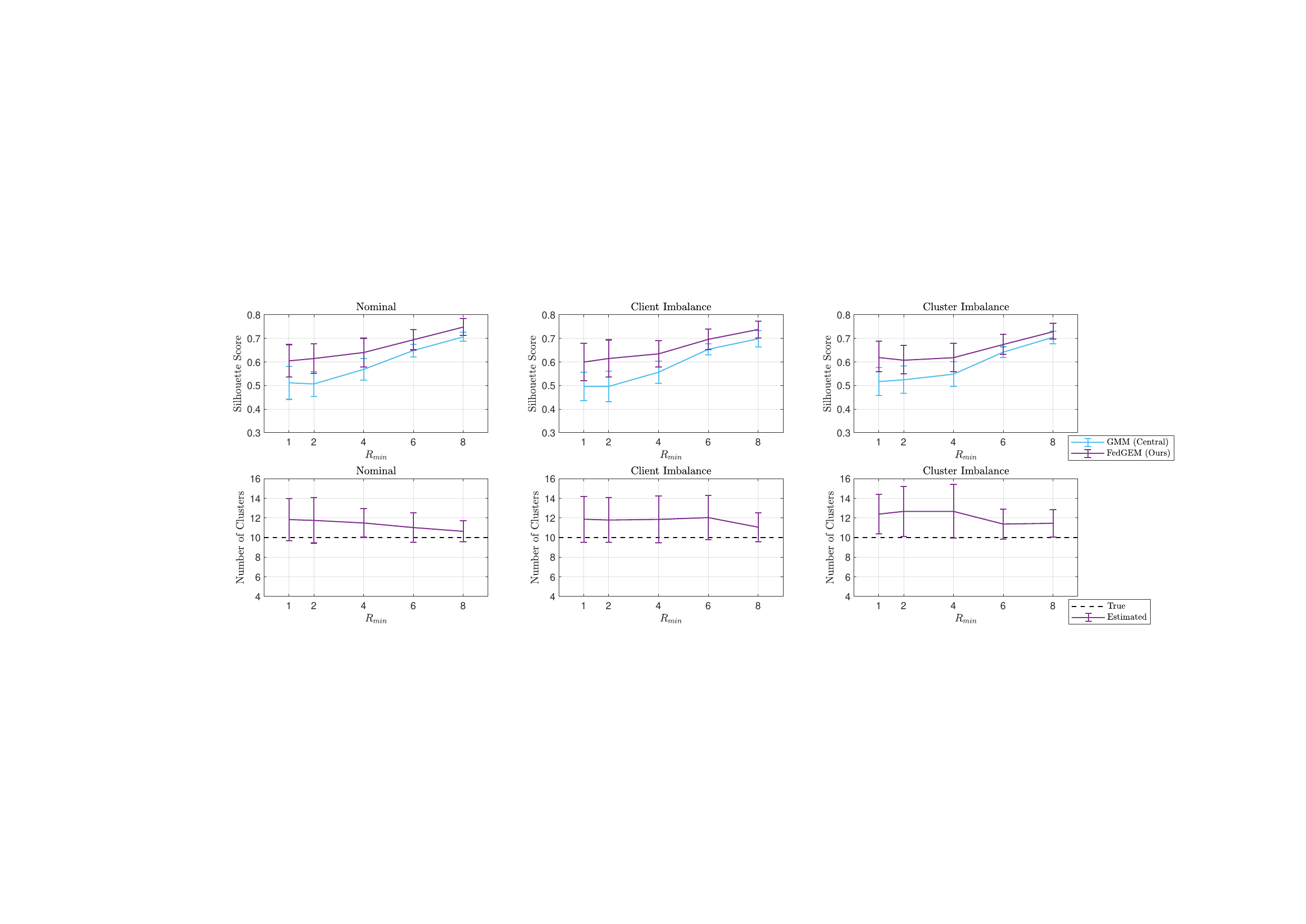}
    \caption{Silhouette Score for our proposed \texttt{FedGEM} vs. a centralized GMM trained via EM.}
    \label{fig:sil1}
\end{subfigure}
\begin{subfigure}{\textwidth}
    \includegraphics[width=\textwidth]{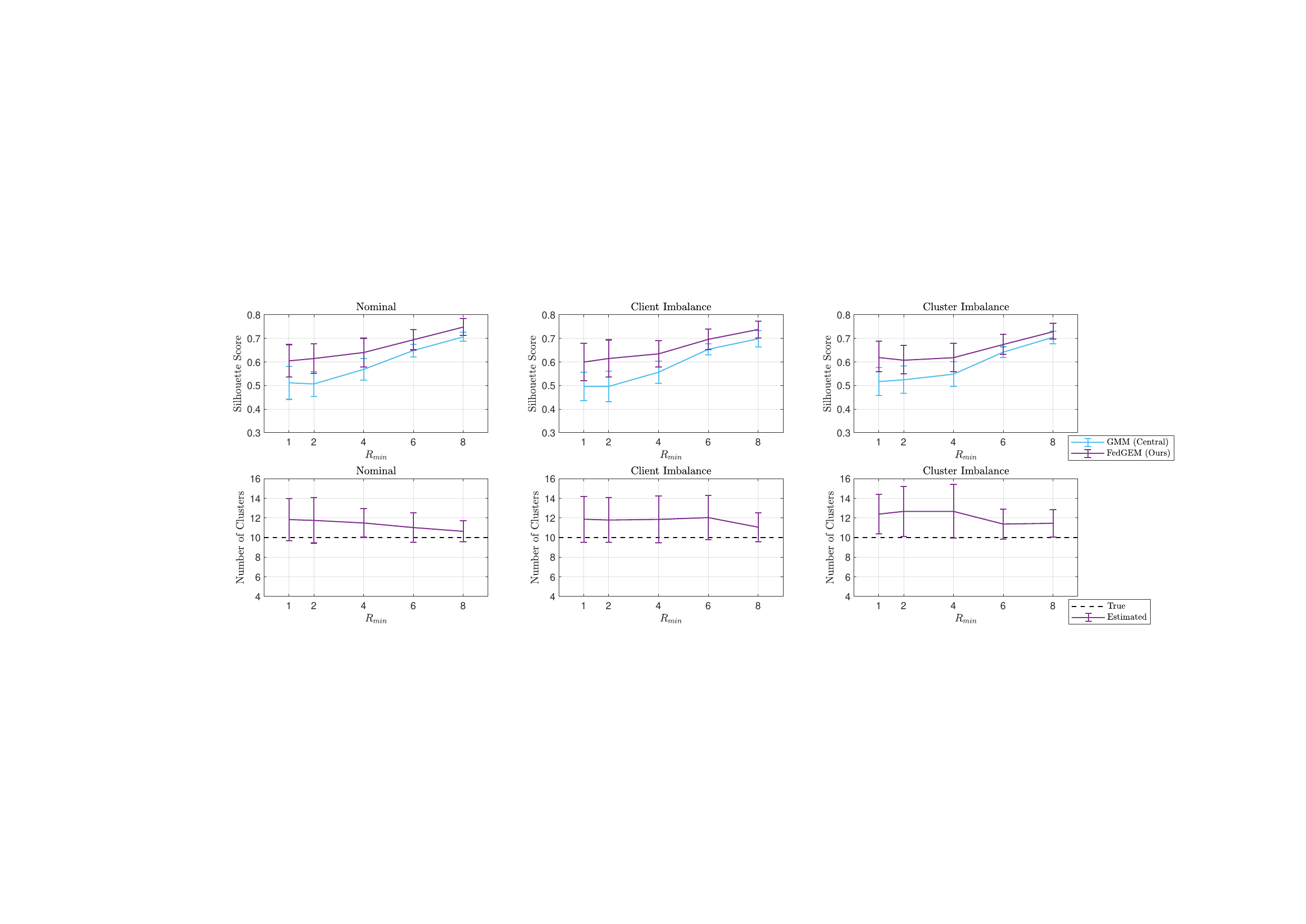}
    \caption{Number of clusters estimated by our proposed \texttt{FedGEM} vs. the true number of clusters.}
    \label{fig:sil2}
\end{subfigure}
\caption{Supplementary results of the sensitivity study.}
\label{fig:res2}
\end{figure}

\subsection{Sensitivity to Hyperparameter} \label{app:sens}
This study evaluates the sensitivity of our proposed algorithm to its final aggregation radius hyperparameter.

\textbf{Our Method.} As with the other numerical experiments, our method is the isotropic GMM model trained via our proposed \texttt{FedGEM} algorithm.

\textbf{Evaluation Metric.} We examine the sensitivity of both the ARI and the estimated number of clusters to the hyperparameter.

\textbf{Hyperparameters.} Recall our final aggregation radius heuristic $\varepsilon_{k_g}^{\text{final}} = \frac{\upsilon_g \widehat{R}_{\text{min}_g}}{\pi_{k_g}\sqrt{N_g}}$. We evaluate our model's performance for $\upsilon_g \in \{1e-1,1e0,1e1,1e2\}$.

\textbf{Dataset.} The data used for this experiment is isotropic Gaussian clusters generated via the \texttt{make\_blobs} module in Python. We set $R_{\text{min}} = 4$, and we study three key settings: i) nominal: data is balanced across clients and clusters, ii) client imbalance: the data is imbalanced across clients, and iii) cluster imbalance: the portion of each cluster in the local data at each client is randomly samples followed by normalization. For all settings we use $G=15$, $K=10$, $N_{\text{train}}=7500$, and $N_{\text{test}}=2000$.

\textbf{Results.} The results of this study are displayed in Figure \ref{fig:sens}. We observe that the estimated number of clusters can be more sensitive to the choice of $\upsilon_g$ than ARI. This is intuitive, as a value of $\upsilon_g$ that is too small will result in insufficient cluster aggregation, which causes the estimated number of clusters to be overinflated. However, since clustering performance evaluation is performed locally at each client, ARI can still be somewhat stable in this setting. On the other hand, if $\upsilon_g$ is too large, this will cause estimates associated with different clusters to be aggregated together. This leads to an underestimation of the number of clusters and also significantly affects clustering performance. A key observation we make is that for an appropriately adjusted $\upsilon_g$, ARI seems to reach a peak value in the nominal case while the estimated number of clusters almost coincides with the true value. This highlights the importance of hyperparameter tuning via the protocol we present in Appendix \ref{app:hyp}. Finally, we note that the cluster and client imbalance settings do not significantly affect our model's performance, suggesting robustness to such issues.

\begin{figure}[ht]
\centering
\begin{subfigure}{\textwidth}
    \includegraphics[width=\textwidth]{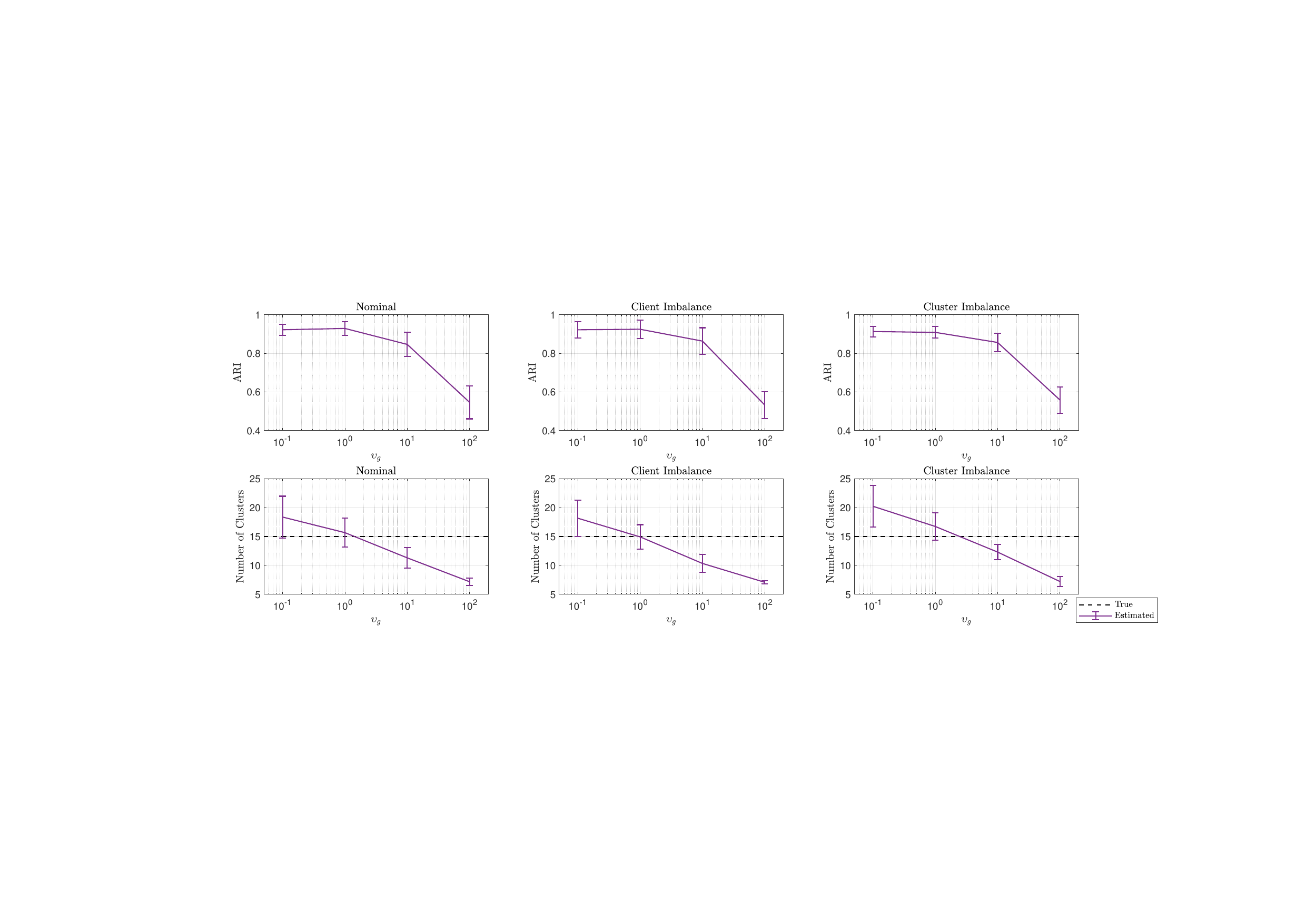}
    \caption{Sensitivity of our algorithm's clustering performance measured via ARI to the hyperparameter value.}
    \label{fig:sens1}
\end{subfigure}
\begin{subfigure}{\textwidth}
    \includegraphics[width=\textwidth]{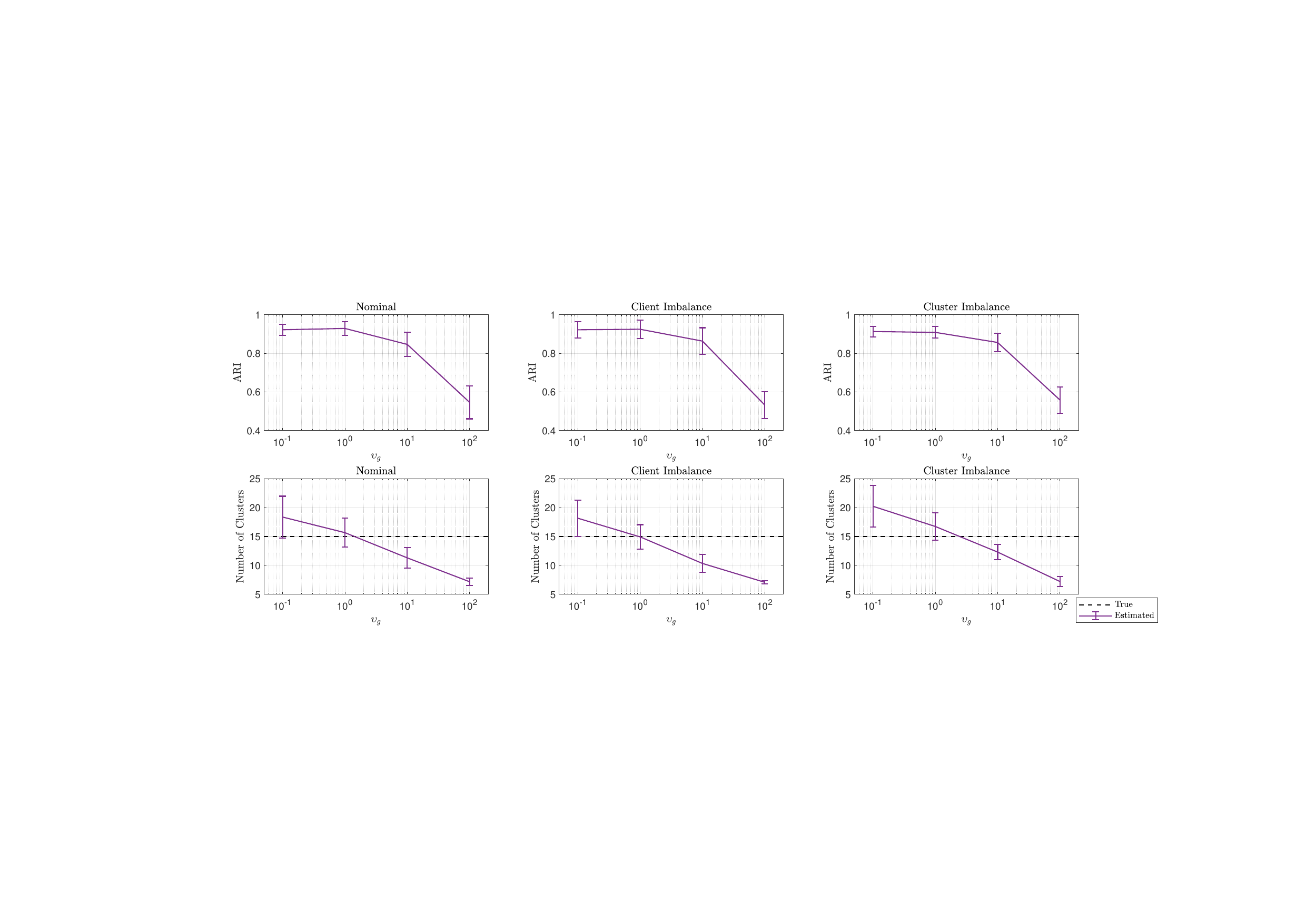}
    \caption{Sensitivity of our algorithm's number of clusters estimation to the hyperparameter value.}
    \label{fig:sens2}
\end{subfigure}
\caption{Results on the sensitivity of our algorithm to its hyperparameter.}
\label{fig:sens}
\end{figure}
\section{Scalability Study} \label{app:scal}
In this section, we present two distinct studies we performed to evaluate the scalability of our proposed algorithm.

\textbf{Our Method.} As stated in the full paper, our method is the isotropic GMM model trained via our proposed \texttt{FedGEM} algorithm.

\textbf{Evaluation Metric.} We utilize runtime in seconds to evaluate the scalability of all methods.

\textbf{Baselines.} We compare our algorithm to AFCL, which is the only other federated clustering method that does not require knowledge of $K$. Additionally, we also use FFCM-avg2 and FedKmeans as benchmarks as they achieved strong performance in the Benchmarking Study.

\subsection{Scalability on Image Datasets} \label{app:runtime}
We examine the runtime of some of our algorithm as well as some of the federated benchmarks on the larger-scale image datasets. This allows us to evaluate the scalability of our proposed algorithm in realistic settings.

\textbf{Hyperparameters.} All hyperparameters and experimental settings (e.g. number of clients $G$, hyperparameter settings, etc...) are exactly the same as described in detail in Appendix \ref{app:hyp}. However, in the interest of fairness, we run all federated algorithms for $T=10$ iterations, and we confirm that their performance after training is on par with the values reported previously.

\textbf{Datasets.} In this experiment we focus solely on the MNIST, FMNIST, EMNIST, and CIFAR-10 datasets. This is because they are on a much larger scale than the other datasets tested, therefore they provide meaningful insights into algorithm scalability.

\textbf{Results.} The results of this experiment are reported in Table \ref{tab:runtime}. We observe that our algorithm achieves a much shorter runtime than AFCL (the only other federated clustering approach without prior knowledge of $K$). This emphasizes the significant practical impact of our algorithm, as it also achieved superior clustering performance and total number of cluster estimation as discussed in Section \ref{sec:exp} and Appendix \ref{app:more_exp}.  As we discuss in the Scalability Study on Synthetic Data in Appendix \ref{app:scal_synth}, this advantage over AFCL is most likely due to improved scalability with respect to the number of clients. This suggests that our algorithm is better suited for distributed clustering problems over large networks involving large volumes of data.

\begin{table}[ht]
\centering
\caption{Runtime in seconds of selected federated algorithms on the image datasets evaluated.}
\label{tab:runtime}
\small
\begin{tabular}{lccccc}
\toprule
Model                                   & Known $K$? & MNIST          & FMNIST         & EMNIST         & CIFAR-10       \\ \midrule
FFCM-avg2                               & Yes         & $220 \pm 18$   & $440 \pm 25$   & $6075 \pm 842$ & $188 \pm 8$    \\
FedKmeans                               & Yes         & $30 \pm 2$     & $52 \pm 3$     & $314 \pm 42$   & $26 \pm 1$     \\ \midrule
AFCL                                    & No        & $2047 \pm 246$ & $2013 \pm 204$ & $3176 \pm 722$ & $1798 \pm 165$ \\
FedGEM (\textbf{ours}) & No        & $552 \pm 52$   & $645 \pm 63$   & $1628 \pm 335$ & $345 \pm 35$  \\ \bottomrule
\end{tabular}
\end{table}

\subsection{Scalability on Synthetic Data} \label{app:scal_synth}
This study aims to evaluate the scalability of our proposed algorithm as the size of the training dataset and the federated network grow. It also compares the scalability of our algorithm to that of multiple federated benchmarks. We note that the implementation of our algorithm used in this study relies on pairwise server computations. Therefore, scalability can likely be further improved by leveraged a KD tree as explained in Appendix \ref{app:eff}.

\textbf{Hyperparameters.} Since the focus of this study is more so on execution time than model performance, we did not perform hyperparameter tuning for this experiment. We fix our final aggregation radius hyperparameter $\upsilon_g = 1e0$ for all $g \in G$. We set the hyperparameters of benchmark models as prescribed in their corresponding papers. Additionally, we run all algorithms with $T=10$ iterations. Finally, this experiment was repeated for $10$ repetitions.

\textbf{Dataset.} In this experiment we utilize data generated via the \texttt{make\_blobs} module in Python, which generates isotropic Gaussian clusters. We utilize $R_{\text{min}} = 2$ across all experiments. Additionally, we study $4$ distinct experimental settings, listed next.
\begin{enumerate}
    \item \textbf{Increasing Features:} $G = 5$, $N_g = 500$, $K = 10$, $d \in \{ 5,25,45,65\}$.
    \item \textbf{Increasing Training Samples per Client:} $G = 5$, $K = 10$, $d = 15$, $N_g \in \{500,2500,4500,6500 \}$.
    \item \textbf{Increasing Clusters:} $G = 5$, $N_g = 500$, $d = 15$, $K \in \{ 5,25,45,65 \}$.
    \item \textbf{Increasing Clients:} $N = 1000$, $K = 10$, $d = 15$, $G \in \{5,25,45,65 \}$.
\end{enumerate}
Across all experiments, we uniformly sample $K_g$ for all $g \in [G]$ such that $2 \leq K_g < K$.

\textbf{Results.} The results of this experiment are shown in Figure \ref{fig:scal}. Firstly, we observe that the runtime of all algorithms remains constant as the number of features increases. This suggests that all compared algorithms, including ours, scale well with the number of features. Secondly, we observe that while our algorithm exhibits a greater runtime than benchmark methods in all experimental settings, it scales at a similar rate to them as the number of samples, clusters, and clients increase. This suggests strong scalability across all settings. Moreover, we observe in the increasing number of clients setting that AFCL's runtime increases at a faster rate than our proposed algorithm. This suggests that our algorithm scales better in this setting, and can therefore be more suitable for settings with a large number of clients. This observation aligns with our results presented in the runtime analysis in Appendix \ref{app:runtime}, where we observe that our algorithm achieves a shorter runtime than AFCL in experiments involving a high $G$ and large datasets. Combined with the fact that our algorithm exhibited better clustering performance and true number of clusters estimation across all our experiments, this highlights the significant practical impact of our proposed \texttt{FedGEM} algorithm.

\begin{figure}[ht]
    \centering
    \includegraphics[width=0.75\textwidth]{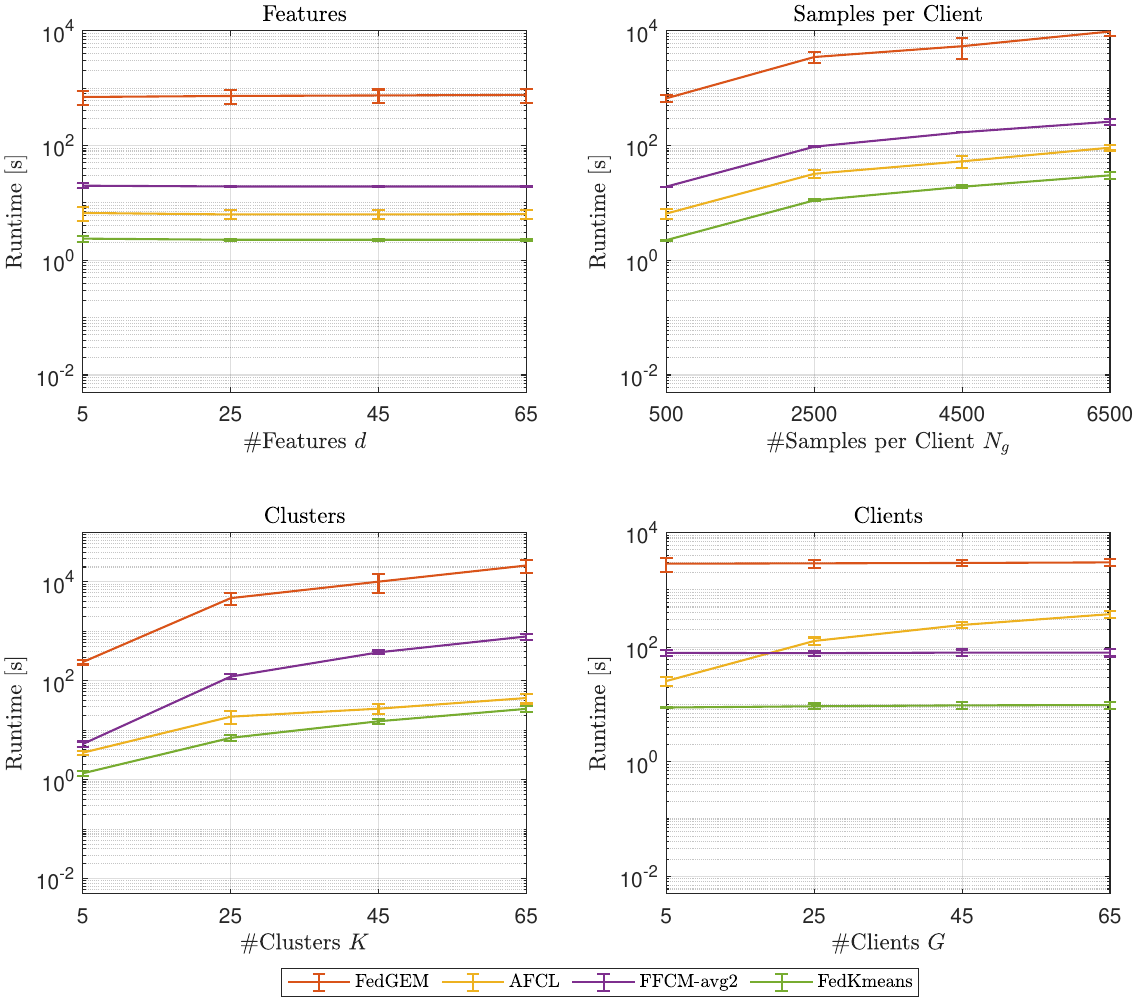}
    \caption{Results of the scalability experiment for all experimental settings and benchmark models.}
    \label{fig:scal}
\end{figure}
\section{Further Discussion} \label{app:disc}
\subsection{Justification and Interpretation of Modeling Assumptions}
\begin{itemize}
    \item \textbf{Assumption \ref{assump:gt}: Ground Truth Parameters.} In this assumption, we enforce a modeling structure that is necessary for the convergence analysis of our algorithm. Namely, that any clusters that are shared by multiple clients, have the exact same ground truth parameters at all clients. Note that this assumption does not violate the non-IID nature of the data in FL problems. This is because cluster weights can be different across clients, and clients may have different clusters. Therefore, the data across clients is still non-IID. This assumption is common in works studying federated EM algorithms, such as \citep{marfoq2021}.

    \item \textbf{Assumption \ref{assump:strong_conc}: Strong Concavity.} This assumption requires each of the terms in the expected complete-data log-likelihood functions to be strongly concave, thereby allowing for the function to have a unique maximizer. Such assumption is very common (at least locally near the optimum) in works examining the convergence of EM algorithms such as \citep{balakrishnan2014}. This assumption is also readily verifiable for models such as GMMs.

    \item \textbf{Assumption \ref{assump:fos}: First-Order Stability.} This assumption requires the expected complete-data log-likelihood to obey a Lipschitz-like smoothness constraint, introduced by \cite{balakrishnan2014} and defined in \ref{def:fos}. Such a technical assumption is vital for the theoretical analysis, and the derivation of convergence guarantees, but is not required for the algorithm to be used in practice.

    \item \textbf{Assumption \ref{assump:smooth}: Continuity.} This is another technical assumption, which requires the complete-data log-likelihood function of the model used to be smooth in both its input and conditioning arguments. This is standard in many EM-related efforts, and is only required for the theoretical convergence analysis but not for algorithm use in practice.

    \item \textbf{Assumption \ref{assump:likelihood}: Likelihood Boundedness.} This assumption requires the log-likelihood of the model used to be bounded from above, although that bound need not be known. Similar to previous assumptions, this one is also purely required for the theoretical convergence analysis, but not for the use of the algorithm in practice. Note that this assumption is common in works investigating Federated EM algorithms such as \citep{marfoq2021}, and is easily verifiable for models such as GMM under mild conditions on the covariance matrix.

    \item \textbf{Assumption \ref{assump:sample_bd}: Finite-Sample and Population M-Step Proximity.} This assumption requires there to be an upper bound on the maximum difference between the population M-step and the finite-sample M-step for each cluster with a certain probability. Whereas all the previous assumptions allow us to theoretically study the convergence of our algorithm on the population level (i.e. with infinite data), this one is necessary for the finite-sample convergence analysis. Specifically, it allows us to prove that the algorithm updates made via a finite data sample indeed converge to a neighborhood of the converged population-based iterates. This assumption was utilized in works exploring the convergence of EM algorithms such as \citep{balakrishnan2014,yan2017}, and is also purely technical and does not impact algorithm usability in practice.

    \item \textbf{Assumption \ref{assump:bounded_support}: Bounded Support.} This assumption requires the support of the feature vector to be bounded, and is needed \textbf{only} in the setting where DP is used to privatize the cluster maximizers shared by the clients. Such an assumption is not restrictive. This is because data is often collected via acquisition devices with known ranges. Therefore, feature support is either already bounded, or can be via normalization.

\end{itemize}

\subsection{Interpretation of Theoretical Results}

\begin{itemize}
    \item \textbf{Proposition \ref{prop:sol_ex}: Local Uncertainty Set Radius Problem.} This proposition asserts that the optimization problem solved by each client to obtain the radius of the uncertainty set centered at the maximizer of each local cluster must have a unique solution. The unique solution would be $0$ at convergence. This holds under the modeling assumption thanks to the strong concavity of the complete-data log-likelihood function. 

    \item \textbf{Theorem \ref{thm:fin_samp_single_point}: Single-Point EM Convergence.} This theorem asserts that the finite sample EM iterates computed by each client for each local cluster must converge to a single point withing a certain proximity of the ground truth parameters. This is a subtle, but key result, as it ensures stability and lack of oscillations upon convergence.

    \item \textbf{Theorem \ref{thm:conv_pop_gem}: Local Convergence of Population GEM.} This theorem asserts that, in the population setting (i.e. infinite training samples), iterates that are computed via our proposed \texttt{FedGEM} algorithm converge exactly to the ground truth parameters. This is a very strong convergence result, which is used to establish the finite-sample convergence of the algorithm.
    
    \item \textbf{Theorem \ref{thm:conv_samp_gem}: Local Convergence of Finite-Sample GEM.} This theorem asserts that, with a certain probability, iterates that are computed via our proposed \texttt{FedGEM} algorithm converge within a certain radius around the ground truth parameters at any client. This is achieved with only a finite number of training samples. This result forms the basis for our convergence argument. This is because iterates of a shared cluster across multiple clients converge to a close proximity of each other. Therefore, given a final aggregation radius that meets certain conditions, they can be successfully aggregated into a single cluster.

    \item \textbf{Theorem \ref{thm:num_clus}: Number of Clusters Inference:} This theorem asserts that with a certain probability, our algorithm correctly estimates the total unique number of clusters across clients. This is reliant on the finite-sample convergence established in Theorem \ref{thm:conv_samp_gem}.

    \item \textbf{Theorem \ref{thm:rad_prob_gmm}: Radius Problem Reformulation.} This theorem provides a tractable, bi-convex, 2-dimensional reformulation for the semi-infinite uncertainty set radius problem in the case of isotropic GMMs. This renders our algorithm tractable for this specific model, and allows us to implement it in our numerical experiments.

    \item \textbf{Proposition \ref{prop:rad_conv}: Local Radius Algorithm Convergence.} This proposition shows that Algorithm \ref{alg:client_sub} proposed to solve the uncertainty set radius problem reformulation from Theorem \ref{thm:rad_prob_gmm} enjoys a very low time complexity. This allows our \texttt{FedGEM} algorithm to scale well with problem size.

    \item \textbf{Theorem \ref{thm:fos}: GMM First-Order Stability.} This theorem proves that the multi-component isotropic GMM explored in this work indeed satisfies the FOS condition defined in \ref{def:fos}. This is a very impactful result, as, to the best of our knowledge, this is the first time such result is formally proven for a GMM with more than two components. This condition is necessary for the convergence of our algorithm. Therefore, formally proving it allows us to argue that our \texttt{FedGEM} algorithm is guaranteed to converge for multi-component, isotropic GMMs.

    \item \textbf{Theorem \ref{thm:contr_reg}: GMM M-Step Contraction Region.} This theorem derives the radius of the contraction region centered at the ground truth parameters for each cluster at each client. This bound allows us to argue that our proposed algorithm converges for the isotropic GMM under consideration. However, we note that this is a purely technical result needed only for the theoretical convergence analysis, but not for practical implementation.

    \item \textbf{Theorem \ref{thm:fin_samp_dist}: GMM Finite-Sample and Population M-Step Distance.} This theorem derives the upper bound on the distance between the population and final sample M-steps that is required by Assumption \ref{assump:sample_bd}. The existence of this bound guarantees the convergence of our proposed \texttt{FedGEM} algorithm for the isotropic GMM under study. Note, however, that this is also a purely technical result required only for the theoretical convergence analysis. However, it is not needed for use of our algorithm in practice.

    \item \textbf{Theorem \ref{thm:dp}: Client-to-Server Communication DP.} This theorem is provided as part of a preliminary DP discussion. It provides the minimum standard deviation of the Gaussian noise to be applied to the maximizers shared by the clients to guarantee DP.

\end{itemize}

\subsection{Limitations and Future Work} \label{app:lim}
This paper lays the foundation for a wide array of future work that can provide significant contributions and advance the fields of clustering, federated learning, and unsupervised representation learning via mixture models. Next, we discuss some of the limitations of our work, which should be addressed in future work.
\begin{itemize}
    \item \textbf{Fixed Cluster Weights.} While our algorithm allows each client to set personalized local weights for their local clusters, these weights are fixed. In order to enhance modeling flexibility and personalization capabilities, future efforts should extend our algorithm to include trainable local cluster weights.

    \item \textbf{Stylized Clustering Model.} While the \texttt{FedGEM} algorithm we propose is generic, we mainly focus on its use with an isotropic GMM in this work. Future work may improve real-world performance by utilizing our algorithm with more complex mixture models, potentially studying anisotropic GMMs with locally learnable cluster weights. This would be theoretically challenging as it would involve verifying the needed assumptions, as well as deriving a tractable formulation for the local radius problem. Moreover, this may require an alternative convergence analysis approach, such as one that focuses on convergence to stationary points rather than (neighborhoods of) global maximizers. Furthermore, such efforts would need to study how the use of such complex models impacts the aggregation process at the central server.

    \item \textbf{Differential Privacy.} While we do provide a preliminary discussion on privatizing the cluster centroids shared by each client via DP in Appendix \ref{app:diff_priv}, privatizing the uncertainty set radius and studying convergence in more detail remains an open problem. Since the radius is computed via an optimization problem, a key theoretical contribution would be analyzing its sensitivity and deriving the appropriate DP budget.

    \item \textbf{Modeling Assumptions.} In order to derive theoretical convergence guarantees for our \texttt{FedGEM} algorithm, we make various modeling assumptions. While these assumptions are very commonly made, and do not severely impact performance in practice if they are violated as shown in Section \ref{sec:exp}, a valuable contribution would still be deriving convergence guarantees with relaxed assumptions.

    \item \textbf{Pairwise Server Computation.} In our proposed work, the server relies on pairwise comparisons between the clusters at all clients in order to infer overlaps. While we have shown in our Scalability Study in Appendix \ref{app:scal} that our algorithm scales well with problem size, scalability can be further improved. Future work may develop a more efficient algorithm to be used by the central server to infer cluster overlaps.

    \item \textbf{Full Client Participation.} While the proofs presented in this work would still hold under partial client participation, they do not account for the potential drift that can be experienced by stragglers. While convergence to a neighborhood of the global maximizer is proven for centroid estimates at all clients, client drift can cause the estimates to end up in relatively distant areas of that neighborhood. This can increase the sensitivity of the estimated number of clusters to the final aggregation hyperparameter. Future work may study this setting both from the theoretical and practical perspectives, providing stricter convergence guarantees for stragglers and potential strategies to ensure an accurate estimation of $K$.

    \item \textbf{Final Aggregation Radius Tuning.} While we present a reliable heuristic and a guideline that can be used to set the final aggregation radius in our algorithm, it still requires hyperparameter tuning via cross-validation to exploit our algorithm's full performance potential. Such tuning can incur very large computational costs, and can also significantly affect DP guarantees. Future work may seek to explore more robust, data-driven and theoretically verified heuristics that can achieve near-optimal performance while minimizing the computational and privacy costs associated with cross-validation-based tuning.
    
\end{itemize}

\end{document}